\xpatchcmd{\@thm}{\thm@headpunct{.}}{\thm@headpunct{}}{}{}
\newtheorem{definition}{Definition}
\newtheorem{theorem}{Theorem}
\newtheorem{lemma}{Lemma}
\newtheorem{remark}{Remark}
\newtheorem{corollary}{Corollary}
\newtheorem{proposition}{Proposition}
\xpatchcmd{\@thm}{\thm@headpunct{.}}{\thm@headpunct{}}{}{}
\newcommand{\cF}{\mathcal{F}}
\newcommand{\cG}{\mathcal{G}}
\newcommand{\cH}{\mathcal{H}}
\newcommand{\cL}{\mathcal{L}}
\newcommand{\cM}{\mathcal{M}}
\newcommand{\mM}{\mathfrak{M}}
\newcommand{\cP}{\mathcal{P}}
\newcommand{\cX}{\mathcal{X}}
\newcommand{\cY}{\mathcal{Y}}
\newcommand{\cZ}{\mathcal{Z}}
\newcommand{\EE}{\mathbb{E}}
\newcommand{\NN}{\mathbb{N}}
\newcommand{\PP}{\mathbb{P}}
\newcommand{\RR}{\mathbb{R}}
\newcommand{\sD}{\mathsf{D}}
\newcommand*{\dd}{\, \mathrm{d}}
\newcommand{\argmax}{\mathop{\mathrm{argmax}}}
\newcommand\numberthis{\addtocounter{equation}{1}\tag{\theequation}}
\newcommand{\GMMD}{\mathsf{GMMD}}
\newcommand{\GW}{\mathsf{GW}}
\newcommand{\UGW}{\mathsf{UGW}}
\newcommand{\MMD}{\mathsf{MMD}}
\definecolor{darkblue}{rgb}{0.0,0.0,0.66}  
\definecolor{darkred}{rgb}{100,0.0,0.0} 
\newcommand{\zg}[1]{{\color{blue!30!purple}[ZG: #1]}}
\newcommand{\youssef}[1]{{\color{red}[YM: #1]}}
\renewcommand\AB@affilsepx{, \protect\Affilfont}
\title{Cycle Consistent Probability Divergences \\Across Different Spaces}
\author[a]{\bfseries Zhengxin Zhang}
\author[b]{\bfseries Youssef Mroueh}
\author[a]{\bfseries Ziv Goldfeld}
\author[c]{\bfseries Bharath K. Sriperumbudur}
\affil[a]{Cornell University}
\affil[b]{IBM Research AI}
\affil[c]{Pennsylvania State University}
\begin{document}

\maketitle
\begin{abstract}

Discrepancy measures between probability distributions are at the core of statistical inference and machine learning. In many applications, distributions of interest are supported on different spaces, and yet a meaningful correspondence between data points is desired. Motivated to explicitly encode consistent bidirectional maps into the discrepancy measure, this work proposes a novel unbalanced Monge optimal transport formulation for matching, up to isometries, distributions on different spaces. Our formulation arises as a principled relaxation of the Gromov-Haussdroff distance between metric spaces, and employs two cycle-consistent maps that push forward each distribution onto the other. We study structural properties of the proposed discrepancy and, in particular, show that it captures the popular cycle-consistent generative adversarial network (GAN) framework as a special case, thereby providing the theory to explain it. Motivated by computational efficiency, we then kernelize the discrepancy and restrict the mappings to parametric function classes. The resulting kernelized version is coined the \emph{generalized maximum mean discrepancy} (GMMD). Convergence rates for empirical estimation of GMMD are studied and experiments to support our theory are provided.
\end{abstract}

\section{Introduction}\label{SEC:intro}

Discrepancy measures between probability distributions are ubiquitous in machine learning. In practice, distributions of interests are often supported on different spaces and the goal is not only to quantify discrepancy, but also to obtain a meaningful and consistent correspondence between data points. 
Such problems arise, e.g., in natural language
processing for unsupervised matching across different languages or ontologies \citep{alvarez2018gromov,grave2019unsupervised,pmlr-v108-alvarez-melis20a}, shape matching \citep{Bronstein1168,memoli2011gromov,xu2019scalable}, 
heterogenous domain adaptation \citep{yan2018semi}, 
generative modeling \citep{bunne2019learning}, and many~more. 


Among the most popular discrepancies between distributions on incompatible spaces is the Gromov-Wasserstein distance (GW) \citep{memoli2011gromov} (see \citet{sejourne2020unbalanced} for an unbalanced variant). Computationally, the GW distance amounts to a quadratic assignment problem 
that is NP hard \citep{Commander2005}. 
To alleviate this impasse, \citet{pmlr-v48-peyre16} proposed an entropic regularization to the GW problem, and derived an algorithm with cubic $O(n^3)$ complexity in the number of samples. More recently, \citet{vayer2019sliced} proposed slicing the GW distance, which further reduces the computational complexity to $O(n\log n)$. Despite these algorithmic advances, a common issue with GW-based discrepancies is their lack of generalization to new data points. These approaches only quantify the distance without generating a map that captures the correspondence. This requires recomputing the distance whenever one wants to account for new data points, thereby incurring an additional cost. 
This shortcoming 
motivate us to explore computationally friendly discrepancies between distributions on different spaces that explicitly encode consistent, bidirectional measure preserving mappings that capture the correspondence. 
This is similar in spirit to the recent interest in learning Monge optimal transport (OT) maps \citep{perrot2016mapping,makkuva2020optimal,paty2020regularity,flamary2019concentration}.\\



Specifically, we propose a novel unbalanced divergence between probability measures supported on different spaces that explicitly employs two cycle-consistent maps that (approximately) push each distribution onto another. Cycle-consistency here is in the context of cycle generative adversarial networks \citep{zhu2017unpaired,kim2017learning} (GANs), which requires that the two pushforward maps are roughly inverses one of another. Note that \citet{memoli2021distance} recently proposed a quasi-metric called the Gromov-Monge distance that employs a single push forward map. A key advantage of our approach is its cycle consistency that provides a mathematical framework for the popular cycle GAN and enables a principled study thereof. 

The main contributions of this paper are:
\begin{itemize}[leftmargin=*,align=parleft]
    \item We introduce and study in Section \ref{sec:UGM} the unbalanced bidirectional Gromov-Monge (UBGM) divergence, drawing connections between UBGM and the popular cycle GAN framework. 
    \item Motivated by computational efficiency, we kernelize UBGM in Section \ref{SEC:GMMD} and restrict mappings to parametric function classes, such as neural networks (NNs). We call the resulting divergence the generalized maximum mean discrepancy (GMMD). We then derive convergence rates for two-sample empirical estimation of GMMD.
    \item We present numerical results in Section \ref{sec:exp} that support our theory and demonstrate the computational efficiency, and generalization capability of the proposed framework for matching across different spaces.
\end{itemize}

\section{Background and Preliminaries}\label{SUBSEC:notation}


\subsection{Notations}

Let $(\cX,d_\cX)$ be a compact metric space. The diameter of a set $A\subseteq\cX$ is $\mathsf{diam}(A):=\sup_{x,x'\in A} d_\cX(x,x')$. We use $B(x,r)$ to denote the open ball of radius $r>0$ centered at $x\in\cX$. For $\epsilon>0$, a set $\cX_\epsilon$ is called an $\epsilon$-cover of $\cX$ if for any $x\in\cX$, $\inf_{x'\in\cX_\epsilon} d_\cX(x,x')<\epsilon$. The $\epsilon$-covering number of $\cX$ is $N(\cX,d_\cX,\epsilon):= \inf\{|\cX_\epsilon|:\,\cX_\epsilon\mbox{ is an }\epsilon \mbox{-cover of }\cX\}$. When $\cX$ is a subset of $\RR^d$, we always use the metric induce by the Euclidean norm, denoted as $\|\cdot\|$. For a metric space $(\cX,d_\cX)$, the diameter of $\cX$ is defined as $\sup_{x,x'\in\cX}d_\cX(x,x')$.


For $1\leq p \leq \infty$, the $L^p$ space over $\cX$ is denoted by $L^p(\cX)$ with $\| \cdot \|_{p}$ designating the norm. The Lipschitz constant of a function $f:\cX\to\cY$ is $\|f\|_{\mathsf{Lip}}=\sup_{x,x'\in\cX}\frac{d_\cY(f(x),f(x'))}{d_\cX(x,x')}$, with $\mathsf{Lip}_L(\cX,\cY)=\{f:\|f\|_{\mathsf{Lip}}\leq L\}$ denoting the Lipschitz ball of radius $L>0$. A mapping $f:\cX\to\cY$ between metric spaces is called an isometry if $d_\cX(x,x') = d_\cY\big(f(x),f(x')\big)$, for all $x,x'\in\cX$, i.e., $f$ preserves the metric structure. For a class of mappings from $\cX$ to $\cY$, define the sup-metric on $\cF$ as $d_\cF\big(f_1,f_2):=\sup_{x\in \cX} d_\cY\big(f_1(x),f_2(x)\big)$.


The probability space on which all random variables are defined is denoted by  $(\Omega,\mathcal{A},\PP)$ (assumed to be sufficiently rich), with $\EE$ designating the corresponding expectation. The class of Borel probability measures over $\cX$ is denoted by $\cP(\cX)$. For $n \in \mathbb{N}$, $P^{\otimes n}$ denotes the $n$-fold product measure of $P$. Given a measurable $f:\cX\to\cY$ and $P\in\cP(\cX)$, the pushforward of $P$ through $f$ is $f_\sharp  P(B) := P(f^{-1}(B))$, for any Borel set $B$. Clearly $f_\sharp P\in\cP(\cY)$. A sequence of probability measures $P_n\in\cP(\cX)$ converges to a probability measure $P$, denoted as $P_n\stackrel{w}{\to}P$, if for any bounded continuous function $\phi(x)$ on $\cX$, $\lim_n \int\phi(x)\dd P_n(x) = \int \phi(x)\dd P(x)$ We also write $a\lesssim_x b$ when $a\leq C_x b$, where $C_x$ is a constant depending only on $x$, and write $a\lesssim b$ if the omitted constant is universal. Also denote $n\wedge m=\min\{n,m\}$.

\subsection{Gromov-Haussdroff Distance}

To motivate our proposed discrepancy measure, we start by recalling the Gromov-Hausdorff distance between metric spaces, which is defined as
\begin{equation}
d_{\sf{GH}}(\cX,\cY):=\inf_{\cZ,\phi_{\cX},\phi_{\cY}}d^{\cZ}_{\sf{H}}\big(\phi_{\cX}(\cX), \phi_{\cY}(\cY)\big),
\end{equation}
where the infimum is on an ambient metric space $(\cZ,d_{\cZ})$ and isometric embeddings $\phi_{\cX}:\cX \to \cZ$ and  $\phi_{\cY}:\cY \to \cZ$, with $d^{\cZ}_{\sf{H}}$ as the Hausdroff distance on~$\cZ$.

Evidently, the formulation above is not computable. Nevertheless, $d_{\sf{GH}}$ has several equivalent~forms \citep{memoli2011gromov} that, with appropriate relaxations, can be computed efficiently. Two such forms are as follows.


\paragraph{Correspondence set reformulation.} Following \citet{memoli2021distance}, a correspondence set between $\cX$ and $\cY$ is a set $R\subset \cX \times \cY$ whose projections to $\cX$ and $\cY$ define surjections on $R$. The set of all such correspondences is denoted by $\mathcal{R}(\cX,\cY)$. The Gromov-Hausdroff distance can be reformulated as
\begin{equation}
    d_{\sf{GH}}(\cX,\cY)= \frac{1}{2}\inf_{R \in \mathcal{R}(\cX,\cY)} \sup_{(x,y),(x',y')\in R} \mspace{-8mu}\Gamma_{\cX,\cY}(x,y,x',y'),\label{eq:dGHCorrespondence}
\end{equation}
where $\Gamma_{\cX,\cY}(x,y,x',y'):= \big|d_\cX(x,x')- d_\cY(y,y')\big|$ is the pointwise distortion between $(x,x')\in\cX^2$ and $(y,y')\in\cY^2$. Note that \eqref{eq:dGHCorrespondence} can be written in the following compact form as an $L^{\infty}$ norm:
\begin{equation}
    d_{\sf{GH}}(\cX,\cY)= \frac{1}{2}\inf_{R \in \mathcal{R}(\cX,\cY)}\left\|\Gamma_{\cX,\cY} \right\|_{L^{\infty}(R\times R)}.\label{eq:dGHC}
\end{equation}

\paragraph{Two mappings reformulation.} Another important reformulation of the Gromov-Hausdroff distance in terms of mappings between spaces is:
  \begin{equation}
     d_{\sf{GH}}\mspace{-1mu}(\cX,\cY)\mspace{-3mu}=\mspace{-3mu}\frac{1}{2}\mspace{-2mu} \inf_{\substack{f: \cX\to \cY \\g :\cY \to \cX} }\mspace{-5mu}\max\mspace{-3mu} \big\{\mspace{-2mu}\Delta^{\infty}_{\cX}\mspace{-1mu}(f),\Delta^{\infty}_{\cY}\mspace{-1mu}(g), \Delta^{\infty}_{\cX,\cY}(f,g)\mspace{-2mu}\big\},
     \label{eq:dGH2}
 \end{equation}
where the distortions $\mspace{-1.5mu}\Delta^{\infty}_{\cX}$,$\mspace{2.5mu}\Delta^{\infty}_{\cY}$, and $\Delta^{\infty}_{\cX,\cY}$ are given~by\footnote{The superscript $\infty$ indicates that $\Delta^{\infty}_{\cX}$, $\Delta^{\infty}_{\cY}$, and~$\Delta^{\infty}_{\cX,\cY}$ can be written as $L^{\infty}$ norms of the appropriate pointwise distortions (e.g., $\Gamma_{\cX,\cY}\big(x,f(x),x',f(x')\big)$ for $\Delta^{\infty}_{\cX}$).}
 \begin{align*}
 \Delta^{\infty}_{\cX}(f)&:= \sup_{x,x' \in \cX} \big|d_{\cX}(x,x')-d_{\cY}\big(f(x),f(x')\big)\big| \\
 \Delta^{\infty}_{\cY}(g)&:= \sup_{y,y' \in \cY} \big|d_{\cX}\big(g(y),g(y')\big)-d_{\cY}(y,y')\big|\\
 \Delta^{\infty}_{\cX,\cY}(f,g)&:= \sup_{x \in \cX,y \in \cY} \big|d_{\cX}\big(x,g(y)\big)-d_{\cY}\big(f(x),y\big)\big|.
\end{align*}
Formulation \eqref{eq:dGH2} thus measures distance by searching for low distortion maps between the two metric spaces, such that the so-called \emph{cycle consistency} property holds, i.e., the maps are approximate inverses of one another. More specifically following \citet{memoli2005theoretical}, if $d_{\sf{GH}}(\cX,\cY)\leq \epsilon$ then there exists $f:\cX\to\cY$ and $g:\cY\to\cX$ such that: (1) the induced metric distortions are small, i.e., $\Delta^{\infty}_{\cX}(f)\leq 2\epsilon$ and $\Delta^{\infty}_{\cY}(g)\leq 2\epsilon$; and (2) these functions are \emph{almost} inverses one of another, in the sense that $d_{\cX}\big(x,g(f(x))\big)\leq 2\epsilon$ and $d_{\cY}\big(f(g(y)),y\big)\leq 2\epsilon$ (which follows from $\big|d_{\cX}(x,g(y))-d_{\cY}(f(x),y)\big|\leq 2\epsilon$ by taking $y=f(x)$ and $x=g(y)$, respectively). Thus, $\Delta^{\infty}_{\cX,\cY}$ ensures cycle consistency of the maps.


 


\section{Probability Divergences Across Metric Measure Spaces}\label{sec:UGM}

We are now ready to present the proposed discrepancy between probability measures on different spaces. In conjunction with the preceding discussion, such a discrepancy can equivalently be viewed as a distance between metric measure (mm) spaces \citep{memoli2011gromov}. 


\begin{definition}[Metric measure spaces]
A metric measure space is a triple $(\cX,d_{\cX},P)$ where $(\cX,d_{\cX})$ is a compact metric space and $P\in\cP(\cX)$ has full support. 
\end{definition} 

\subsection{The Unbalanced Bidirectional Gromov-Monge Divergence}

Formulation \eqref{eq:dGH2} of $d_{\sf{GH}}$ is computationally appealing and has lead to many algorithmic relaxations that approximate the Gromov-Hausdroff distance \citep{memoli2004comparing,memoli2005theoretical,Bronstein2006}. As a stepping stone towards our unbalanced formulation, we first adapt this formulation to an extended metric between two mm spaces $(\cX,d_{\cX},P)$ and $(\cY,d_{\cY},Q)$. To that end, we first relax the distortion terms and then restrict the mappings to be measure persevering, as described next.

Let $(X,X')\sim P^{\otimes 2}$ be independent of $(Y,Y')\sim Q^{\otimes 2}$. For $1\leq p<\infty$, set
\begin{align*}\mspace{-3mu}
&\Delta^{(p)}_{\cX}(f;P)\mspace{-3mu}:=\mspace{-3mu}\Big(\mspace{-1mu}\EE\mspace{-2mu}\Big[\big|d_{\cX}(X,X')-d_{\cY}\big(f(X),f(X')\big)\big|^p\Big]\Big)^{\mspace{-3mu}\frac{1}{p}}\\
 &\Delta^{(p)}_{\cY}(g;Q)\mspace{-3mu}:=\mspace{-3mu}\Big(\mspace{-1mu}\EE\mspace{-2mu}\Big[\big|d_{\cX}\big(g(Y),g(Y')\big)-d_{\cY}(Y,Y')\big|^p\Big]\Big)^{\mspace{-3mu}\frac{1}{p}}\\
& \Delta^{(p)}_{\cX,\cY}(f\mspace{-2mu},\mspace{-2mu}g;\mspace{-2mu}P,\mspace{-2mu}Q)\mspace{-3mu}:=\mspace{-3mu}\Big(\mspace{-1mu}\EE\mspace{-2mu}\Big[\big|d_{\cX}\big(\mspace{-1.5mu}X\mspace{-1mu},\mspace{-2mu}g(Y)\big)\mspace{-3mu}-\mspace{-3mu}d_{\cY}\big(\mspace{-1.5mu}f(X),\mspace{-3mu}Y\mspace{-1.5mu}\big)\big|^p\mspace{-1.5mu}\Big]\mspace{-1.5mu}\Big)^{\mspace{-3mu}\frac{1}{p}}\mspace{-6mu},
 \end{align*}
as the $L^p$ relaxation of the distortion terms from \eqref{eq:dGH2}. Restricting the mappings in \eqref{eq:dGH2} to be `Monge' measure preserving, i.e., $f_{\sharp }P=Q$ and $g_{\sharp }Q=P$, gives rise to $p$th order \emph{bidirectional Gromov-Monge} (BGM) distance, as defined next. 

\begin{definition}[Bidirectional Gromov-Monge distance]\label{def:metricD}
Fix $1\leq p<\infty$. The $p$th order BGM distance between $(\cX,d_{\cX},P)$ and $(\cY,d_{\cY},Q)$ is 
\[\mathsf{D}_p(P,Q):= \inf_{\substack{f: \cX \to \cY,\, f_{\sharp }P=Q\\ g:\cY\to \cX,\, g_{\sharp }Q=P  }} 
\Delta_p(f,g;P,Q) , \]
where $\Delta_p(f,g;P,Q):= \Delta^{(p)}_{\cX}(f;P)+ \Delta^{(p)}_{\cY}(g;Q)+ \Delta^{(p)}_{\cX,\cY}(f,g;P,Q)$. If the set of measure preserving maps is empty, then we set $\mathsf{D}_p(P,Q)= \infty$. 
\end{definition}

This definition follows a reasoning similar to Formulation \eqref{eq:dGH2} of $d_{\mathsf{GH}}$ above: We are looking for measure preserving maps $f,g$, that are low metric distortion (minimizing $\smash{\Delta^{(p)}_{\cX},\Delta^{(p)}_{\cY}}$) and satisfy a cycle consistency propriety (minimizing $\smash{\Delta^{(p)}_{\cX\times \cY}}$, i.e., almost inverses one of another).

The BGM distance defines an extended metric between equivalence classes of mm spaces.

\begin{definition}[Equivalence classes]\label{DEF:equivalence}
Two mm spaces $(\cX,d_{\cX},P)$ and $(\cY,d_{\cY},Q)$ are called equivalent if and only if (iff) there is an invertible isometry $f:\cX\rightarrow\cY$ such that $f_\sharp P=Q$ (and hence $f^{-1}_\sharp Q=P$). The set of all such equivalence classes is denoted by $\mM$.
\end{definition}

\begin{proposition}[BGM distance metrizes $\mM$]\label{pro:Dmetric}
For any $1\leq p<\infty$, $\mathsf{D}_p$ defines an extended metric on $\mM$.
\end{proposition}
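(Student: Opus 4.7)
The plan is to verify the four axioms of an extended metric on $\mM$ in turn. \emph{Non-negativity} is immediate because each of $\Delta^{(p)}_\cX$, $\Delta^{(p)}_\cY$, and $\Delta^{(p)}_{\cX,\cY}$ is an $L^p$ norm. \emph{Symmetry} $\mathsf{D}_p(P,Q)=\mathsf{D}_p(Q,P)$ follows from the observation that $(f,g)$ is admissible for $(P,Q)$ if and only if $(g,f)$ is admissible for $(Q,P)$, and each of the three distortion terms is invariant under the simultaneous swap $\cX\leftrightarrow\cY$, $P\leftrightarrow Q$, $f\leftrightarrow g$, using $|a-b|=|b-a|$ together with the symmetry of the underlying metrics.

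For the \emph{triangle inequality} $\mathsf{D}_p(P_1,P_3)\leq\mathsf{D}_p(P_1,P_2)+\mathsf{D}_p(P_2,P_3)$, I would select $\varepsilon$-optimal admissible pairs $(f_{12},g_{21})$ for $(P_1,P_2)$ and $(f_{23},g_{32})$ for $(P_2,P_3)$ (the inequality is trivial whenever either distance is $\infty$), and form the compositions $f_{13}:=f_{23}\circ f_{12}$ and $g_{31}:=g_{21}\circ g_{32}$, which remain measure preserving by functoriality of the pushforward. For $\Delta^{(p)}_{\cX_1}(f_{13};P_1)$ one inserts the intermediate metric $d_{\cX_2}(f_{12}(x),f_{12}(x'))$, uses $(f_{12})_\sharp P_1=P_2$ to recast the second piece as a distortion for $f_{23}$ driven by $P_2$, and invokes Minkowski to obtain
\[\Delta^{(p)}_{\cX_1}(f_{13};P_1)\leq\Delta^{(p)}_{\cX_1}(f_{12};P_1)+\Delta^{(p)}_{\cX_2}(f_{23};P_2),\]
with an analogous bound for $\Delta^{(p)}_{\cX_3}(g_{31};P_3)$. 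For the cross term, setting $Y_1:=f_{12}(X_1)\sim P_2$ and $Y_3:=g_{32}(X_3)\sim P_2$, inserting the intermediate $d_{\cX_2}(Y_1,Y_3)$ decomposes the pointwise distortion into two brackets, each of which---using $X_1\perp X_3$---has the law of a pointwise distortion for exactly one of the two given pairs, so a further application of Minkowski delivers the corresponding bound. Summing the three inequalities and letting $\varepsilon\downarrow 0$ completes this step.

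The \emph{identity of indiscernibles} has a trivial direction: if $f:\cX\to\cY$ is an invertible isometry with $f_\sharp P=Q$, then $(f,f^{-1})$ zeros out all three distortion terms, so $\mathsf{D}_p(P,Q)=0$. For the converse, $\mathsf{D}_p(P,Q)=0$ produces sequences of admissible pairs $(f_n,g_n)$ with $\Delta_p(f_n,g_n;P,Q)\to 0$. I would pass to the induced couplings $\mu_n:=(\mathrm{id},f_n)_\sharp P$ and $\nu_n:=(g_n,\mathrm{id})_\sharp Q$ in $\cP(\cX\times\cY)$, extract weakly convergent subsequences $\mu_n\to\mu$ and $\nu_n\to\nu$ via Prokhorov (since $\cX,\cY$ are compact), and rewrite each distortion functional as an integral of a bounded continuous function against an appropriate product of these couplings. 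Weak continuity then forces $\int|d_\cX(x,x')-d_\cY(y,y')|^p\,d\mu(x,y)\,d\mu(x',y')=0$, the analogous vanishing for $\nu$, and a cross relation between $\mu$ and $\nu$ from the third distortion term. The first identity implies that $\mathrm{supp}(\mu)$ is the graph of a partially defined distance-preserving map on a full-$P$-measure subset of $\cX$; full support of $P$, compactness of $\cX$, and uniform continuity let one extend this to a global isometry $f:\cX\to\cY$ with $f_\sharp P=Q$, while the cross relation identifies its inverse with the analogous map extracted from $\nu$, yielding equivalence in $\mM$.

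The main obstacle is this last implication: the infimum defining $\mathsf{D}_p$ need not be attained, so one cannot read off the desired isometry from the definition itself. The argument has to proceed via compactness in $\cP(\cX\times\cY)$ and weak continuity of the distortion functionals, and then upgrade the resulting limiting coupling to an honest bijective isometry---a step where full support of $P,Q$ and compactness of $\cX,\cY$ are both essential, and which parallels (but is not a direct consequence of) the Gromov--Wasserstein characterization of mm-isomorphism.
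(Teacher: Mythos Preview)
Your treatment of non-negativity, symmetry, and the triangle inequality matches the paper's: compose the measure-preserving maps, insert the intermediate distance, use the pushforward constraints to rewrite each integral over the middle space, and apply Minkowski. The paper carries this out only for $\Delta^{(p)}_{\cX,\cY}$ and remarks that the other two terms are analogous, exactly as you do.

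For the identity of indiscernibles the paper proceeds differently. Rather than passing to couplings, it works directly with the sequences $f_n,g_n$: from $\Delta^{(p)}_\cX(f_n;P)\to 0$ it extracts a subsequence along which the pointwise distortion $|d_\cX(x,x')-d_\cY(f_n(x),f_n(x'))|$ converges to $0$ $P^{\otimes 2}$-a.s., then constructs by hand a countable dense set $S\subset\cX$ on which this convergence is simultaneous, uses compactness of $\cY$ to diagonalize so that $f_n$ converges pointwise on $S$, verifies that the limit is an isometry on $S$, extends it to all of $\cX$ by uniform continuity, and finally upgrades to a.s.\ convergence on $\cX$. The inverse relation $f=g^{-1}$ is then read off from the third distortion term via dominated convergence. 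This is packaged as a separate Lemma~\ref{lem:existence}.

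Your coupling route---pass to $\mu_n=(\mathrm{id},f_n)_\sharp P$ and $\nu_n=(g_n,\mathrm{id})_\sharp Q$, extract weak limits by Prokhorov, and use that each distortion is the integral of a bounded continuous function against a product of these measures---is correct and arguably cleaner. It avoids the explicit separability construction and leverages the same compactness/weak-continuity machinery that underlies the Gromov--Wasserstein characterization of mm-isomorphism. The paper's function-level argument, on the other hand, yields a bit more: it actually produces subsequences of the original $f_n,g_n$ that converge a.s.\ to the limiting isometries, which is the content of Lemma~\ref{lem:existence} and is reused later in Proposition~\ref{prop:divergence}.
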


\begin{remark}[Finiteness of BGM] Let $(\cX,d_{\cX},P)$ be a mm space, with $\cX$ uncountable and $P$ atomless. If $C_{\infty}$ is the subcategory of mm spaces isomorphic to $(\cX,d_{\cX},P)$ \citep{memoli2021distance}, then BGM is a finite metric on $C_{\infty}$. This setting is interesting for matching isomorphic mm spaces (of same dimension), in image, shape and text matching applications.     
\end{remark}

The proof of Proposition \ref{pro:Dmetric} is given in Appendix \ref{app:Dmetric_proof}. Positivity, symmetry, and the triangle inequality all follow from elementary calculations. The main challenge is in showing that if $\mathsf{D}_p$ nullifies then the considered spaces are isometrically isomorphic. To that end, we use the following lemma, that may be of independent interest (see Appendix \ref{app:existence_proof} for the proof).

\begin{lemma}[Existence of isometries]
Fix $P,Q\in\cP(\cX)$ and let $\cF$ and $\cG$ be arbitrary function classes such that $\inf_{f\in\cF,\,g\in\cG} \Delta_p(f,g;P,Q) =0$. Then there exist minimizing sequences $(f_n)_{n\in\NN}\subset\cF$ and $(g_n)_{n\in\NN}\subset\cG$ that converge almost surely (a.s.) to isometries $f$ and $g$, respectively, such that $f=g^{-1}$. 
\label{lem:existence}
\end{lemma}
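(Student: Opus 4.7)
The plan is to lift each minimizing pair $(f_n,g_n)$ to probability measures on the product spaces, extract weak limits by compactness, and then read off both the isometry structure and cycle consistency from the vanishing of the three distortions.

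Concretely, by definition of the infimum I would pick $(f_n,g_n)\in\cF\times\cG$ with $\Delta_p(f_n,g_n;P,Q)\to 0$; nonnegativity forces each of $\Delta^{(p)}_\cX(f_n;P)$, $\Delta^{(p)}_\cY(g_n;Q)$, and $\Delta^{(p)}_{\cX,\cY}(f_n,g_n;P,Q)$ to vanish. I then introduce the graph measures $\mu_n:=(\mathrm{id},f_n)_\sharp P$ and $\nu_n:=(g_n,\mathrm{id})_\sharp Q$ on $\cX\times\cY$. Compactness of $\cX\times\cY$ makes these sequences tight, so along a subsequence $\mu_n\stackrel{w}{\to}\mu$ and $\nu_n\stackrel{w}{\to}\nu$ with first marginal of $\mu$ equal to $P$ and second marginal of $\nu$ equal to $Q$.

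The crux is to identify $\mu$ as a graph measure of an isometry. Since $(x,y,x',y')\mapsto|d_\cX(x,x')-d_\cY(y,y')|^p$ is bounded and continuous on the compact product, $\mu_n\otimes\mu_n\stackrel{w}{\to}\mu\otimes\mu$ yields
\[
\int |d_\cX(x,x')-d_\cY(y,y')|^p\,d(\mu\otimes\mu)\;=\;\lim_n \Delta^{(p)}_\cX(f_n;P)^p\;=\;0.
\]
Continuity of the integrand then forces $d_\cX(x,x')=d_\cY(y,y')$ on every point of $\mathrm{supp}(\mu)\times\mathrm{supp}(\mu)$; setting $x=x'$ collapses the $y$-coordinate, so $\mathrm{supp}(\mu)$ is the graph of a map $f$. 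The projection $\pi_\cX(\mathrm{supp}(\mu))$ is closed, has full $P$-measure, and hence must equal $\cX$ because $P$ has full support, so $f:\cX\to\cY$ is a genuine isometry. A symmetric argument on $\nu_n$ yields an isometry $g:\cY\to\cX$.

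Replaying the same weak-convergence argument with $\mu_n\otimes\nu_n\stackrel{w}{\to}\mu\otimes\nu$ and the vanishing of $\Delta^{(p)}_{\cX,\cY}(f_n,g_n;P,Q)$ produces $d_\cX(x,g(y))=d_\cY(f(x),y)$ for $P\otimes Q$-a.e.\ $(x,y)$, and by continuity with full support for every $(x,y)\in\cX\times\cY$. Substituting $y=f(x)$ gives $g\circ f=\mathrm{id}_\cX$ and $x=g(y)$ gives $f\circ g=\mathrm{id}_\cY$, so $f$ is bijective with $g=f^{-1}$. To upgrade weak convergence to almost-sure convergence, I would test $\mu_n\stackrel{w}{\to}\mu$ against the bounded continuous function $(x,y)\mapsto d_\cY(y,f(x))\wedge 1$, whose $\mu$-integral is $0$; this yields $f_n(X)\to f(X)$ in $P$-probability, and a further subsequence converges $P$-a.s. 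The analogous argument handles $g_n\to g$ under $Q$. The main obstacle is the third paragraph, namely converting the vanishing $L^p$-distortion together with the full-support hypothesis into a genuine graph structure on $\mathrm{supp}(\mu)$ and into an isometry defined on all of $\cX$; once that is in hand, the cycle-consistency identity and the promotion to a.s. convergence are routine.
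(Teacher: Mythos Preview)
Your argument is correct and takes a genuinely different route from the paper. The paper works directly with the function values: it passes to a subsequence along which the pointwise distortion $\phi_n(x,x'):=|d_\cX(x,x')-d_\cY(f_n(x),f_n(x'))|$ vanishes $P^{\otimes 2}$-a.e., then inductively builds a countable dense set $S\subset\cX$ on which $\phi_n\to 0$ for \emph{all} pairs, extracts (by compactness of $\cY$) a further subsequence converging pointwise on $S$, extends the limit isometrically to all of $\cX$, and finally upgrades pointwise convergence on $S$ to $P$-a.s.\ convergence on $\cX$. You instead lift to the graph measures $\mu_n=(\mathrm{id},f_n)_\sharp P$, use Prokhorov to get a weak limit $\mu$, and read the isometry off the support of $\mu$ via the vanishing of the continuous distortion integrand; the step where you set $x=x'$ to collapse the $y$-fiber and then invoke full support of $P$ to cover all of $\cX$ is exactly the right replacement for the paper's dense-set construction. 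Your approach is more compact and conceptual, and it makes the passage to a.s.\ convergence a one-liner (testing against $d_\cY(y,f(x))\wedge 1$); the paper's approach is more elementary in that it never leaves the level of functions, at the cost of a somewhat delicate inductive argument. Both proofs use the full-support assumption in the same essential way and both ultimately deliver only a subsequence that converges a.s., which is all the lemma asks for.
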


\begin{remark}[Convergent sequences]
By definition of isometry, the limits $f,g$ are metric preserving, thus continuous and Lipschitz 1. We don't know if they still live in $\cF,\cG$ unless we impose compactness conditions later on. Notice that the convergence stated in the lemma is guaranteed solely by the first two terms of $\Delta_p(f,g;P,Q)$. If we drop the third term, convergent sequences still exists, but the limits are not guaranteed to be inverse of each other.
\end{remark}

While $\mathsf{D}_p$ is a valid extended metric on $\mathfrak{M}$, its evaluation is computationally challenging as it is unclear how to optimize over bidirectional Monge maps. 
Following the unbalanced OT framework  \citep{chizat2018unbalanced,frogner2015learning}, we relax the measure preserving constraint using divergences\footnote{A divergence on $\cP(\cX)$ is a functional $\sD:\cP(\cX)\times\cP(\cX)\to \RR_{\geq 0}$ such that $\sD(P\|Q)=0$ iff $P=Q$.} $\sD_{\cX}$ and $\sD_{\cY}$ on measures on $\cX$ and $\cY$, respectively, and restrict the functions to pre-specified classes. This gives rise to the  unbalanced Gromov-Monge formulation.

\begin{definition}[Unbalanced bidirectional Gromov-Monge divergence]\label{def:UD}
Fix $1\leq p<\infty$, let $\sD_{\cX}$ and $\sD_{\cY}$ be divergences on $\cX$ and $\cY$, respectively. Further take $\cF$ as a class of mappings from $\cX$ to $\cY$, and $\cG$ a class of mappings from $\cY$ to $\cX$. The $p^{\text{th}}$ order UBGM divergence between $(\cX,d_{\cX},P)$ and $(\cY,d_{\cY},Q)$ is
\begin{align*}
\mathsf{UD}^{\cF,\cG}_p(P\|Q):=\inf_{\substack{f\in\cF\\ g\in\cG }} \Delta_p(f,g;P,Q) &+\lambda_x \sD_{\cX}(g_{\sharp }Q\|P)
+ \lambda_y \sD_{\cY}(f_{\sharp }P\|Q),
\end{align*}
where $\Delta_p(f,g;P,Q)$ is given in Definition \ref{def:metricD}, and $\lambda_x,\lambda_y>0$ are fixed regularization coefficients. 
\end{definition}

Evidently, $\mathsf{UD}^{\cF,\cG}_p$ no longer requires optimizing over Monge maps, which alleviates the computational difficulty associated with $\sD_p$ from Definition \ref{def:metricD}. The function classes $\cF$ and $\cG$ can also be chosen for computational convenience (e.g., NNs). The following proposition (see Appendix \ref{app:divergence_proof} for the proof) shows that $\mathsf{UD}^{\cF,\cG}_p$ is a continuous divergence.

\begin{proposition}[ $\mathsf{UD}^{\cF,\cG}_p$ is a divergence]\label{prop:divergence}
Suppose that $\sD_\cX$ and $\sD_\cY$ are weakly continuous in their arguments, and $\cF,\cG$ are rich enough so that they are dense around the isometric bijections if $\cX,\cY$ are isometric. Then
\begin{enumerate}[leftmargin=*,align=parleft]
    \item $\mathsf{UD}^{\cF,\cG}_p$ is an upper semi-continuous divergence on $\mathfrak{M}$, i.e., $\mathsf{UD}^{\cF,\cG}_p(P\|Q)\geq0$, for all $P\in\cP(\cX)$ and $Q\in\cP(\cY)$, with equality iff there exists isometries $f,g$ on the support of $P,Q$ respectively, such that $f_\sharp  P =Q$, $g_\sharp  Q=P$, and $f=g^{-1}$.
    \vspace{-2mm}
    \item If further $\cF,\cG$ are compact in the sup-metrics $d_\cF$ and $d_\cG$, then $\mathsf{UD}^{\cF,\cG}_p$ is continuous with respect to (w.r.t.) weak convergence.
\end{enumerate}  
\end{proposition}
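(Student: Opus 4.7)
\bigskip

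\noindent\textbf{Proof plan for Proposition \ref{prop:divergence}.}

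The plan is to tackle the two items separately, but both rest on the same core observation: under the compactness and continuity hypotheses of the paper, every ingredient entering $\mathsf{UD}^{\cF,\cG}_p$ — the three distortion integrands, the two pushforwards, and the two divergences — depends continuously on $(f,g,P,Q)$ when $f,g$ are compared in the sup-metric and $P,Q$ in weak topology. Once this continuity package is assembled, item 1 will follow from Lemma \ref{lem:existence}, and item 2 from a standard sequential compactness argument.

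For item 1, non-negativity is immediate since each summand is non-negative. For the equality condition, the ``if'' direction proceeds as follows: if an invertible isometry $f$ with $f_\sharp P=Q$ exists, set $g=f^{-1}$; then all three distortion terms $\Delta_\cX^{(p)},\Delta_\cY^{(p)},\Delta_{\cX,\cY}^{(p)}$ vanish identically because $f,g$ preserve the metric exactly, and the divergence terms vanish by construction, giving value $0$. Since $\cF,\cG$ are assumed dense around such isometric bijections and every ingredient depends continuously on $(f,g)$ in the sup-metric (composition/evaluation being continuous on compact spaces), we can approximate $f,g$ by sequences in $\cF,\cG$ and pass to the limit to conclude $\mathsf{UD}^{\cF,\cG}_p(P\|Q)=0$. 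For the ``only if'' direction, $\mathsf{UD}^{\cF,\cG}_p(P\|Q)=0$ implies the existence of a minimizing sequence $(f_n,g_n)\in\cF\times\cG$ along which $\Delta_p(f_n,g_n;P,Q)\to 0$ and simultaneously $\sD_\cX((g_n)_\sharp Q\|P)\to 0$, $\sD_\cY((f_n)_\sharp P\|Q)\to 0$. Lemma \ref{lem:existence} then supplies a.s. limits $f,g$ that are mutually inverse isometries on $\operatorname{supp}(P),\operatorname{supp}(Q)$. Weak continuity of $\sD_\cX,\sD_\cY$ combined with $(f_n)_\sharp P \stackrel{w}{\to} f_\sharp P$ (which holds because $f_n\to f$ a.s.\ and $P$ has full support, so by bounded convergence $\int\phi\circ f_n \,dP\to\int\phi\circ f\,dP$ for $\phi\in C_b$) gives $\sD_\cY(f_\sharp P\|Q)=0$, hence $f_\sharp P=Q$; symmetrically $g_\sharp Q=P$. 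This settles the equality case. Upper semi-continuity of $\mathsf{UD}^{\cF,\cG}_p$ under weak convergence $(P_n,Q_n)\stackrel{w}{\to}(P,Q)$ then follows by fixing any $(f,g)\in\cF\times\cG$, noting that $\Delta_p(f,g;P_n,Q_n)\to\Delta_p(f,g;P,Q)$ (the integrands are continuous and bounded on the compact products $\cX^2,\cY^2,\cX\times\cY$, so the portmanteau theorem applies to $P_n^{\otimes 2},Q_n^{\otimes 2},P_n\otimes Q_n$), and $\sD_\cX(g_\sharp Q_n\|P_n)\to\sD_\cX(g_\sharp Q\|P)$ by weak continuity of $\sD_\cX$; taking the infimum and then the $\limsup$ yields $\limsup_n \mathsf{UD}^{\cF,\cG}_p(P_n\|Q_n)\le \mathsf{UD}^{\cF,\cG}_p(P\|Q)$.

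For item 2, upper semi-continuity is already in hand, so it remains to establish lower semi-continuity under the extra hypothesis that $\cF,\cG$ are sup-metric compact. Given $(P_n,Q_n)\stackrel{w}{\to}(P,Q)$, pick $(f_n,g_n)\in\cF\times\cG$ that are $1/n$-minimizers of $\mathsf{UD}^{\cF,\cG}_p(P_n\|Q_n)$. By compactness, pass to a subsequence along which $f_n\to f\in\cF$ and $g_n\to g\in\cG$ in sup-metric. The key convergence claim is that
\[
\Delta_p(f_n,g_n;P_n,Q_n)+\lambda_x\sD_\cX((g_n)_\sharp Q_n\|P_n)+\lambda_y\sD_\cY((f_n)_\sharp P_n\|Q_n)
\]
converges to the same expression with $(f,g,P,Q)$. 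For the distortion terms this uses joint continuity: $|d_\cY(f_n(x),f_n(x'))-d_\cY(f(x),f(x'))|\le 2 d_\cF(f_n,f)\to 0$ uniformly in $(x,x')$, so the integrand converges uniformly and the portmanteau theorem handles $P_n^{\otimes 2}\stackrel{w}{\to}P^{\otimes 2}$ (and likewise for the other two terms). For the divergence terms we invoke weak continuity of $\sD_\cX,\sD_\cY$ after establishing $(g_n)_\sharp Q_n\stackrel{w}{\to} g_\sharp Q$: for $\phi\in C_b(\cX)$,
\[
\Big|\int\phi\circ g_n\,dQ_n-\int\phi\circ g\,dQ\Big|\le \int|\phi\circ g_n-\phi\circ g|\,dQ_n+\Big|\int\phi\circ g\,dQ_n-\int\phi\circ g\,dQ\Big|,
\]
the first term vanishes by uniform continuity of $\phi$ on the compact $\cX$ and $d_\cG(g_n,g)\to 0$, the second by weak convergence of $Q_n$. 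Combining, the limiting value equals $\mathsf{UD}^{\cF,\cG}_p$ evaluated at $(f,g,P,Q)$, which is at least $\mathsf{UD}^{\cF,\cG}_p(P\|Q)$; therefore $\liminf_n \mathsf{UD}^{\cF,\cG}_p(P_n\|Q_n)\ge \mathsf{UD}^{\cF,\cG}_p(P\|Q)$, and continuity follows.

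The main obstacle I anticipate is the ``only if'' direction of the equality case: Lemma \ref{lem:existence} only guarantees that the a.s.\ limits $f,g$ are isometries that are inverse to each other, but does not directly deliver $f_\sharp P=Q$. Bridging this gap requires coupling the a.s.\ convergence $f_n\to f$ with the decay $\sD_\cY((f_n)_\sharp P\|Q)\to 0$ through the weak continuity assumption on $\sD_\cY$, and a little care is needed because the limits are only defined on the support of $P$ (resp.\ $Q$); this is exactly why the statement of item 1 phrases the isometry condition on the supports rather than on all of $\cX,\cY$.
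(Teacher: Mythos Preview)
Your proposal is correct and follows essentially the same approach as the paper's proof: Lemma \ref{lem:existence} for the equality case, fixing $(f,g)$ and taking $\limsup$ for upper semi-continuity, and sup-metric compactness to extract convergent (near-)minimizing subsequences for lower semi-continuity. Your treatment is if anything slightly more careful in a few places --- you explicitly invoke the density assumption for the ``if'' direction and spell out the triangle-inequality argument for $(g_n)_\sharp Q_n \stackrel{w}{\to} g_\sharp Q$ --- and you correctly identify the only non-routine step, namely coupling the a.s.\ convergence from Lemma \ref{lem:existence} with the decay of the divergence terms via weak continuity of the pushforwards.
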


\subsection{Cycle GAN as Unbalanced Gromov-Monge Divergence}\label{SUBSEC:cgan}

If $\sD_{\cX}$ and $\sD_{\cY}$ are integral probability metrics (IPM)\footnote{An IPM indexed by a function class $\cF$ is a pseudometric on $\cP(\cX)$ defined as $d_\cF(\mu,\nu):=\sup_{f\in\cF} \int_{\cX}f\dd(\mu-\nu)$.} \citep{zolotarev1984probability,muller1997integral} indexed by the function classes $\mathcal{F}_{\cX}$ and $\mathcal{F}_{\cY}$, respectively, then $\mathsf{UD}_1$ amounts to a minimax game between the maps $f,g$ and the witness functions $\psi$ and $\phi$ of the IPMs on $\mathcal{F}_{\cX}$ and $\mathcal{F}_{\cY}$ respectively, as follows:
\begin{align}
&\inf_{\substack{f: \cX \to \cY\\ g:\cY\to \cX }} \sup_{\substack{\psi \in \mathcal{F}_{\cX}\\\phi\in \mathcal{F}_{\cY}}} \Delta_1(f,g;P,Q)
+ \lambda_x \Big(\mathbb{E}\big[\psi\big(g(Y)\big)\big] -\mathbb{E}\big[\psi(X)\big]\Big)
+\lambda_y \Big(\mathbb{E}\big[\phi\big(f(X)\big)\big] -\mathbb{E}\big[\phi(Y)\big]\Big),
\label{eq:IPM_UD}
\end{align}
where $X\sim P$ and $Y\sim Q$.

Written in this form, we see the similarity to the cycle GAN formulation \citep{zhu2017unpaired,kim2017learning}:
\begin{align}
\inf_{\substack{f: \cX \to \cY, \\ g:\cY\to \cX }} &\sup_{\substack{\psi \in \mathcal{F}_{\cX},\\\phi \in \mathcal{F}_{\cY}}}\mspace{-4mu}\mathbb{E}\big[d_{\cX}\big(X, g\mspace{-2mu}\circ\mspace{-2mu} f(X)\big)\big]\mspace{-3mu}+\mspace{-3mu}\mathbb{E}\big[d_{\cY}\big(Y, f\mspace{-2mu}\circ\mspace{-2mu}g(Y)\big)\big]  
+\lambda_x \Big(\mathbb{E}\big[\psi\big(g(Y)\big)\big] -\mathbb{E}\big[\psi(X)\big]\Big)\nonumber\\
&\quad\qquad\qquad\qquad\ +\lambda_y \Big(\mathbb{E}\big[\phi\big(f(X)\big)\big] -\mathbb{E}\big[\phi(Y)\big]\Big).
\label{eq:cycleGAN}
\end{align}
The first two terms in \eqref{eq:cycleGAN} encourage $f$ and $g$ to be approximate inverses of one another, similar to the role of the distortion $\Delta_1(f,g;P,Q)$ in $\mathsf{UD}_1$ from \eqref{eq:IPM_UD}. While the original Cycle GAN formulation did not require $f$ and $g$ to be isometries, this constraint was introduced in followup works \citep{Hoshen_2018_CVPR}. We thus see that Cycle GAN, with a relaxed isometry requirement of $f$ and $g$, is a particular instantiation of the UBGM divergence. 

 \subsection{Relation to Past Works}

We show briefly here the construction of two well-known discrepancies between mm spaces, namely the GW distance \citep{memoli2011gromov}, and the Gromov-Monge (GM) distance \citep{memoli2021distance}. The starting point of defining these two distances is the `correspondence set formulation'
of $d_{\mathsf{GH}}$ from \eqref{eq:dGHC}.  

\paragraph{Gromov-Wasserstein distance.} In a nutshell, the GW distance is an $L^{p}, p\geq 1$ relaxation of the $L^{\infty}$ norm in formulation \eqref{eq:dGHC} of $d_{\mathsf{GH}}$, along with a Kantorovich relaxation of the correspondence set using couplings.  
\begin{definition}[Gromov-Wasserstein distance \citep{memoli2011gromov}] The GW distance between $(\cX,d_{\cX},P)$ and $(\cY,d_{\cY},Q)$ is
\begin{align*}
    \mathsf{GW}(P,Q):=\inf_{\pi\in\Pi(P,Q)}\|\Gamma_{\cX,\cY}\|_{L^p(\pi\otimes\pi)}
\end{align*}
where $\Gamma_{\cX,\cY}(x,y,x',y'):= \big|d_\cX(x,x')- d_\cY(y,y')\big|$, 
and $\Pi(P,Q)$ is the set of all couplings of $P,Q$. 
\end{definition}

Another closely related formulation is the unbalanced GW distance from  \citet{sejourne2020unbalanced}. 
For any divergence\footnote{\citet{sejourne2020unbalanced} used $\mathsf{f}$-divergences \citep{csiszar1967information} for $\sD_\cX$ and $\sD_\cY$, but we provide a general definition.} $\sD_\cX$ on $\cX$, define its two-fold extension $\mathsf{D}^{\otimes2}_\cX(P\|Q):=\mathsf{D}_\cX(P\otimes P\|Q\otimes Q)$.
\begin{definition}[Unbalanced Gromov-Wasserstein distance \citep{sejourne2020unbalanced}]
Let $\cM_+(\cX)$ be the set of all nonnegative Borel measures on $\cX$. The unbalanced GW distance between $(\cX,d_{\cX},P)$ and $(\cY,d_{\cY},Q)$~is  
\begin{align*}\mspace{-3mu}
    &\mathsf{UGW}(P,Q):=
    \inf_{\pi\in \cM_+(\cX\times\cY)}  \|\Gamma_{\cX,\cY}\|_{L^1(\pi \otimes \pi)}\mspace{-2mu}+\mspace{-2mu} \mathsf{D}^{\otimes2}_{\cX}(\pi_1\|P)\mspace{-2mu} +\mspace{-2mu} \mathsf{D}^{\otimes2}_{\cY}(\pi_2\|Q)
\end{align*}
where $\pi_1,\pi_2$ are the marginals of $\pi$ on $\cX$ and $\cY$.
\end{definition}



The unbalanced relaxation of the GW distance is similar to how our UBGM distance (Definition \ref{def:UD}) relaxes the BGM distance. A crucial difference is that both~the BGM distance and its unbalanced version explicitly encode bidirectional mappings, which are important in applications as they alleviate the need to recompute the coupling matrix given new datapoints. 

\paragraph{Gromov-Monge distance.} More recently, \citet{memoli2021distance} presented another extension of  $d_{\mathsf{GH}}$ to a discrepancy between mm spaces. Termed the GM distance, it considers an $L^p$  Monge relaxation of \eqref{eq:dGHC}, as opposed to the Kanotrovich-based approach of GW. Namely, instead of using couplings, the correspondence set now comprises Monge maps, i.e.,
all measurable maps $f:\cX\to \cY$ s.t.~$f_\sharp P=Q$. Also for arbitrary $f$, denote $\pi_{f}:=(\mathrm{id},f)_{\sharp }P$. Clearly for Monge maps $f$ that pushes $P$ to $Q$, $\pi_{f}\in \Pi(P,Q)$.
\begin{definition}[Gromov-Monge distance \citep{memoli2021distance}]
The GM distance between two mm spaces
$(\cX,d_{\cX},P)$ and $(\cY,d_{\cY},Q)$ is
\begin{align*}
    \mathsf{GM}(P,Q):=\inf_{f: \cX \to \cY,\, f_{\sharp }P=Q}  \|\Gamma_{\cX,\cY} \|_{L^p(\pi_f^{\otimes 2})}
\end{align*}
where  $\left\| \Gamma_{\cX,\cY} \right\|_{L^p(\pi_{f}\otimes \pi_f)}=\Delta^{(p)}_{\cX}(f;P).$
\label{def:GM}
\end{definition}

Comparing $\sD_p$ to $\sf{GM}$ above, we see that while the latter uses a single low metric distortion maps (with a cost of the form $\Delta^{(p)}_{\cX}(f;P)$), our BGM distance uses two such mappings that are approximately inverses (as enforced by $\Delta^{(p)}_{\cX,\cY}(f,g;P,Q)$). 
In a sense our definition is a symmetrized and cycle consistent version of $\mathsf{GM}$. 

\section{Kernelization: Generalized Maximum Mean Discrepancy}\label{SEC:GMMD}


Motivated by computational considerations, we now instantiate the divergences $\mathsf{D}_\cX$ and $\mathsf{D}_\cY$ in $\mathsf{UD}_p^{\cF,\cG}$ (see Definition \ref{def:UD}) as maximum mean discrepancies (MMDs) \citep{gretton2012kernel}. We coin the resulting kernelized divergence as the \emph{generalized maximum mean discrepancy} (GMMD). MMDs can be efficiently computed and offer flexibility in picking the proper kernel for each space. We start by reviewing preliminaries on MMDs (Section \ref{subsec:rkhs}), after which we present the kernelized UBGM distance (Section \ref{subsec:gmmd}), and explore its empirical convergence rates (Section~\ref{subsec:statistical}).




\subsection{Reproducing Kernel Hilbert Spaces}\label{subsec:rkhs}

We define reproducing kernel Hilbert spaces (RKHS) and the associated MMD. For a separable space $\cX$ and a continuous, positive definite, real-valued kernel $k_\cX:\cX\times\cX\rightarrow\RR$, let $\cH_\cX$ denote the corresponding RKHS, in which for any $f\in\cH_\cX$, we have $f(x)=\langle f(\cdot),k(x,\cdot) \rangle_{\cH_\cX}$, for any $x\in\cX$. See \cite{berlinet2011reproducing} for existence and uniqueness of $\cH_\cX$. There is a natural way to embed $\cP(\cX)$ into $(\cH_\cX,k_\cX)$, given by the kernel mean embedding 
\[
    \mu_\cX P(x):=\int_\cX k_\cX(x,y)\dd P(y)=\EE\big[k_\cX(x,Y)\big],
\]
where $Y\sim P$. This enables defining a discrepancy measure between probability distribution as the RKHS distance between their kernel mean embeddings.


\begin{definition}[Maximum mean discrepancy]
Let $\cH_\cX$ be an RKHS. The MMD between $P,Q\in\cP(\cX)$ is
\begin{align*}
    &\mathsf{MMD}_\cX(P,Q):= \|\mu_\cX P - \mu_\cX Q\|_{\cH_\cX}\\
    &= \left( \int k_\cX(x,y)\dd (P-Q)(x)\dd (P-Q)(y) \right)^{1/2}.
\end{align*}
\end{definition}
When the kernel $k_\cX$ is characteristic, as defined next, $\mathsf{MMD}_\cX$ metrizes the space of distributions $\cP(\cX)$.
\begin{definition}[Characteristic kernel]
The kernel $k_\cX$ of an RKHS $\cH_\cX$ is called characteristic if the mean embedding $\mu_\cX:\cP(\cX)\to\cH_\cX$ is injective.
\end{definition}

Also recall that characteristic kernels enable defining a metric on the $\cX$ space~\citep{sejdinovic2013equivalence}. Namely, defining $\rho_{k_\cX}(x,x'):= k_\cX(x,x)+k_\cX(x',x')-2k_\cX(x,x')$, for $x,x'\in\cX$, we have that $\big(\cX,\sqrt{\rho_{k_\cX}}\big)$ is a metric space. To simplify notation, we henceforth denote $\rho_{k_\cX}$ by $\rho_\cX$. 

\subsection{Generalized MMD}\label{subsec:gmmd}

The GMMD is defined as follows. Throughout we assume that $\cX$ and $\cY$ are compact with diameters bounded by $K$, and specialize to the case of $p=1$.
\begin{definition}[Generalized MMD between Metric Measure Spaces]
Let $k_\cX$ and $k_\cY$ be characteristic kernels on $\cX$ and $\cY$, respectively. The GMMD between $(\cX,d_{\cX},P)$ and $(\cY,d_{\cY},Q)$ is 
\begin{align*}
   &\mathsf{UD}^{\cF,
    \cG}_{k_\cX,k_\cY}(P\|Q):= \inf_{\substack{f\in\cF\\g\in\cG}}   \Delta_1(f,g;P,Q)
    + \lambda_x \mathsf{MMD}_{\cX}( P, g_{\sharp }Q)+ \lambda_y \mathsf{MMD}_{\cY}(f_\sharp  P, Q).
\end{align*}
\end{definition}
\begin{remark}[GMMD is a divergence]
MMDs w.r.t.~characteristic kernels are metrics (in particular, divergences) on the space of Borel probability measures. Proposition \ref{prop:divergence} thus implies that GMMD is a divergence. Further, under the mild condition that $k_\cX,k_\cY$ are bounded from above, MMDs are weakly continuous. Hence, weak continuity of GMMD also follows from the proposition, given that $\cF,\cG$ are compact.
\end{remark}


\begin{remark}[Kernels specify GMMD]
GMMD can be fully specified by the kernels if one defines the mm  spaces using the kernel induced metrics $\rho_\cX$ and $\rho_\cY$. 
\end{remark}

\subsection{GMMD Empirical Estimation Rates}\label{subsec:statistical}

We now study the convergence rate of the two-sample plugin estimator of GMMD. Let $(X_i)_{i=1}^n$ and $(Y_i)_{i=1}^m$ be i.i.d. samples from $P$ and $Q$, respectively. Denote by $P_n:=n^{-1}\sum_{i=1}^n\delta_{X_i}$ and $Q_m:=m^{-1}\sum_{i=1}^m\delta_{Y_i}$ the empirical measures associated with these samples. Since GMMD is weakly continuous (for compact $\cF$ and $\cG$), we immediately have $\mathsf{UD}^{\cF,     \cG}_{k_\cX,k_\cY}(P_n\|Q_m)\to \mathsf{UD}^{\cF,     \cG}_{k_\cX,k_\cY}(P\|Q)$ as $n,m\to\infty$ a.s. The focus of this section is the rate at which this convergence happens.

\begin{theorem}\label{thm:convergence}
Suppose $k_\cX,k_\cY$ are uniformly bounded by a constant $C$, and the diameters of $\cX$ and $\cY$ are bounded by $K$. Further suppose that $\cF$ and $\cG$ are compact in $d_\cF$ and $d_\cG$, respectively. Then
\begin{align*}
    \EE\left[\left|\mathsf{UD}^{\cF,
    \cG}_{k_\cX,k_\cY}(P\|Q)-\mathsf{UD}^{\cF,
    \cG}_{k_\cX,k_\cY}(P_n\|Q_m)\right|\right]
  &\lesssim \lambda_y \delta_n(\cF_{k_{\cY}} ) + \lambda_x\delta_m(\cG_{k_{\cX}} ) + \delta_{n,m}(\cF,\cG)\\
    &\qquad\quad+ \lambda_x C^{\frac 12}n^{-\frac 12}+\lambda_yC^{\frac 12}m^{-\frac 12} + K(n\wedge m)^{-1}
\end{align*}
where $\cF_{k_{\cY}}:=\{k_\cY\circ(f,f):\,f\in\cF\}$ and
\begin{align*}
    \delta_n(\cF_{k_{\cY}} )\mspace{-2mu}&:=\mspace{-2mu} \inf_{\alpha>0}\mspace{-2mu} \left(\mspace{-1mu}\alpha \mspace{-1mu}+\mspace{-1mu} \frac{1}{n} \int_\alpha^{2C}\mspace{-12mu} \log\big(N(\cF_{k_{\cY}} ,\|\mspace{-2mu}\cdot\mspace{-2mu}\|_\infty,\tau)\big) \dd \tau \right)^{\mspace{-4mu}\frac 12}\mspace{-8mu},
\end{align*}
with $\cG_{k_{\cX}}$ and $\delta_m(\cG_{k_{\cX}})$ defined analogously, and 
\begin{align*}
   \delta_{n,m}(\cF,\cG)\mspace{-2mu}:= \mspace{-2mu} \inf_{\alpha>0}\mspace{-2mu} \Bigg(&\mspace{-2mu}\alpha\mspace{-2mu}+\mspace{-2mu} \frac{1}{\sqrt{n\wedge m}}\mspace{-3mu}\int_\alpha^K \mspace{-5mu}\Big(\mspace{-3mu}\log\big(N(\cF,d_\cF,\tau)\big)+\log\big(N(\cG,d_\cG,\tau)\big)\Big)^{\frac 12} \dd \tau \Bigg).
\end{align*}


\end{theorem}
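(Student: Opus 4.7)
The plan is to turn a difference of infima into a supremum of differences via $|\inf_z F_1(z) - \inf_z F_2(z)| \le \sup_z |F_1(z) - F_2(z)|$, applied with $z=(f,g)\in\cF\times\cG$. Combined with the triangle inequality for the three summands inside the infimum, this yields
\[
\big|\mathsf{UD}^{\cF,\cG}_{k_\cX,k_\cY}(P\|Q) - \mathsf{UD}^{\cF,\cG}_{k_\cX,k_\cY}(P_n\|Q_m)\big| \le T_\Delta + \lambda_x T_x + \lambda_y T_y,
\]
where $T_\Delta := \sup_{f,g}|\Delta_1(f,g;P,Q) - \Delta_1(f,g;P_n,Q_m)|$, $T_x := \sup_g |\mathsf{MMD}_\cX(P,g_\sharp Q) - \mathsf{MMD}_\cX(P_n,g_\sharp Q_m)|$, and $T_y$ is defined analogously. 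Each summand is then handled by classical empirical-process tools.

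\textbf{MMD terms.} The reverse triangle inequality for $\mathsf{MMD}_\cX$ (as an RKHS norm) gives $T_x \le \mathsf{MMD}_\cX(P,P_n) + \sup_{g\in\cG}\mathsf{MMD}_\cX(g_\sharp Q,g_\sharp Q_m)$. The first term has expectation $\le (C/n)^{1/2}$ by the standard bound for bounded kernels. For the second, expanding the squared MMD yields $\mathsf{MMD}_\cX^2(g_\sharp Q,g_\sharp Q_m) = (Q-Q_m)^{\otimes 2}[k_\cX\circ(g,g)]$, which is a \emph{degenerate} U-statistic in $y$ (the Hoeffding linear term vanishes because $(Q-Q_m)[1]=0$). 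Jensen's inequality gives $\EE\sup_g\mathsf{MMD}_\cX(g_\sharp Q,g_\sharp Q_m) \le (\EE\sup_g\mathsf{MMD}_\cX^2)^{1/2}$, and I would bound the inner U-process supremum by a decoupling argument for degenerate U-statistics combined with Dudley's entropy integral against the transformed class $\cG_{k_\cX}$, producing $\delta_m(\cG_{k_\cX})^2$ before taking the square root. The analogous analysis of $T_y$ produces $\delta_n(\cF_{k_\cY}) + (C/m)^{1/2}$.

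\textbf{Distortion term.} Decompose $T_\Delta \le T_\cX + T_\cY + T_{\cX,\cY}$ along the three pieces of $\Delta_1$. For the cross term, the integrand $\Phi_{f,g}(x,y):=|d_\cX(x,g(y))-d_\cY(f(x),y)|$ is bounded by $K$ and $1$-Lipschitz in $(f,g)$ under the product metric $d_\cF+d_\cG$ (two applications of the metric triangle inequality). Telescoping $P\otimes Q - P_n\otimes Q_m = (P-P_n)\otimes Q + P_n\otimes(Q-Q_m)$ reduces to two one-sided empirical processes, each controlled by symmetrization and Dudley's integral against a joint cover of $\cF\times\cG$; the smaller sample size drives the $1/\sqrt{n\wedge m}$ prefactor in $\delta_{n,m}(\cF,\cG)$, and covering of the product class by the sum of covering entropies explains the $(\log N(\cF)+\log N(\cG))^{1/2}$ integrand. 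For $T_\cX$, since the diagonal vanishes, $\Delta^{(1)}_\cX(f;P_n) = n^{-2}\sum_{i\ne j}\Phi_f(X_i,X_j)$ coincides up to a factor $(n-1)/n$ with the corresponding U-statistic; the factor loss contributes a bias $\le K/n$, while the centered U-statistic supremum is absorbed into $\delta_{n,m}(\cF,\cG)$ via Hoeffding decomposition and chaining in $d_\cF$. The symmetric treatment of $T_\cY$ contributes $K/m$, together giving $K(n\wedge m)^{-1}$.

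\textbf{Main obstacle.} The technical heart of the proof is the control of the two degenerate U-process suprema $\EE\sup_{g\in\cG}(Q-Q_m)^{\otimes 2}[k_\cX\circ(g,g)]$ and its $\cF$-analogue: a naive Hoeffding bound is too weak, and the natural covering geometry is with respect to the transformed classes $\cG_{k_\cX},\cF_{k_\cY}$ rather than $\cG,\cF$ themselves. After a decoupling step that replaces the diagonal of $Q_m\otimes Q_m$ by an independent copy, I would chain against covers of $\cG_{k_\cX}$ in $\|\cdot\|_\infty$; the degeneracy is precisely what permits the $1/m$ (rather than $1/\sqrt{m}$) prefactor inside the Dudley integral defining $\delta_m$, and the outer square root in $\delta_m$ reflects the Jensen step that converts a bound on $\mathsf{MMD}^2$ into one on $\mathsf{MMD}$.
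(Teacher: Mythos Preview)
Your proposal is correct and mirrors the paper's proof architecture: pass to $\sup_{f,g}$ of the term-wise difference, isolate the two MMD pieces and the three distortion pieces, and bound each by chaining. The technical routes differ only in flavor. For the uniform MMD deviation the paper symmetrizes first to a Rademacher sum in $\cH_\cY$, expands the squared norm, and invokes a Rademacher chaos complexity bound on $\cF_{k_\cY}$; your direct expansion of $\MMD^2$ as a degenerate V-statistic followed by decoupling lands on the same entropy integral $\delta_n(\cF_{k_\cY})$. For $\Delta^{(1)}_{\cX,\cY}$ the paper uses a cycling trick that rewrites the $n\times m$ double sum as an average over $m$ batches of $n$ i.i.d.\ pairs from $P\otimes Q$, reducing to a single one-sample empirical process; your telescoping $(P-P_n)\otimes Q + P_n\otimes(Q-Q_m)$ is an equally valid and arguably more standard alternative. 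For $\Delta^{(1)}_\cX$ and $\Delta^{(1)}_\cY$ the paper applies one-sided decoupling (de la Pe\~na--Gin\'e) to the centered U-statistic rather than a Hoeffding decomposition, which avoids treating the degenerate remainder separately.
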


Theorem \ref{thm:convergence} bounds the estimation error for general function classes $\cF$ and $\cG$ in terms of the appropriate entropy integrals. The proof is given in Appendix \ref{app:thm_convergence} and relies on standard chaining arguments and bounds on Rademacher chaos complexity (cf. e.g, \citet{sriperumbudur2016optimal}). In general, these entropy integrals cannot be further simplified due to the dependence on the arbitrary classes $\cF$ and $\cG$. Nevertheless, as discussed next, Theorem \ref{thm:convergence} can be instantiated to obtain explicit rates for particular function classes of interest.

\begin{remark}[Special cases]
Corollary \ref{cor:convergence} in Appendix \ref{app:cor} instantiates the result for two important cases: when the function classes are (i) Lipschitz and (ii) parametric. We obtain a convergence rate of $O\big((1/n)^{1/2d_x}+(1/m)^{1/2d_y}\big)$ 
for the Lipschitz case when $d_x,d_y>2$, 
and $O\big((n\wedge m)^{-1/2}\big)$ 
for the parametric case. The latter is of particular practical interest, as NNs offer a convenient and trainable model for the bidirectional maps (see next section).     
\end{remark}

\section{NUMERICAL EXPERIMENTS}\label{sec:exp}
We present applications of GMMD in shape matching. We parametrize the bidirectional maps $f,g$ as neural networks with parameters $\theta$ and $\psi$ respectively. Algorithm \ref{alg:cyclegan} (Appendix \ref{app:exp}) summarizes the optimization of the GMMD objective as function of $\theta$ and $\psi$. All experiments are run on the same machine with 4 core CPUs and a Tesla T4 GPU. The examples below highlight the qualitative and quantitative behavior of GMMD, illustrating the fact that GMMD is applicable to the same tasks as classical methods such as GW and UGW for finding correspondences. Further, GMMD amortizes the computational cost, as it results in continuous mappings that generalize to unseen datapoints drawn from the same distributions.
\begin{figure*}[ht!]
\hspace{-3.5mm}
\centering
 \begin{subfigure}[t]{0.23\textwidth}
 \centering
  \includegraphics[width=0.8\linewidth]{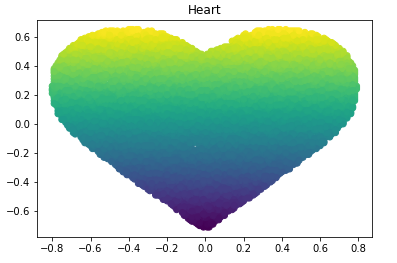}
   \vspace{-2mm}
   \caption{Heart shape ($P$)}
      \label{fig:heart}
   \vspace{-1mm}
  \end{subfigure}
 \ \hspace{0.3mm}
 \begin{subfigure}[t]{0.23\textwidth}
 \centering
 \includegraphics[width=0.8\linewidth]{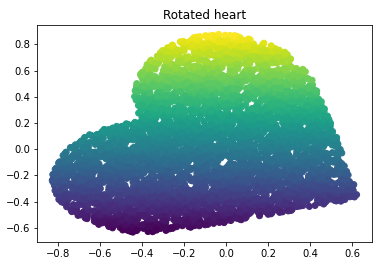}
    \vspace{-2mm}
   \caption{Rotation ($Q_b$)}
   \label{fig:rotation}
 \end{subfigure}
 \ \hspace{0.3mm}
 \begin{subfigure}[t]{0.23\textwidth}
 \centering
  \includegraphics[width=0.8\linewidth]{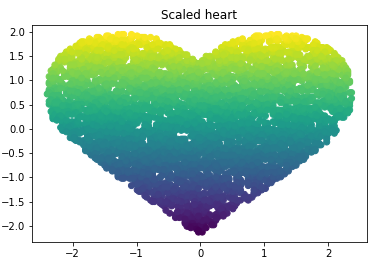}
   \vspace{-2mm}
   \caption{Scaling ($Q_c$)} 
     \label{fig:scaling}
   \vspace{-1mm}
 \end{subfigure}
  \ \hspace{-0.2 mm}
 \begin{subfigure}[t]{0.23\textwidth}
 \centering
  \includegraphics[width=0.8\linewidth]{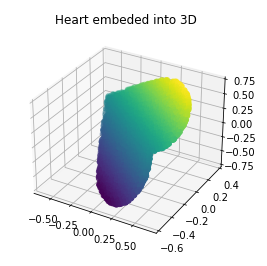}
   \vspace{-2mm}
   \caption{3D embedding ($Q_d$)}
      \label{fig:3D}
   \vspace{-1mm}
 \end{subfigure}
 \ 
  \caption{Heart shape and its transformations. }
  \label{fig:hearts}
\vspace{-2mm}
\end{figure*}

\begin{figure*}[!t]
\hspace{-3.5mm}
\centering
 \begin{subfigure}[t]{0.28\textwidth}
 \centering
  \includegraphics[width=1.1\linewidth]{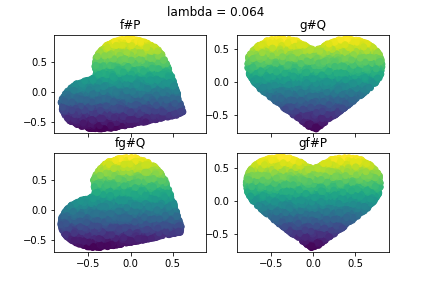}
   \vspace{-5mm}
   \caption{GMMD: $P$ vs. $Q_b$}
 \end{subfigure}
 \ \hspace{5mm}
 \begin{subfigure}[t]{0.28\textwidth}
 \centering
  \includegraphics[width=1.1\linewidth]{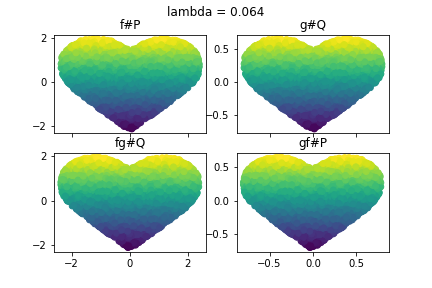}
   \vspace{-5mm}
 \caption{GMMD: $P$ vs. $Q_c$}
 \end{subfigure}
 \ \hspace{5mm}
 \begin{subfigure}[t]{0.28\textwidth}
 \centering
  \includegraphics[width=1.1\linewidth]{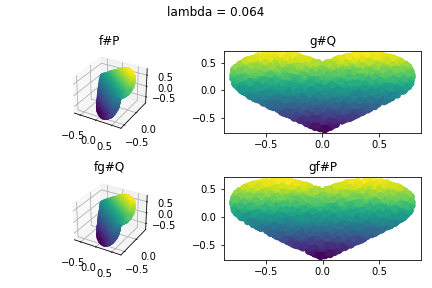}
   \vspace{-5mm}
   \caption{GMMD: $P$ vs. $Q_d$}
 \end{subfigure}
 \ \hspace{5mm}
 \ \hspace{5mm}
 \begin{subfigure}[t]{0.28\textwidth}
 \centering
 \includegraphics[width=1.1\linewidth]{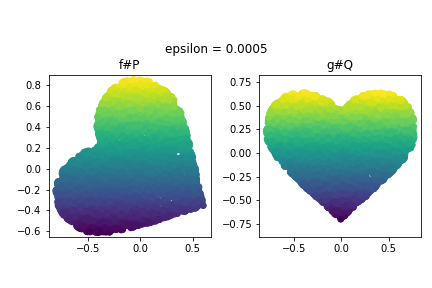}
     \vspace{-7mm}
  \caption{GW : $P$ vs. $Q_b$.}
 \end{subfigure}
 \ \hspace{5mm}
 \begin{subfigure}[t]{0.28\textwidth}
 \centering
 \includegraphics[width=1.1\linewidth]{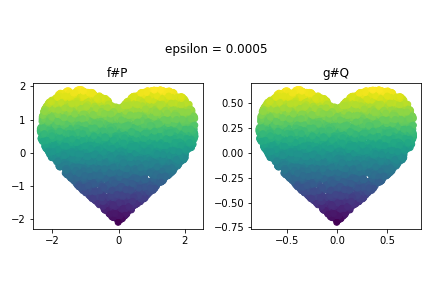}
    \vspace{-7mm}
    \caption{GW : $P$ vs. $Q_c$.}
 \end{subfigure}
 \ \hspace{5mm}
\centering
 \begin{subfigure}[t]{0.28\textwidth}
 \centering
 \includegraphics[width=1.1\linewidth]{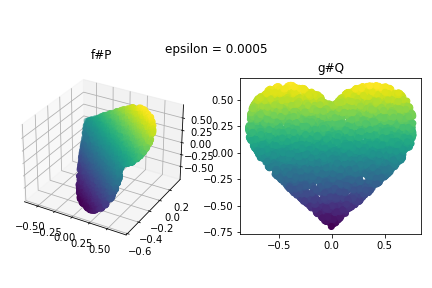}
    \vspace{-7mm}
    \caption{GW : $P$ vs. $Q_d$.}
 \end{subfigure} 
  \caption{First row: learned continuous GMMD Mappings  and their cycle consistency in shape matching. Second row: discrete entropic GW Barycentric Mappings. The color code in the heatmaps is coordinate based.}
  \label{fig:heart_mapping}
\vspace{-2mm}
\end{figure*}

\subsection{GMMD For Shape Matching}
We consider here  matching of synthetic shapes, specifically  a 2-dimensional heart shape given in Figure \ref{fig:hearts}(\subref{fig:heart})   and its transformations through rotation (\subref{fig:rotation}), scaling (\subref{fig:scaling}) and isometrically embedding into 3-dimensional space (\subref{fig:3D}). The data is generated via sampling $n=4000$ points for each shape. 
The distributions for each matching experiment are the empirical measures induced by these samples, with $P$ corresponding to \ref{fig:hearts}(\subref{fig:heart}) and $Q_b$, $Q_c$, and $Q_d$ corresponding to subfigures (\subref{fig:rotation}), (\subref{fig:scaling}), and (\subref{fig:3D}), respectively (the subscript is suppressed when we simultaneously refer to several experiments).

For matching experiments, we compute GMMD using Algorithm \ref{alg:cyclegan} for $\lambda_x=\lambda_y=\lambda^{-1}$, where $ \lambda\in\{2^{i}\times10^{-3}:i=0,\cdots,9\}$. We use a uniform mixture of Gaussian kernels to define $\mathsf{MMD}_{\cX}$ and $\mathsf{MMD}_{\cY}$  and use kernel induced metrics $\rho_{\cX}$, and $\rho_{\cY}$ in the distortion $\Delta_1$. The bandwidths we used for the Gaussian kernels are median of the metric $\times\{.0001,.001,.01,.05,.25,1,4,20,100,1000\}$. The architecture of the bidirectional maps $f$ and $g$ is a 3-layer ReLU NN with 200 neurons each, and an output dimension matching the target distribution dimension. We use Adam optimizer \citep{kingma2014adam} for 3000 epochs with a learning rate $10^{-3}$. 

\begin{figure*}[ht!]
\captionsetup[subfigure]{justification=centering}
\hspace{-19mm}
\centering
 \begin{subfigure}[t]{0.3\textwidth}
 \centering
  \includegraphics[width=1.3\linewidth]{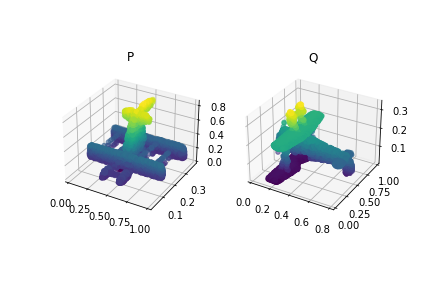}
   \vspace{-10mm}
   \caption{Original $P$ and $Q$.}
 \end{subfigure}
 \ \hspace{2mm}
 \begin{subfigure}[t]{0.3\textwidth}
 \centering
  \includegraphics[width=1.3\linewidth]{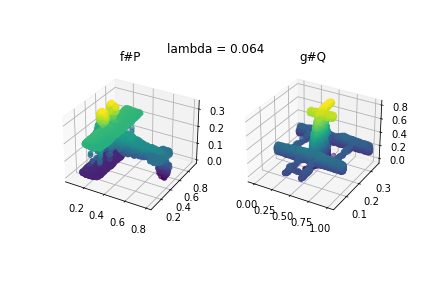}
   \vspace{-10mm}
 \caption{GMMD maps.}
 \end{subfigure}
 \ \hspace{2mm}
 \begin{subfigure}[t]{0.3\textwidth}
 \centering
  \includegraphics[width=1.3\linewidth]{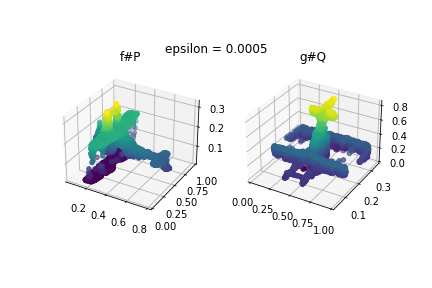}
   \vspace{-10mm}
   \caption{GW barycentric maps.}
 \end{subfigure}
  \caption{Matching 3D shapes with GMMD and entropic GW. }
  \label{fig:biplanes_mapping}
\vspace{-2mm}
\end{figure*}
In Figure \ref{fig:heart_mapping}, the first row corresponds to GMMD matching for $\lambda=0.064$. For each case, we see that the learned bidirectional  maps of  GMMD successfully perform the matching, i.e., $f_{\sharp }P \approx Q$ and $g_{\sharp }Q \approx P$. We also confirm that they satisfy the cycle consistency property, i.e.,  $f\circ g\approx \mathrm{id}_{\cY}$ and $g\circ f\approx \mathrm{id}_{\cX}$. The second row shows entropic GW matchings \citep{pmlr-v48-peyre16}. We use the POT library \citep{flamary2021pot} to perform discrete entropic GW for an entropic regularization parameter $\varepsilon=5e^{-4}$. Note that entropic GW results in a coupling matrix $\pi$. To obtain discrete mappings of the points we employ barycentric maps \citep{barycentricmaps}, i.e., $\tilde{f}(x_i):=(\sum_{j=1}^n \pi_{ij})^{-1}\sum_{j=1}^n\pi_{ij}y_j$ and $\tilde{g}(y_j):=(\sum_{i=1}^n \pi_{ij})^{-1}\sum_{i=1}^n \pi_{ij}x_i$. 

We see from Figure \ref{fig:heart_mapping} that the GMMD continuous maps and the discrete barycentric maps induced by the GW coupling are on par qualitatively in these matching tasks. To confirm this quantitatively, we consider the matching of the heart shape and its rotation (Figure \ref{fig:hearts}(\subref{fig:rotation})) since for this case an isometry exists, i.e., there are $f^\star,g^\star$ with $\Delta_1(f^\star,g^\star;P,Q_b)=0$. Tables \ref{tab:gmmd_rotate} and \ref{tab:gw_rotate1} state the values of $\mathsf{MMD}_{\cY}(f_{\sharp }P,Q_b)$, $\mathsf{MMD}_{\cX}(P,g_{\sharp }Q_b)$, and $\Delta(f,g;P,Q_b)$ across different regularization parameters for the GMMD and GW-based mappings, respectively. We see that GMMD and GW indeed result in small MMD and distortions values. GMMD yields a smaller distortion than GW. Note that we also have evaluated UGW \citep{sejourne2020unbalanced} with the code provided by the authors and found that it is sensitive to hyper-parameters choice, and did not result in an accurate matching on the considered tasks. We think that more tuning is needed for UGW. Additional results and ablation on regularization parameters and shapes are given in Appendix~\ref{app:exp}.

Figure \ref{fig:biplanes_mapping} presents a more complex matching of 3D shapes that consist in two different biplanes models from the Princeton Shape benchmark \citep{shilane2004princeton} (for $n=8000$). We see that GMMD and GW are also on par and that the GMMD bidirectional maps result in less outliers than barycentric GW-based maps. This robustness of GMMD is due to the use of kernel induced metrics. Quantitative evaluation is  presented in Appendix \ref{app:exp}.

\begin{table}[htbp]
\caption{Evaluating GMMD's mappings for $P$ vs. $Q_b$.}
\begin{center}
\begin{tabular}{|l|l|l|l|l|}
\hline
\textbf{$\lambda$}  &\textbf{$\GMMD$}&\textbf{$\MMD_\cX$}&\textbf{$\MMD_\cY$}&\textbf{$\Delta$} \\
\hline
0.256  &0.0794     &0.0294    &0.0294    &0.0801\\
\hline
0.128  &0.00825   &4.94e-4   &4.05e-4  &0.0574  \\
\hline
0.064  &0.00776   &0.00227   &0.00190   &0.0560\\
\hline
0.032  &0.0924    &2.90e-4  &0.00386   &2.76\\
\hline
\end{tabular}
\end{center}
\label{tab:gmmd_rotate}
\vskip -0.2in
\end{table}

\begin{table}[htbp]
\caption{GW barycentric maps for $P$ vs. $Q_b$.}
\begin{center}
\begin{tabular}{|l|l|l|l|l|}
\hline
\textbf{$\epsilon$}  &\textbf{$\GW$}&\textbf{$\MMD_\cX$}&\textbf{$\MMD_\cY$}&\textbf{$\Delta$} \\
\hline
0.0005  &0.00134   &0.00420 &0.00299    &0.696\\
\hline
0.005   &0.00660  &0.127      &0.116      &1.73\\
\hline
0.05    &0.0424   &0.615 &0.613      &6.69\\
\hline
0.5     &0.0686    &3.99       &4.12      &22.9\\
\hline
\end{tabular}
\end{center}
\label{tab:gw_rotate1}
\end{table}

\subsection{GMMD Amortization and Generalization} 
For the biplane matching experiment with sample size $n=8000$ from each distribution, Table~\ref{tab:runtime} reports the training time for GMMD and the runtime for GW and UGW computings. The computational complexity of training GMMD maps amounts to the complexity of gradient descent in NN training for $3000$ epochs, which is $O(n)$. For entropic GW and UGW, however, the implementations are variants of the Sinkhorn algorithm, whose complexity scales as $O(n^3)$. The longer training time for GMMD is due to the large number of epochs used in gradient descent (namely, $3000$), but at inference time this cost is amortized since we obtain continuous maps that generalize to unseen datapoints (see Appendix \ref{app:exp} for quantitative evaluation of the generalization). For instance, matching 8000 new datapoints sampled from $P$ and $Q$ each using the learned mapping requires 63 ms, while with GW one would incur the cost of recomputing the coupling (26 minutes)---a three order of magnitude speedup.   


\begin{table}[htbp]
\caption{Training Time (in seconds) comparison using 8000 samples from the biplanes data.}
\begin{center}
\begin{tabular}{|l|l||l|l|}
\hline
\textbf{$\epsilon$}  &\textbf{$\GW$}&\textbf{$\lambda$}&\textbf{$\GMMD$}\\
\hline
   0.0005  &1566.86&0.002     &5048.11\\




\hline
    \textbf{$\epsilon$} & \textbf{$\UGW$}  &0.256  &5026.5\\
\hline
    0.1     &28.7508 &0.064  &5052.89\\
\hline
\end{tabular}
\end{center}
\label{tab:runtime}
\end{table}

\section{CONCLUSION}
This paper introduced the UBGM divergence---a novel discrepancy measure between distributions across heterogeneous spaces, which employs bidirectional and cycle-consistent mappings. We established structural properties of the UBGM divergence and highlighted its intimate connection to the so-called cycle GAN. We also presented a kernelized variant of this divergence, termed GMMD, and analysed its statistical estimation from samples. Numerical experiments demonstrated the promise of this new divergence and compared it to other known metrics, such as the GW and UGW distances. Appealing future directions include extending the GMMD to allow optimization over kernels, sharper statistical bounds, as well as connections between the UBGM divergence and the UGW distance (in particular, under what conditions they coincide).





\bibliographystyle{abbrvnat}
\bibliography{references}
\onecolumn
\appendix

    
    


\section{Proofs}\label{app:proof}
To simplify notation we denote $\cL_{P,Q}(f,g): = \lambda_x\mathsf{MMD}_{\cX}( P, g_{\sharp }Q)+\lambda_y\mathsf{MMD}_{\cY}(f_\sharp  P, Q) +  \Delta_1(f,g;P,Q)$, which is the functional that is optimized in definition of $\mathsf{UD}^{\cF,     \cG}_{k_\cX,k_\cY}$.

\subsection{Proof of Proposition \ref{pro:Dmetric}}\label{app:Dmetric_proof}
The symmetry and positivity follows directly from definition and Lemma \ref{lem:existence}, which is proven below. For the triangle inequality, fix 3 mm spaces $(\cX,d_\cX,P)$, $(\cY,d_\cY,Q)$, $(\cZ,d_\cZ,R)$ and  functions $f_1,f_2,g_1,g_2$ (over the appropriate domains) with $(f_1)_\sharp P=Q$, $(f_2)_\sharp Q=R$, $(g_1)_\sharp Q=P$, $(g_2)_\sharp R=Q$. We only show the derivation for $\Delta^{(p)}_{\cX,\cY}$; a similar argument applies to $\Delta^{(p)}_{\cX},\Delta^{(p)}_{\cY}$. For $\Delta^{(p)}_{\cX,\cY}$, we have
\begin{align*}
    \Delta^{(p)}_{\cX,\cY}(f_1,g_1;P,Q)&+\Delta^{(p)}_{\cX,\cY}(f_2,g_2;Q,R)\\
    &=\Big(\int |d_\cX(x,g_1(y))-d_\cY(f_1(x),y)|^p\dd P(x)\dd Q(y)\Big)^{\frac 1p}\\
    &\qquad\qquad\qquad\qquad+\Big(\int |d_\cY(y,g_2(z))-d_\cZ(f_2(y),z)|^p\dd Q(y)\dd R(z)\Big)^{\frac 1p}\\
    &= \Big(\int |d_\cX(x,g_1(g_2(z)))-d_\cY(f_1(x),g_2(z))|^p\dd P(x)\dd R(z)\Big)^{\frac 1p}\\
    &\qquad\qquad\qquad\qquad+\Big(\int |d_\cY(f_1(x),g_2(z))-d_\cZ(f_2(f_1(x)),z)|^p\dd P(x)\dd R(z)\Big)^{\frac 1p}\\
    &\geq \Big(\int|d_\cX(x,g_1(g_2(z)))-d_\cZ(f_2(f_1(x)),z)|^p\dd P(x)\dd R(z)\Big)^{\frac 1p}\\
    &=\Delta^{(p)}_{\cX,\cY}(f_2\circ f_1, g_1\circ g_2;P,R).
\end{align*}
Hence $\mathsf{D}_p$ is a metric on $\mM$. 

\subsection{Proof of Lemma \ref{lem:existence}}\label{app:existence_proof}

Suppose $\{f_n\}_{n\in\NN}$ and $\{g_n\}_{n\in\NN}$ are sequence such that $\Delta_p(f_n,g_n;P,Q)\rightarrow0$. We will show that up to extracting subsequences, these sequences converge $P\otimes Q$ a.s. to isometrics, $f$ and $g$, respectively, such that $f=g^{-1}$. The argument first shows that there is a countable dense $S\subseteq\cX$ such that the distortion function $\phi_n$ (defined below) converges on $S\times S$ to 0. Then we take a subsequence of $f_n$ that converges on $S$, and show that this subsequence also converges $P$-a.s. on $\cX$, and the limit is an isometry. After applying the same to $\{g_n\}_{n\in\NN}$, we conclude the desired convergence and demonstrate that the limits $f$ and $g$ satisfy $f=g^{-1}$.

We first consider the term $\Delta^{(p)}_{\cX}(f_n;P)$. Since 
\[\int \big|d_\cX(x,x')-d_\cY\big(f_n(x),f_n(x')\big)\big|^p\dd P(x)\dd P(x')\rightarrow0,\]
we may assume that, up to extraction of subsequences, we have 
\[\phi_n(x,x'):=\big|d_\cX(x,x')-d_\cY\big(f_n(x),f_n(x')\big)\big|\rightarrow0,\quad P^{\otimes 2}-a.s.\]
Set $\Omega=\{(x,x'):\phi_n(x,x')\rightarrow0\}$ as the set of pairs for which the convergence occurs, and let $\Omega_x=\{x':(x,x')\in\Omega\}$ be the slice at $x\in\cX$ in the first coordinate. Then $P^{\otimes 2}((\cX\times\cX)-\Omega\big)=0$, and by Fubini's theorem 
\[\int_\cX P(\Omega_x)\dd P(x)=P^{\otimes 2}(\Omega)=1. \]
Denoting $A=\{x: P(
\Omega_x) =1 \}$, we thus have $P(A)=1$, and hence $A$ is dense. We next construct $S$ as a countable dense subset of $A$.

\underline{Step 1 -- Separability of convergence points:} 
We present an inductive construction of a countable dense subset $S\subset \cX$ such that $\phi_n$ converges to 0 on $S\times S$. First take any $x_0\in A$ and define \[S_0:=\big\{x':\phi_n(x_0,x')\text{ does not converge to 0}\big\}=\cX-\Omega_{x_0},\]
then $P(S_0)=0$ since $x_0\in A$. Suppose we have points $x_0,...,x_k\in A$, such that $\phi(x_i,x_j)\rightarrow0$ for $i,j=0,\ldots,k$, and define $S_i = \cX-\Omega_{x_i}$ for $i=0,\ldots,k$. Define function $$\psi_k(x):=\min_{0\leq i\leq k}\{d_\cX(x,x_i)\},$$ and set $w_k=\argmax \psi_k(x)$. Suppose $\psi_k(w_k)>0$, otherwise $x_0,...,x_k$ is already dense. Since $\psi_k(x)$ is continuous, $B_k=\{\psi_k(x)> \psi_k(w_k)/2\}$ is a nonempty open set on $\cX$. Notice that set $C_k=A-\cup_{0\leq i\leq k} S_i$ still have probability 1, and any point $x'\in C_k$ satisfies that $\phi_n(x',x_i)$ converges for all $i=0,...,k$. Since $P$ has full support, $B_k\cap C_k$ is not empty, hence we pick $x_{k+1}\in B_k\cap C_k$. Inductively we have sequence $\{x_k\}_{k\in\NN}$ such that $\phi_n(x_i,x_j)$ converges for any $i,j\in\NN$. Denote $S=\{x_k\}_{k\in\NN}$.

Now we prove that $S$ is dense in $\cX$. Suppose it is not, then there is an $\epsilon>0$ and an $\tilde{x}\in\cX$ such that $d_\cX(\tilde{x},x_k)>\epsilon$, for all $k\in\NN$. So $\psi_k(w_k)\geq \psi_k(\tilde{x})>\epsilon$. By construction, $d_\cX(x_{k+1},x_i)\geq\psi_k(w_k)/2\geq \epsilon/2$, for all $i\leq k$, so $d_\cX(x_i,x_j)\geq \epsilon/2$ for any $i\neq j$. This is a contradiction since $\cX$ is compact.

\underline{Step 2 -- Convergence to isometry:} Next we find a subsequence of $\{f_n\}_{n\in\NN}$ such that it converges on $S$ to an isometry $f$, and extend this convergence to a.s. on $\cX$. Now we have a countable set $S\subseteq A$ that is dense in $\cX$ such that $\phi_n(s,t)\rightarrow0$, $\forall s,t\in S$. We can thus take a subsequence of $\{f_n\}_{n\in\NN}$ such that it converges on $S$ pointwise to a mapping $f$. Without loss of generality (WLOG) we assume $f_n$ converges, for any $s\in S$, as any subsequence still approaches infinum and the subsequent $\phi_n$ still converges to 0 on $S\times S$. Since $\lim_{n\to\infty}\big|d_\cX(s,t)-d_\cY\big(f_n(s),f_n(t)\big)\big|=0$, by continuity we have $d_\cX(s,t) = d_\cY(f(s),f(t))$. For any $x\not\in S$, fix a sequence 
$\{s_\ell\}_{\ell\in\NN}\subseteq S$ with $s_\ell\rightarrow x$, and define $$f(x):=\lim_{\ell\to\infty} f(s_\ell).$$ So $$d_\cY\big(f(x),f(x')\big)=\lim_{\ell\to\infty} d\big(f(s_\ell),f(t_\ell)\big) = \lim_{\ell\to\infty} d_\cX(s_\ell,t_\ell)=d_\cX(x,x')$$ for $x,x'\in \cX$, and $s_\ell\rightarrow x$, $t_\ell\rightarrow x'$. So $f$ is extended to an isometry on $\cX$. 

Now consider any $x\in C=\cap_{s\in S} \Omega_{s}$, where $P(C)=1$. Clearly for all $s\in S$, 
$$\lim_{n\to\infty} d_\cY\big(f_n(x),f_n(s)\big)=d_\cX(x,s).$$ We have a sequence $\{s_\ell\}_{\ell\in\NN}$ in $S$ such that $s_\ell\rightarrow x$, and
\begin{align*}
    d_\cY\big(f_n(x),f(x)\big) &\leq d_\cY\big(f_n(x),f_n(s_\ell)\big) + d_\cY\big(f_n(s_\ell),f(s_\ell)\big) + d_\cY\big(f(s_\ell),f(x)\big),
\end{align*}
which is true for all $\ell$. Fix $\ell$, and take upper limit in $n$, we have
\begin{align*}
    \limsup_n d_\cY\big(f_n(x),f(x)\big)\leq 2d_\cX(s_\ell,x),
\end{align*}
which holds for all $\ell$. Then we can take $\ell\to \infty$ which shows that $\lim_{n\to\infty} f_n(x) = f(x)$, i.e. $f_n$ converges on $C$. So $f_n$ converges to $f$ $P$-a.s. Similarly, via subsequence extraction, we can find $g_n$ that also converges $Q$-a.s. to an isometry $g$. As $\cX,\cY$ are compact, the limits $f$ and $g$ are both surjective and have inverses.

Now consider the third term, i.e., $\int \big|d_\cX\big(x,g_n(y)\big)-d_\cY\big(f_n(x),y\big)\big|^p\dd P(x)\dd Q(y)\rightarrow0$. Since $\cX,\cY$ are bounded, by dominated convergence theorem we have
\begin{align*}
    &\int \big|d_\cX\big(x,g(y)\big)-d_\cY\big(f(x),y\big)\big|^p\dd P(x)\dd Q(y) \\
    &= \lim_n \int \big|d_\cX\big(x,g_n(y)\big)-d_\cY\big(f_n(x),y\big)\big|^p\dd P(x)\dd Q(y)
    =0.
\end{align*}
Thus $d_\cX(x,x')=d_\cX\big(g\circ f(x),x'\big)$ holds $P\otimes g_\sharp Q$-a.s., hence holds densely on $\cX\times\cX$. By continuity this holds for all $\cX\times\cX$. So $g\circ f (x)=x$, i.e. $f=g^{-1}$.

\subsection{Proof of Proposition \ref{prop:divergence}}\label{app:divergence_proof}




Non-negativity of $\mathsf{UD}^{\cF,\cG}_p(P\|Q)$ is immediate. The fact that it nullifies when the mm spaces are equivalent, as specified in Definition \ref{DEF:equivalence}, is also straightforward. For the opposite implication, let $(\cX,d_{\cX},P)$ and $(\cY,d_{\cY},Q)$ be mm spaces such that $\mathsf{UD}^{\cF,\cG}_p(P\|Q)=0$. Since all summands in the definition of $\mathsf{UD}^{\cF,\cG}_p(P\|Q)$ are non-negative we have that
\[\inf_{f\in\cF,g\in\cG} \Delta_p(f,g;P,Q)=0.\]
By Lemma \ref{lem:existence}, there exist infimizing sequences $\{f_n\}_{n\in\NN}\subset\cF$ and $\{g_n\}_{n\in\NN}\subset\cG$ that converge $P,Q$-a.s. to isometries $f:\cX\to\cY$ and $g:\cY\to\cX$, respectively. This further implies weak convergence of the pushforward measures, i.e., $(f_n)_\sharp P\stackrel{w}{\to} f_\sharp  P$ and $(g_n)_\sharp Q\stackrel{w}{\to} g_\sharp  Q$.
In fact, for any bounded continuous function $\phi$ on $\cY$, $\phi\circ f_n$ converges to $\phi\circ f$ $P$-a.s. Consequently, we have 
\[\int\phi(f_n(x))\dd P(x) \to \int\phi(f(x))\dd P(x),\] 
and hence $(f_n)_\sharp P\stackrel{w}{\to} f_\sharp  P$ (the argument for $g_n$ is analogous). Since $\sD_{\cX}$ and $\sD_{\cY}$ are weakly continuous in their arguments, we have
\begin{align*}
    0&=\lim_{n\to\infty} \Delta_p(f_n,g_n;P,Q)+\sD_{\cX}\big((g_n)_\sharp Q\big\|P\big)+\sD_{\cY}\big((f_n)_\sharp  P\big\|Q\big)\\
    &=\sD_{\cX}(g_\sharp Q\|P\big)+\sD_{\cY}(f_\sharp  P\|Q),
\end{align*}
which further implies that $\sD_{\cX}(g_\sharp Q\|P)=\sD_{\cY}(f_\sharp P\|Q)=0$. We conclude that  $f_\sharp P=Q$ and $g_\sharp Q=P$ for the isometries $f$ and $g$ that are inverses of each other, which establishes equivalence of the mm space.

We next prove continuity. Suppose $P_n\stackrel{w}{\to}P$ and $Q_n\stackrel{w}{\to}Q$.
For any fixed $f\in\cF$ and $g\in\cG$,
\[\mathsf{UD}^{\cF,\cG}_p(P_n\|Q_n)\leq  \Delta_p(f,g;P_n,Q_n)+ \lambda_x\sD_\cX( g_{\sharp }Q_n\| P_n)+\lambda_y\sD_\cY(f_\sharp  P_n\| Q_n), \]
and by infimizing over $\cF,\cG$, we have
\begin{align*}
    \limsup_{n\to\infty} \mathsf{UD}^{\cF,\cG}_p(P_n\|Q_n) &\leq  \inf_{f\in\cF,g\in\cG}\lim_{n\to\infty} \Delta_p(f,g;P_n,Q_n) +  \lambda_x\sD_\cX( g_{\sharp }Q_n\|P_n )+\lambda_y\sD_\cY(f_\sharp  P_n\| Q_n) \\
    &= \mathsf{UD}^{\cF,\cG}_p(P\|Q).
\end{align*}
Thus $\mathsf{UD}^{\cF,\cG}$ is upper semi-continuous. If further $\cF,\cG$ are both compact, let $f^\star_{n},g^\star_{n}$ 
be minimizers for $\mathsf{UD}^{\cF,\cG}_p(P_n\|Q_n)$. Suppose $\{k_n\}_{n\in\NN}$ is the index sequence of a $\liminf$ subsequence of the sequence $\{\mathsf{UD}^{\cF,\cG}_p(P_n\|Q_n)\}_{n\in\NN}$. Since $\cF,\cG$ are both compact, we may also assume that $\{f^\star_{k_n}\}_{n\in\NN}$ converges in $\cF$, and $\{g^\star_{k_n}\}_{n\in\NN}$ converges in $\cG$. Denote by $f^\star$ and $g^\star$ the limits of $\{f^\star_{k_n}\}_{n\in\NN}$ and $\{g^\star_{k_n}\}_{n\in\NN}$, respectively. Also by Prokhorov's theorem, WLOG we can suppose that $(f^\star_{k_n})_\sharp P_{k_n}$ and $(g^\star_{k_n})_\sharp Q_{k_n}$ both converges weakly. Now we identify their limits. Since $\cF,\cG$ are assumed to be compact in sup-metrics, $\{(f^\star_{k_n},g^\star_{k_n})\}_{n\in\NN}$ converges uniformly, hence for any bounded continuous Lipschitz function $\phi$ on $\cY$, $\phi(f^\star_{k_n}(x))$ converges uniformly to $\phi(f^\star(x))$, hence $$\int \phi(f^\star_{k_n}(x))\dd P_{k_n}(x) \to \int \phi(f^\star(x))\dd P(x).$$ So $(f^\star_{k_n})_\sharp P_{k_n}\stackrel{w}{\to} f^\star_\sharp P$, and similarly $(g^\star_{k_n})_\sharp Q_{k_n}\stackrel{w}{\to} g^\star_\sharp Q$.
So 
\begin{align*}
    \mathsf{UD}^{\cF,\cG}_p(P\|Q)&\leq \Delta_p(f^\star,g^\star;P,Q) + \lambda_x\sD_\cX(  g^\star_{\sharp }Q\|P)+\lambda_y\sD_\cY(f^\star_\sharp  P\| Q) \\
    &=\lim_{n} \Delta_p(f^\star_{k_n},g^\star_{k_n};P_{k_n},Q_{k_n}) + \lambda_x\sD_\cX( (g^\star_{k_n})_{\sharp }Q_{k_n}\| P_{k_n})+\lambda_y\sD_\cY((f^\star_{k_n})_\sharp  P_{k_n}\| Q_{k_n})\\
    &=\liminf_{n}\mathsf{UD}^{\cF,\cG}_p(P_n\|Q_n)\\
    &\leq \limsup_{n}\mathsf{UD}^{\cF,\cG}_p(P_n\|Q_n)\\
    &\leq \mathsf{UD}^{\cF,\cG}_p(P\|Q),
\end{align*}
hence $\lim_{n}\mathsf{UD}^{\cF,\cG}_p(P_n\|Q_n)= \mathsf{UD}^{\cF,\cG}_p(P\|Q)$, as desired.\qed


\subsection{Proof of Theorem \ref{thm:convergence}}\label{app:thm_convergence}

To prove Theorem \ref{thm:convergence} it suffices to upper bound $\EE\big[\sup_{f,g}\big|\cL_{P,Q}(f,g)-\cL_{P_n,Q_m}(f,g)\big|\big]$. We have
\begin{align*}
    \sup_{f,g} \big|\cL_{P,Q}(f,g)-&\cL_{P_n,Q_m}(f,g)\big|\\
    &=  \sup_{f,g}  \Big|  \lambda_x\|\mu_\cX P-\mu_\cX g_\sharp Q\|_{\cH_\cX}-\lambda_x\|\mu_\cX P_n-\mu_\cX g_\sharp Q_m\|_{\cH_\cX}\\
    &\qquad\qquad+ \lambda_y\|\mu_\cY Q - \mu_\cY f_\sharp  P \|_{\cH_\cY} - \lambda_y\|\mu_\cY Q_m - \mu_\cY f_\sharp  P_n \|_{\cH_\cY}\\
    &\qquad\qquad\qquad\qquad+ \Delta_1(f,g;P_n,Q_m)-\Delta_1(f,g;P,Q) \Big|\\
    &\leq \sup_{g}  \lambda_x\Big|  \|\mu_\cX P-\mu_\cX g_\sharp Q\|_{\cH_\cX}-\|\mu_\cX P_n-\mu_\cX g_\sharp Q_m\|_{\cH_\cX}\Big|\\
    &\qquad\qquad+ \sup_{f} \lambda_y\Big|\|\mu_\cY Q - \mu_\cY f_\sharp  P \|_{\cH_\cY} - \|\mu_\cY Q_m - \mu_\cY f_\sharp  P_n \|_{\cH_\cY}\Big|\\
    &\qquad\qquad\qquad\qquad+ \sup_{f,g} \Big|\Delta_1(f,g;P_n,Q_m)-\Delta_1(f,g;P,Q) \Big|\\
    &\leq \sup_{g}   \lambda_x \|\mu_\cX g_\sharp Q-\mu_\cX g_\sharp Q_m\|_{\cH_\cX}+\lambda_x\|\mu_\cX P-\mu_\cX P_n\|_{\cH_\cX} \\
    &\qquad\qquad+ \sup_{f} \lambda_y\| \mu_\cY f_\sharp  P- \mu_\cY f_\sharp  P_n  \|_{\cH_\cY} + \lambda_y\|\mu_\cY Q -\mu_\cY Q_m \|_{\cH_\cY} \\
    &\qquad\qquad\qquad\qquad+ \sup_{f,g} \Big|\Delta_1(f,g;P_n,Q_m)-\Delta_1(f,g;P,Q) \Big|.\numberthis\label{EQ:sup_UB}
\end{align*}
We control each of the terms in the last line via the following technical lemmas (whose proof is deferred to the Appendix \ref{app:Complementary_thm_1}).

\begin{lemma}[Convergence of $\MMD$]\label{lem:mmd_convergence}
For mapping class $\cF$, recall that $\cF_{k_{\cY}}:=\{k_\cY\circ(f,f):\,f\in\cF\}$.
Under the same condition of Theorem \ref{thm:convergence}, we have
\begin{align*}
\EE\left[\sup_f \big\|\mu_\cY f_\sharp P_n - \mu_\cY f_\sharp  P \big\|_{\cH_\cY}\right] \lesssim & \inf_{\alpha>0} \left(\alpha + \frac{1}{n} \int_\alpha^{2C} \log\big(N(\cF_{k_{\cY}} ,\|\cdot\|_\infty,\tau)\big) \dd \tau \right)^{1/2} + \sqrt{\frac{C}{n}}.
\end{align*}
\end{lemma}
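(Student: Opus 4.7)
The strategy is to pass to the squared RKHS norm, recognize it as a $V$-statistic indexed by $\cF_{k_\cY}$, peel off the diagonal, and then bound the resulting centered second-order chaos by a Dudley-type chaining argument.

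\medskip

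\noindent\textbf{Step 1 (Jensen and V-statistic representation).} First, using Jensen's inequality applied to the concave function $\sqrt{\cdot}$,
\[
\EE\Big[\sup_{f\in\cF}\|\mu_\cY f_\sharp P_n - \mu_\cY f_\sharp P\|_{\cH_\cY}\Big] \le \Big(\EE\Big[\sup_{f\in\cF}\|\mu_\cY f_\sharp P_n - \mu_\cY f_\sharp P\|_{\cH_\cY}^{2}\Big]\Big)^{1/2}.
\]
Expanding the squared norm via the reproducing property and writing $h_f(x,x') := k_\cY(f(x),f(x'))$ so that $\{h_f:f\in\cF\}=\cF_{k_\cY}$, I have the identity $\|\mu_\cY f_\sharp P_n - \mu_\cY f_\sharp P\|_{\cH_\cY}^{2} = \int h_f \dd (P_n-P)^{\otimes 2}$. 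Split the leading term $P_n^{\otimes 2}(h_f)$ into diagonal and off-diagonal pieces: the diagonal $n^{-2}\sum_{i} h_f(X_i,X_i)$ is uniformly bounded by $C/n$ because $\|k_\cY\|_\infty \le C$, which after taking square root yields the additive $\sqrt{C/n}$ term in the claimed estimate.

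\medskip

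\noindent\textbf{Step 2 (Hoeffding / degeneracy).} Next I would apply the Hoeffding decomposition to each $h_f$ as $h_f = \EE h_f + h_f^{(1)}\oplus h_f^{(1)} + \bar h_f$, where $h_f^{(1)}(x) := \EE[h_f(x,X')]-\EE h_f$ and $\bar h_f$ is the completely degenerate part (satisfying $\EE[\bar h_f(x,X')]\equiv 0$). Because $(P_n-P)(1)=0$, the constant and linear pieces vanish under $(P_n-P)^{\otimes 2}$, leaving
\[
(P_n-P)^{\otimes 2}(h_f) = (P_n-P)^{\otimes 2}(\bar h_f),
\]
and by the degeneracy of $\bar h_f$ only the $V$-statistic $P_n^{\otimes 2}(\bar h_f)$ survives. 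Thus the problem reduces to controlling $\EE[\sup_{f\in\cF}|P_n^{\otimes 2}(\bar h_f)|]$ (up to the $C/n$ diagonal contribution already collected).

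\medskip

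\noindent\textbf{Step 3 (Symmetrization and Rademacher chaos).} Introducing independent Rademacher signs $\sigma_1,\dots,\sigma_n$, a standard double symmetrization for degenerate $V$-statistics (cf.\ de la Pe\~na-Gin\'e) gives
\[
\EE\Big[\sup_{f\in\cF}|P_n^{\otimes 2}(\bar h_f)|\Big] \lesssim \EE\Big[\sup_{f\in\cF}\frac{1}{n^{2}}\Big|\sum_{i,j}\sigma_i\sigma_j\, \bar h_f(X_i,X_j)\Big|\Big].
\]
Conditional on $(X_i)$, the inner object is a second-order Rademacher chaos process indexed by $\cF_{k_\cY}$. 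I would then apply a Dudley-type entropy integral for such chaos processes (as used, e.g., in Sriperumbudur \citeyearpar{sriperumbudur2016optimal}); crucially, chaos of order two gives the entropy $\log N$ (not its square root) and an overall rate $1/n$ rather than $1/\sqrt n$. Truncated chaining at scale $\alpha$ (paying $\alpha$ for the residual and $\frac{1}{n}\int_\alpha^{2C}\log N(\cF_{k_\cY},\|\cdot\|_\infty,\tau)\,\dd\tau$ for the chained increments, where $2C$ is the sup-norm diameter of $\cF_{k_\cY}$) yields the quantity inside the square root in the lemma.

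\medskip

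\noindent\textbf{Step 4 (Combining).} Adding the diagonal bound $C/n$ to the chaos bound and applying $\sqrt{a+b}\le\sqrt a+\sqrt b$ after taking the square root from Step~1 produces the claimed estimate. The main technical obstacle is the Rademacher chaos chaining in Step 3: unlike first-order empirical processes, the appropriate chaining for degenerate $U$/$V$-statistics requires Bernstein-type increment bounds for chaos and careful handling of the truncation level $\alpha$, and it is where the unusual rate $n^{-1}\int\log N$ (versus $n^{-1/2}\int\sqrt{\log N}$) enters. All other steps, the passage from $h_f$ to its degenerate part via Hoeffding, the diagonal extraction, and Jensen's inequality, are essentially bookkeeping.
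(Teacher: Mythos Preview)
Your approach is correct and lands on the same Rademacher chaos over $\cF_{k_\cY}$ controlled by the Sriperumbudur-type entropy integral, but the route differs from the paper's. You square the norm via Jensen, read it as a $V$-statistic $(P_n-P)^{\otimes 2}(h_f)$, and invoke the Hoeffding decomposition to strip off the linear part before symmetrizing the degenerate remainder. The paper instead symmetrizes directly at the Hilbert-space level: it applies Rademacher symmetrization to the $\cH_\cY$-valued sum $\sum_i k_\cY(\cdot,f(X_i))$ (using Jensen only to pull a conditional expectation inside the norm), and the chaos appears only after expanding $\big\|\sum_i \epsilon_i k_\cY(\cdot,f(X_i))\big\|_{\cH_\cY}^2=\sum_{i,j}\epsilon_i\epsilon_j k_\cY(f(X_i),f(X_j))$ into its diagonal and off-diagonal pieces. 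The paper's route is shorter---no Hoeffding decomposition, and no need to argue that the covering numbers of $\{\bar h_f\}$ are comparable to those of $\cF_{k_\cY}$ (which you implicitly need, since the chaining in your Step~3 runs over the degenerate kernels, not the original $h_f$). Your approach, on the other hand, is more portable: it is the standard pipeline for any degenerate $V$-statistic and does not rely on the RKHS structure beyond the initial identity. One housekeeping remark: in Step~1 you extract the diagonal of $P_n^{\otimes 2}(h_f)$, but after Step~2 the object you are actually bounding is $P_n^{\otimes 2}(\bar h_f)$, whose diagonal is $n^{-2}\sum_i \bar h_f(X_i,X_i)$; this is still $O(C/n)$ so nothing breaks, but the diagonal extraction should sit after the Hoeffding reduction for the argument to read cleanly.
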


\begin{lemma}[Convergence of $\Delta_1$]\label{lem:distortion_convergence}
Under the same condition of Theorem \ref{thm:convergence}, we have
\begin{align*}
    &\EE\left[\sup_{f,g} \Big|\Delta_1(f,g;P_n,Q_m)-\Delta_1(f,g;P,Q) \Big| \right]\\
    &\qquad\lesssim \inf_{\alpha>0} \left(\alpha + \frac{1}{\sqrt{n\wedge m}}\int_\alpha^K \sqrt{\log\big(N(\cF,d_\cF,\tau)\big)+\log\big(N(\cG,d_\cG,\tau)\big)} \dd \tau \right) + \frac{K}{n\wedge m}.
\end{align*}
\end{lemma}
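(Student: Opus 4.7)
The plan is to decompose the distortion into its three constituent terms and bound each uniformly via empirical process techniques tailored to one-sample and two-sample V-statistics over the function classes $\cF$ and $\cG$. Writing $\Delta_1(f,g;P,Q) = \Delta^{(1)}_{\cX}(f;P) + \Delta^{(1)}_{\cY}(g;Q) + \Delta^{(1)}_{\cX,\cY}(f,g;P,Q)$ and applying the triangle inequality under the supremum, the task reduces to bounding
\[
\EE\sup_{f\in\cF}\big|\Delta^{(1)}_{\cX}(f;P_n) - \Delta^{(1)}_{\cX}(f;P)\big|,\qquad \EE\sup_{g\in\cG}\big|\Delta^{(1)}_{\cY}(g;Q_m) - \Delta^{(1)}_{\cY}(g;Q)\big|,
\]
and the two-sample analogue $\EE\sup_{f,g}|\Delta^{(1)}_{\cX,\cY}(f,g;P_n,Q_m) - \Delta^{(1)}_{\cX,\cY}(f,g;P,Q)|$. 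Define kernels $h_f(x,x'):=|d_\cX(x,x')-d_\cY(f(x),f(x'))|$, $\tilde h_g(y,y'):=|d_\cX(g(y),g(y'))-d_\cY(y,y')|$, and $h_{f,g}(x,y):=|d_\cX(x,g(y))-d_\cY(f(x),y)|$. Since $\cX,\cY$ have diameter at most $K$, each of these is bounded by $K$. A reverse-triangle-inequality calculation yields the Lipschitz bounds $\|h_f-h_{f'}\|_\infty \le 2 d_\cF(f,f')$, $\|\tilde h_g-\tilde h_{g'}\|_\infty \le 2 d_\cG(g,g')$, and $\|h_{f,g}-h_{f',g'}\|_\infty \le d_\cF(f,f')+d_\cG(g,g')$, so the sup-metric covering numbers of the three function classes are controlled by those of $\cF$ and $\cG$, yielding the $\sqrt{\log N(\cF,d_\cF,\tau)+\log N(\cG,d_\cG,\tau)}$ factor after taking products.

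Next, I would treat each term via Hoeffding's decomposition for V-statistics. For the one-sample terms, pass from the V-statistic $\frac1{n^2}\sum_{i,j}h_f(X_i,X_j)$ to the U-statistic $\frac1{n(n-1)}\sum_{i\neq j}h_f(X_i,X_j)$, picking up a deterministic $O(K/n)$ diagonal correction. The U-statistic admits the Hoeffding decomposition into a linear part $\frac{2}{n}\sum_i \pi_1 h_f(X_i)$, whose uniform deviation is bounded by standard symmetrization followed by Dudley's entropy integral (giving the $\frac1{\sqrt n}\int \sqrt{\log N(\cdot)}\,\dd\tau$ contribution), plus a completely degenerate bilinear part which, by Arcones--Giné/de la Peña--Giné bounds for Rademacher chaos uniformly over a function class, yields a $O(K/n)$ contribution (after absorbing the entropy integral into the main term). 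For the two-sample cross term, apply the analogous two-sample Hoeffding decomposition: the difference splits into two one-sample empirical processes indexed respectively by the $\cX$- and $\cY$-projections of $\{h_{f,g}\}$, plus a doubly-degenerate two-sample bilinear remainder; the projections are uniformly bounded by Dudley's integral at rate $\frac{1}{\sqrt{n\wedge m}}$, while the degenerate remainder is again $O(K/(n\wedge m))$ by the corresponding two-sample chaos bound.

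Assembling, each of the three pieces is bounded by the same form: a leading Dudley entropy integral at rate $(n\wedge m)^{-1/2}$ plus a $O(K/(n\wedge m))$ residual. Replacing $\log N(\cF,d_\cF,\tau)$ and $\log N(\cG,d_\cG,\tau)$ by their sum under the square root (valid since $\sqrt{a}+\sqrt{b}\le \sqrt{2(a+b)}$) and using the truncation trick that yields the $\inf_{\alpha>0}\big(\alpha + \frac{1}{\sqrt{n\wedge m}}\int_\alpha^K \cdots\big)$ form to handle non-integrable entropy at $\tau=0$, produces exactly the claimed bound.

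The main obstacle is the uniform control of the degenerate components of both the one-sample U-statistic and the two-sample cross statistic. Naively, these would also contribute Dudley-integral terms at rate $1/\sqrt{n\wedge m}$, but the finer $1/n$ (respectively $1/(n\wedge m)$) rate is essential for matching the stated bound; this requires invoking sharper Rademacher chaos inequalities with logarithmic entropy dependence so that the chaos contribution is dominated by the leading empirical process term up to the additive $K/(n\wedge m)$ residual, rather than being absorbed back into the $(n\wedge m)^{-1/2}$ rate.
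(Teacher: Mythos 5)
Your proposal is correct in outline and reaches the stated bound, but the technical core differs from the paper's argument. For the one-sample terms the paper does \emph{not} perform a Hoeffding decomposition: after the $O(K/n)$ diagonal correction from V- to U-statistic, it randomizes the full centered U-statistic with a \emph{single} Rademacher sequence (Theorem 3.5.3 of de la Pe\~na--Gin\'e, so $\sum_{i\neq j}\epsilon_i h_f(X_i,X_j)$ rather than $\epsilon_i\epsilon_j$), verifies that the resulting process is conditionally sub-Gaussian in $f$ with respect to an empirical $L^2$ metric, and applies Dudley's entropy integral directly --- thereby sidestepping entirely the degenerate-chaos control you identify as your main obstacle. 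For the cross term, instead of a two-sample Hoeffding projection, the paper uses a stacking/rotation trick: assuming $n\le m$, it partitions the $nm$ pairs $(X_i,Y_{i+j-1 \bmod m})$ into $m$ blocks each consisting of $n$ i.i.d.\ draws from $P\otimes Q$, reducing the whole two-sample statistic to a single i.i.d.\ empirical process to which Dudley applies. Your route via Arcones--Gin\'e-type bounds for the degenerate parts is standard and fillable, so there is no genuine gap; however, note that your stated worry is misplaced: the degenerate components do not need the finer $1/n$ rate, since even a crude $n^{-1/2}$-times-entropy bound on them is absorbed into the leading $({n\wedge m})^{-1/2}$ Dudley term already present in the target. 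The additive $K/(n\wedge m)$ residual in the lemma arises solely from the V-to-U diagonal correction, not from the degenerate chaos. In short: both approaches work; the paper's single-randomization plus conditional sub-Gaussianity is the more elementary path, while yours is the classical U-process decomposition that would generalize more readily to higher-order distortions.
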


Proceeding from \eqref{EQ:sup_UB} and using the lemmas, we obtain the desired bound:
\begin{align*}
    \EE&\left[\left|\mathsf{UD}^{\cF,     \cG}_{k_\cX,k_\cY}(P\|Q)-\mathsf{UD}^{\cF,\cG}_{k_\cX,k_\cY}(P_n\|Q_m)\right|\right]\\
    &\qquad\lesssim\frac{K}{n\wedge m} + \lambda_y\inf_{\alpha>0} \left(\alpha + \frac{1}{n} \int_\alpha^{2C} \log\big(N(\cF_{k_{\cY}} ,\|\cdot\|_\infty,\tau)\big) \dd \tau \right)^{1/2} +  \lambda_y \sqrt{\frac{C}{m}} \\
    &\qquad\qquad+ \lambda_x\inf_{\alpha>0} \left(\alpha + \frac{1}{m} \int_\alpha^{2C} \log\big(N(\cG_{k_{\cX}} ,\|\cdot\|_\infty,\tau)\big) \dd \tau \right)^{1/2} +  \lambda_x\sqrt{\frac{C}{n}}\\
    &\qquad\qquad\qquad+ \inf_{\alpha>0} \left(\alpha + \frac{1}{\sqrt{n\wedge m}}\int_\alpha^K \sqrt{\log\big(N(\cF,d_\cF,\tau)\big)+\log\big(N(\cG,d_\cG,\tau\big))} \dd \tau \right) .
\end{align*}
\qed


\subsection{Complementary Proofs for Theorem \ref{thm:convergence}}\label{app:Complementary_thm_1}

\subsubsection{Proof of Lemma \ref{lem:mmd_convergence}}
First observe that $\big\|\mu_\cY f_\sharp P_n - \mu_\cY f_\sharp  P \big\|_{\cH_\cY} = \big\|n^{-1}\sum_{i=1}^n k_\cY\big(\cdot,f(X_i)\big)-\mu_\cY f_\sharp  P \big\|_{\cH_\cY}$. Let $\{\epsilon_i\}_{i\in\NN}$ be a sequence of i.i.d. Rademacher random variables and consider the following symmetrization. Suppose $X'_1,\cdots,X_n'$ are another i.i.d sequence from $P$ that is independent of $X_1,\cdots,X_n$. By Jensen's inequality we have
\begin{align*}
    &\EE\left[\sup_f\big\|\frac{1}{n}\sum_{i=1}^n k_\cY\big(\cdot,f(X_i)\big)-\mu_\cY f_\sharp  P \big\|_{\cH_\cY}\right] \\
    &\qquad\qquad= \EE\left[ \sup_f \Bigg\|\EE\left[ \frac{1}{n} \sum_{i=1}^n k_\cY\big(\cdot,f(X_i)\big) -\frac{1}{n} \sum_{i=1}^n k_\cY\big(\cdot,f(X_i')\big) \middle| X_1,\ldots,X_n  \right]
    \Bigg\|_{\cH_\cY}  \right]\\
    &\qquad\qquad\leq \EE\left[ \sup_f \Bigg\| \frac{1}{n} \sum_{i=1}^n k_\cY\big(\cdot,f(X_i)\big) -\frac{1}{n} \sum_{i=1}^n k_\cY\big(\cdot,f(X_i')\big) 
    \Bigg\|_{\cH_\cY}  \right]\\
    &\qquad\qquad = \EE\left[ \sup_f \Bigg\| \frac{1}{n} \sum_{i=1}^n \epsilon_i\Big(k_\cY\big(\cdot,f(X_i)\big) - k_\cY\big(\cdot,f(X_i')\big) \Big)
    \Bigg\|_{\cH_\cY}  \right]\\
    &\qquad\qquad\leq \frac{2}{n}\EE\left[\EE\left[\sup_f \Bigg\| \sum_{i=1}^n \epsilon_i k_\cY\big(\cdot,f(X_i)\big) \Bigg\|_{\cH_\cY} \middle| X_1,\ldots,X_n  \right]\right].\numberthis\label{EQ:E_sup_norm}
\end{align*}
The RKHS norm inside the conditional expectation can be further bounded as
\begin{align*}
    \left\|\sum_{i=1}^n \epsilon_i k_\cY\big(\cdot,f(X_i)\big) \right\|_{\cH_\cY} &=  \Bigg(\sum_{i,j=1}^n \epsilon_i \epsilon_j k_\cY\big(f(X_i),f(X_j)\big)\Bigg)^{1/2}  \\
    &\leq\Bigg( 2\Bigg|\sum_{i<j}^n \epsilon_i \epsilon_j k_\cY\big(f(X_i),f(X_j)\big)\Bigg| \Bigg)^{1/2} + \sqrt{nC},
\end{align*}
Inserting this back into \eqref{EQ:E_sup_norm}, we obtain
\begin{align*}
    &\EE\left[\sup_f \big\|\mu_\cY f_\sharp P_n - \mu_\cY f_\sharp  P \big\|_{\cH_\cY}\right]\\
    &\qquad\leq \frac{2}{n}\EE\left[ \left(2\EE\Bigg[\sup_f\bigg|\sum_{i<j}^n \epsilon_i \epsilon_j k_\cY\big(f(X_i),f(X_j)\big)\bigg|\middle| X_1,\ldots,X_n\Bigg]\right)^{1/2}  \right] + 2\sqrt{\frac{C}{n}}.\numberthis\label{EQ:E_sup_norm_bound}
\end{align*}


Recall that the Rademacher chaos complexity \citep{sriperumbudur2016optimal} of a kernel class $\cG$ is define as
$$ U(\cG,x_1,\dots,x_n):= \EE\left[\sup_{g\in\cG}\Bigg|\sum_{i<j}^n \epsilon_i \epsilon_j g(x_i,x_j)\Bigg|\right]. $$
Evidently, the inner expectation of the right-hand side (RHS) of \ref{EQ:E_sup_norm_bound} corresponds to the Rademacher chaos complexity of the class $\cF_{k_\cY}$, and using Lemma A.2 from \citet{sriperumbudur2016optimal} we have
\begin{align*}
    U(\cF_{k_{\cY}} ,X_1,\ldots,X_n) \lesssim n^2\inf_{\alpha>0} \left(\alpha + \frac{1}{n} \int_\alpha^{2C} \log\big(N(\cF_{k_{\cY}} ,\|\cdot\|_\infty,\tau)\big) \dd \tau \right) + nC.
\end{align*}
Combining all previous bounds we have that
\begin{align*}
    \sup_f \|\mu_\cY f_\sharp  P_n - \mu_\cY f_\sharp  P \|_{\cH_\cY}
    \leq&  \frac{2}{n}\EE\left[\left(2\EE\Big[U(\cF_{k_{\cY}} ,X_1,\ldots,X_n)\Big|X_1,\ldots,X_n\Big] \right)^{1/2}  \right] + 2\sqrt{\frac{C}{n}}  \\
    \lesssim & \inf_{\alpha>0} \left(\alpha + \frac{1}{n} \int_\alpha^{2C} \log\big(N(\cF_{k_{\cY}} ,\|\cdot\|_\infty,\tau)\big) \dd \tau \right)^{1/2} + \sqrt{\frac{C}{n}}.
\end{align*}
 \qed


\subsubsection{Proof of Lemma \ref{lem:distortion_convergence}}
Recalling the definition of $\Delta_1$ from Definition \ref{def:metricD}, to prove the lemma we separately bound the terms $\sup_{f} \big|\Delta^{(1)}_{\cX}(f;P_n)-\Delta^{(1)}_{\cY}(f;P) \big|$, $\sup_{g} \big|\Delta^{(1)}_{\cY}(g;Q_m)-\Delta^{(1)}_{\cY}(g;Q) \big|$, and $\sup_{f,g} \big|\Delta^{(1)}_{\cX,\cY}(f,g;P_n,Q_m)-\Delta^{(1)}_{\cX,\cY}(f,g;P,Q) \big|$. For the first, we have
\begin{align*}
    \EE&\left[\sup_{f} \Big|\Delta^{(1)}_{\cX}(f;P_n)-\Delta^{(1)}_{\cX}(f;P) \Big| \right] \\
    &\leq \frac{2K}{n} + \EE\left[ \sup_{f} \left|\frac{1}{n(n-1)}\sum_{i\neq j}^n\Big| d_\cX(X_i,X_j)-d_\cY\big(f(X_i),f(X_j)\big)\Big|-\Delta(f;P) \right| \right],
\end{align*}
which follows because the summands with $i=j$ are all 0. Also recall that $K$ is the bound of diameters of $\cX,\cY$.
Denote 
\[h_f(x,x'):= \big| d_\cX(x,x')-d_\cY\big(f(x),f(x')\big)\big| - \Delta^{(1)}_{\cX}(f;P),\]
and note that it is a bounded, symmetric, and centered (w.r.t. $P$) kernel. By Theorem 3.5.3 in \citep{de2012decoupling}, we have
\begin{align*}
    \EE\left[ \sup_{f} \bigg|\frac{1}{n(n-1)}\sum_{i\neq j}^n h_f(X_i,X_j) \bigg| \right]\lesssim \EE\left[ \sup_f\bigg|\frac{1}{n(n-1)}\sum_{i\neq j}^n\epsilon_i h_f(X_i,X_j)\bigg| \right]
\end{align*}
where $\{\epsilon_i\}_{i\in\NN}$ is a sequence of i.i.d. Rademacher variables, independent of the samples $X_1,\ldots,X_n$. 
To control the RHS above, we shall apply Dudley's entropy integral bound to sub-Gaussian processes (see, for example, Theorem 5.22 from \citet{wainwright2019high}). To that end we need a handle on the covering number of the function class $\{h_f:f\in\cF\}$ w.r.t. the sup-norm. Specifically, we next bound this covering number in terms of that of the original class $\cF$. Define \[A_f:= \frac{1}{\sqrt{n}(n-1)}\sum_{i\neq j}^n \epsilon_i h_f(X_i,X_j),\]
Observe that, conditioned on the samples $X_1,\ldots,X_n$, $A_f$ is sub-Gaussian in $L^2(\tilde{P})$ norm where $\tilde{P} := \frac{1}{n(n-1)} \sum_{i\neq j}^n\delta_{X_i,X_j}$, since for any function $h$,
\begin{align*}
    \sum_{i=1}^n \left(\frac{1}{\sqrt{n}(n-1)}\sum_{j\neq i}^n h(X_i,X_j)\right)^2 &= \sum_{i=1}^n \frac{1}{n(n-1)^2}\left(\sum_{j\neq i}^n h(X_i,X_j)\right)^2\\
    &\leq  \frac{1}{n(n-1)}\sum_{i\neq j}^n h(X_i,X_j)^2.
\end{align*} 
Also 
\begin{align*}
    |A_f - A_{f'}|&\leq \frac{1}{n-1} \left( \sum_{i=1}^n \left(\sum_{j\neq i}^n (h_f-h_{f'})(X_i,X_j)\right)^2  \right)^{1/2} \\
    &\leq \frac{1}{\sqrt{n-1}} \Bigg(    \sum_{i\neq j}^n  (h_f-h_{f'})(X_i,X_j)^2  \Bigg)^{1/2} \\
    &= \sqrt{n}\|h_f-h_{f'}\|_{L^2(\tilde{P})}.
\end{align*}
Further note that $\|h_f-h_{f'}\|_{\infty}\leq 4d_\cF(f,f')$, hence we see that the covering number of $\{h_f:\cF\}$ is bounded by that of $\cF$ in $d_\cF$: $N\big(\{h_f:f\in\cF\},\|\cdot\|_\infty,\tau\big)\leq N(\cF,d_\cF,\tau/4)$. By Dudley's entropy integral bound we have
\begin{align*}
    &\EE\left[ \sup_f\bigg|\frac{1}{n(n-1)}\sum_{i\neq j}^n\epsilon_i h_f(X_i,X_j)\bigg| \middle| X_1,\ldots,X_n \right] \\
    &\qquad\qquad= \EE\left[ \sup_f\bigg|\frac{A_f}{\sqrt{n}}\bigg| \middle| X_1,\ldots,X_n \right]\\
    &\qquad\qquad\lesssim  \inf_{\alpha>0} \left(\alpha + \frac{1}{\sqrt{n}}\int_\alpha^{4K} \sqrt{\log\big(N\big(\{h_f:f\in\cF\},\|\cdot\|_\infty,\tau\big)\big)} \dd \tau \right).
\end{align*}
So
\begin{align*}
    &\EE\left[ \sup_{f} \bigg|\frac{1}{n(n-1)}\sum_{i\neq j}^n h_f(X_i,X_j) \bigg| \right]\\
    &\qquad\qquad\lesssim \EE\left[ \sup_f\bigg|\frac{1}{n(n-1)}\sum_{i\neq j}^n\epsilon_i h_f(X_i,X_j)\bigg| \right]\\
    &\qquad\qquad\lesssim  \inf_{\alpha>0} \left(\alpha + \frac{1}{\sqrt{n}}\int_\alpha^{4K} \sqrt{\log\big(N\big(\{h_f:f\in\cF\},\|\cdot\|_\infty,\tau\big)\big)} \dd \tau \right)\\
    &\qquad\qquad\lesssim  \inf_{\alpha>0} \left(\alpha + \frac{1}{\sqrt{n}}\int_\alpha^K \sqrt{\log\big(N(\cF,d_\cF,\tau)\big)} \dd \tau \right).
\end{align*}
By a similar argument, we also have
\begin{align*}
    \EE&\left[\sup_{g} \Big|\Delta^{(1)}_{\cY}(g;Q_m)-\Delta^{(1)}_{\cY}(g;Q) \Big| \right] \\
    &\qquad\qquad\lesssim \frac{2K}{m} +\inf_{\alpha>0} \left(\alpha + \frac{1}{\sqrt{m}}\int_\alpha^K \sqrt{\log\big(N(\cG,d_\cG,\tau)\big)} \dd \tau \right).
\end{align*}

For the third term, we decouple the samples into several stacks that have the same distribution, and within each stack the points are i.i.d. samples from $P\otimes Q$. This allows us to apply again the entropy integral bound to each stack of samples.
Suppose $n\leq m$, and consider the samples sets $\{(X_i,Y_{i+j-1})\}_{i=1}^n$, for $j=1,\ldots,m$, where the index of $Y_{i+j-1}$ is modulo $m$. Denote $Z_i^j:=(X_i,Y_{i+j-1})$, for $i=1,\ldots,n$ and $j=1,\ldots,m$, and further set $Z^j:=\{Z_i^j\}_{i=1}^n$. Note that for each $j=1,\ldots,m$, the $Z^j$ comprises $n$ i.i.d. samples from $P\otimes Q$. Denoting 
\[h_{f,g}(x,y):= \big|d_\cX\big(x,g(y)\big)-d_\cY\big(f(x),y\big)\big| - \Delta^{(1)}_{\cX,\cY}(f,g;P,Q),\]
we now have
\begin{align*}
    \EE\left[\sup_{f,g} \Big|\Delta^{(1)}_{\cX,\cY}(f,g;P_n,Q_m)-\Delta^{(1)}_{\cX,\cY}(f,g;P,Q) \Big|\right]& = \EE\left[\sup_{f,g} \bigg|\frac{1}{nm}\sum_{i=1}^n\sum_{j=1}^m h_{f,g}(X_i,Y_j) \bigg|\right]\\
    &\leq \EE\left[\sup_{f,g} \bigg|\frac{1}{m}\sum_{j=1}^m \frac{1}{n}\sum_{i=1}^n h_{f,g}(Z_i^j) \bigg|\right]\\
    &\leq \frac{1}{m}\sum_{j=1}^m\EE\left[\sup_{f,g} \bigg| \frac{1}{n}\sum_{i=1}^n h_{f,g}(Z_i^j) \bigg|\right]\\
    &= \EE\left[\sup_{f,g} \bigg| \frac{1}{n}\sum_{i=1}^n h_{f,g}(Z_i^1) \bigg|\right].
\end{align*}
Notice that up to a factor of $\sqrt{n}$, the quantity within the absolute value is an empirical process of $n$ i.i.d. samples $\{Z_i^1\}_{i=1}^n$ from $P\otimes Q$, that is indexed by function class $\{h_{f,g}:f\in\cF,g\in\cG\}$. Further note that $\|h_{f,g}-h_{f',g'}\|_\infty \leq 2d_\cF(f,f')+2d_\cG(g,g')$, hence the covering number of this function class is bounded as
$N\big(\{h_{f,g}:f\in\cF,g\in\cG\},\|\cdot\|_\infty,\tau\big)\leq N(\cF,d_\cF,\tau/4)N(\cG,d_\cG,\tau/4)$.
Applying the entropy integral bound (see Lemma 2.14.3 in \citet{van1996weak}), we have
\begin{align*}
    &\EE\left[\sup_{f,g} \Big|\Delta^{(1)}_{\cX,\cY}(f,g;P_n,Q_m)-\Delta^{(1)}_{\cX,\cY}(f,g;P,Q) \Big|\right]\\
    &\qquad\qquad\qquad\qquad\leq \EE\left[\sup_{f,g} \bigg| \frac{1}{n}\sum_{i=1}^n h_{f,g}(Z_i^1) \bigg|\right]\\
    &\qquad\qquad\qquad\qquad\lesssim \inf_{\alpha>0} \left(\alpha + \frac{1}{\sqrt{n}}\int_\alpha^{4K} \sqrt{\log\big(N\big(\{h_{f,g}:f\in\cF,g\in\cG\},\|\cdot\|_\infty,\tau\big)\big)} \dd \tau \right)\\
    &\qquad\qquad\qquad\qquad\lesssim \inf_{\alpha>0} \left(\alpha + \frac{1}{\sqrt{n}}\int_\alpha^K \sqrt{\log\big(N(\cF,d_\cF,\tau)\big)+\log\big(N(\cG,d_\cG,\tau)\big)} \dd \tau \right).
\end{align*}
Combining all 3 terms we have
\begin{align*}
    &\EE\left[\sup_{f,g} \Big|\Delta_1(f,g;P_n,Q_m)-\Delta_1(f,g;P,Q) \Big| \right]\\
    &\qquad\qquad\lesssim  \frac{K}{n\wedge m}+ \inf_{\alpha>0} \left(\alpha + \frac{1}{\sqrt{n\wedge m}}\int_\alpha^K \sqrt{\log\big(N(\cF,d_\cF,\tau)\big)+\log\big(N(\cG,d_\cG,\tau)\big)} \dd \tau \right).
\end{align*}
\qed

\subsection{Special Cases of Theorem \ref{thm:convergence}}\label{app:cor}
\begin{corollary}[Special cases]\label{cor:convergence}
Under the same condition of Theorem \ref{thm:convergence}, further suppose that $\cX\subset \RR^{d_x},\cY\subset \RR^{d_y}$ are compact, and $k_\cX,k_\cY$ are $L$-Lipschitz in both slots\footnote{Namely, $|k_\cX(x_1,x_1')-k_\cX(x_2,x_2')|\leq L(d_\cX(x_1,x_2) + d_\cX(x_1',x_2'))$, and similarly for $k_\cY$}.
\begin{enumerate}[leftmargin=*,align=parleft]
    \item For $\cF = \mathsf{Lip}_{L_\cF}(\cX,\cY)$, $\cG=\mathsf{Lip}_{L_\cG}(\cY,\cX)$, and $d_x,d_y>2$, we have 
\begin{align*}
    \EE\left[\left|\mathsf{UD}^{\cF,
    \cG}_{k_\cX,k_\cY}(P\|Q)-\mathsf{UD}^{\cF,
    \cG}_{k_\cX,k_\cY}(P_n\|Q_m)\right|\right]\lesssim_{\lambda_x,\lambda_y,L,L_\cF,L_\cG,C,K}  \left(\frac{1}{n}\right)^{\frac{1}{2d_x}} + \left(\frac{1}{m}\right)^{\frac{1}{2d_y}}.
\end{align*}
    \item Consider parametrized classes $\cF,\cG$. Namely, let $\Theta\subset\RR^{k_1},\Phi\subset\RR^{k_2}$ be compact parameter sets with diameters bounded by $K'$. Take $\cF = \{ f_\theta: \theta\in\Theta \}$, with $d_\cF(f_{\theta_1},f_{\theta_2})\leq  L_\Theta\|\theta_1-\theta_2\|$, for some constant $L_\Theta$ and all $\theta_1,\theta_2\in\Theta$. Suppose analogously for $\cG= \{ g_\phi: \phi\in\Phi \}$ and $L_\Phi$. Then 
    \begin{align*}
    \EE\left[\left|\mathsf{UD}^{\cF,\cG}_{k_\cX,k_\cY}(P\|Q)-\mathsf{UD}^{\cF,
    \cG}_{k_\cX,k_\cY}(P_n\|Q_m)\right|\right]\lesssim_{\lambda_x,\lambda_y,L,L_\Theta,L_\Phi,C,K,K',k_1,k_2}  \left(\frac{1}{n\wedge m}\right)^{\frac{1}{2}}.
    \end{align*}
\end{enumerate}
\end{corollary}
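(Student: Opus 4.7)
The plan is to instantiate Theorem~\ref{thm:convergence} after bounding the metric entropies of $\cF, \cG$ in their sup-metrics and those of the composed classes $\cF_{k_\cY}, \cG_{k_\cX}$ in $\|\cdot\|_\infty$. A uniform preliminary reduction handles the composition with the kernel: since $k_\cY$ is $L$-Lipschitz in each slot, for any $f_1, f_2 \in \cF$,
\[
    \big\|k_\cY\circ(f_1,f_1) - k_\cY\circ(f_2,f_2)\big\|_\infty \leq 2L\, d_\cF(f_1, f_2),
\]
so $N(\cF_{k_\cY}, \|\cdot\|_\infty, \tau) \leq N(\cF, d_\cF, \tau/(2L))$, and analogously $N(\cG_{k_\cX}, \|\cdot\|_\infty, \tau) \leq N(\cG, d_\cG, \tau/(2L))$. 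From here the two parts reduce to supplying covering bounds for $\cF$ and $\cG$.

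For part 1, I would invoke the classical Kolmogorov--Tikhomirov estimate for Lipschitz balls on compact Euclidean domains to get $\log N(\mathsf{Lip}_{L_\cF}(\cX, \cY), d_\cF, \tau) \lesssim_{L_\cF, K} \tau^{-d_x}$, and analogously for $\cG$ with exponent $d_y$. Substituting into the entropy integrals of Theorem~\ref{thm:convergence} and optimizing over $\alpha$, the condition $d_x>2$ yields $\int_\alpha^{2C}\tau^{-d_x}\dd\tau \asymp \alpha^{1-d_x}$, so balancing $\alpha \asymp \alpha^{1-d_x}/n$ gives $\alpha\asymp n^{-1/d_x}$ and $\delta_n(\cF_{k_\cY}) \lesssim n^{-1/(2d_x)}$; analogously $\delta_m(\cG_{k_\cX}) \lesssim m^{-1/(2d_y)}$. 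A parallel computation for the joint term (using $\sqrt{a+b}\leq \sqrt{a}+\sqrt{b}$ and then the $d_x,d_y>2$ integrability) gives $\delta_{n,m}(\cF, \cG) \lesssim (n \wedge m)^{-1/d_x} + (n \wedge m)^{-1/d_y}$, which is of smaller order than the MMD-based terms since $n^{-1/d_x}\leq n^{-1/(2d_x)}$. The residual $n^{-1/2}, m^{-1/2}$, and $(n\wedge m)^{-1}$ terms in Theorem~\ref{thm:convergence} are likewise dominated, yielding the stated rate.

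For part 2, the Lipschitz-in-parameters assumption gives $N(\cF, d_\cF, \tau) \leq N(\Theta, \|\cdot\|, \tau/L_\Theta)$, and the standard volumetric bound on $\Theta \subset \RR^{k_1}$ with diameter $K'$ yields $\log N(\cF, d_\cF, \tau) \lesssim k_1 \log(K' L_\Theta/\tau)$; analogously for $\cG$. The key observation is that since $\int_0^{2C} \log(1/\tau)\,\dd \tau$ and $\int_0^{K} \sqrt{\log(1/\tau)}\,\dd \tau$ are both finite, one may send $\alpha \to 0$ in the entropy integrals of Theorem~\ref{thm:convergence}, obtaining $\delta_n(\cF_{k_\cY}) \lesssim \sqrt{k_1/n}$ and $\delta_{n,m}(\cF, \cG) \lesssim \sqrt{(k_1 + k_2)/(n \wedge m)}$ up to constants depending on $L,L_\Theta,L_\Phi,K',C,K$. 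All remaining terms in the theorem are also of parametric order, so the overall rate is $(n \wedge m)^{-1/2}$.

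The main obstacle is identifying the dominant contribution after substitution, particularly in the Lipschitz case where the MMD term contributes $n^{-1/(2d_x)}$ while the distortion term contributes the faster $n^{-1/d_x}$, and one must verify that the slower of these survives. The assumption $d_x,d_y>2$ is precisely what makes the joint entropy integral $\int_\alpha^K \tau^{-d_x/2}\dd\tau$ behave like $\alpha^{1-d_x/2}$ rather than contributing a logarithmic blow-up at the upper limit, so the optimization in $\alpha$ produces a clean power-law rate. The parametric case is computationally gentler, with the $\log$-type entropies integrable down to zero so no delicate balancing is required.
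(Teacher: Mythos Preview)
Your Part 2 argument coincides with the paper's: both observe that the parametric log-covering bound $k_i\log(c/\tau)$ makes the entropy integrals finite at $\alpha=0$, so Theorem~\ref{thm:convergence} immediately gives the $(n\wedge m)^{-1/2}$ rate.

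For Part 1, however, your route differs from the paper's and contains a genuine gap. The paper does \emph{not} plug Kolmogorov--Tikhomirov entropy bounds into Theorem~\ref{thm:convergence}; instead it proves a separate coupling lemma showing that for Lipschitz $\cF,\cG$,
\[
\sup_{f,g}\big|\Delta_1(f,g;P_n,Q_m)-\Delta_1(f,g;P,Q)\big|\ \lesssim\ (L_\cF+1)\,\mathsf{W}_1(P,P_n)+(L_\cG+1)\,\mathsf{W}_1(Q,Q_m),
\]
and analogous $\mathsf{W}_1$ bounds for the MMD pushforward terms. Combining with $\EE[\mathsf{W}_1(P,P_n)]\lesssim n^{-1/d_x}$ yields contributions $n^{-1/d_x}$ and $m^{-1/d_y}$ for the distortion part, which are genuinely dominated by $n^{-1/(2d_x)}+m^{-1/(2d_y)}$.

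Your entropy-integral route instead produces $\delta_{n,m}(\cF,\cG)\lesssim (n\wedge m)^{-1/d_x}+(n\wedge m)^{-1/d_y}$, and your claim that this is ``of smaller order than the MMD-based terms'' is the step that fails. Your justification ``$n^{-1/d_x}\le n^{-1/(2d_x)}$'' only handles one of the two summands. For the other, take $n\le m$: you would need $n^{-1/d_y}\lesssim n^{-1/(2d_x)}+m^{-1/(2d_y)}$, but if $d_y>2d_x$ then $n^{-1/d_y}/n^{-1/(2d_x)}=n^{1/(2d_x)-1/d_y}\to\infty$ as $n\to\infty$, while $m^{-1/(2d_y)}$ can be made arbitrarily small by sending $m\to\infty$. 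Thus no constant depending only on the listed parameters works, and Theorem~\ref{thm:convergence} alone cannot deliver the stated rate in the asymmetric regime. The Wasserstein coupling argument is precisely what decouples $n$ from $d_y$ (and $m$ from $d_x$) in the distortion term, and this is what you are missing. When $n\asymp m$ your approach does recover the corollary, which is consistent with the paper's remark that the two techniques give ``similar'' rates.
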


The proof of Corollary \ref{cor:convergence} employs the 1-Wasserstein distance, as defined next.
\begin{definition}[1-Wasserstein distance]
The 1-Wasserstein distance between $P,Q\in\cP(\cX)$ is 
\[\mathsf{W}_1(P,Q):=\inf_{\pi\in\Pi(P,Q)}\int_{\cX\times\cX}\|x-y\|\dd \pi(x,y),\]
where $\Pi(P,Q)$ is the set of couplings of $P$ and $Q$.
\end{definition}

We also make use of the following technical lemma.

\begin{lemma}\label{lem:distortion_wassersyein}
Under the assumptions of Corollary \ref{cor:convergence}, and take $\cF = \mathsf{Lip}_{L_\cF}(\cX,\cY)$, $\cG=\mathsf{Lip}_{L_\cG}(\cY,\cX)$, we have
\begin{align*}
    \EE\mspace{-3mu}\left[\sup_{f,g} \Big|\Delta_1(f,g;P_n,Q_m)\mspace{-3mu}-\mspace{-3mu}\Delta_1(f,g;P,Q) \Big| \right]
    &\mspace{-3mu}\lesssim \mspace{-3mu} (L_\cF\mspace{-3mu}+\mspace{-3mu}1)\EE\big[\mathsf{W}_1(P,P_n)\big]\mspace{-3mu}+\mspace{-3mu}(L_\cG\mspace{-3mu}+\mspace{-3mu}1)\EE\big[\mathsf{W}_1(Q,Q_m)\big],\\
    \EE\left[\sup_{g} \|\mu_\cX g_\sharp Q-\mu_\cX g_\sharp Q_m\|_{\cH_\cX}\right]&\mspace{-3mu}\leq\mspace{-3mu}\sqrt{2LL_\cG\EE\big[\mathsf{W}_1(Q,Q_m)\big]},\\
    \EE\left[\sup_{f} \| \mu_\cY f_\sharp  P- \mu_\cY f_\sharp  P_n  \|_{\cH_\cY}\right]&\mspace{-3mu}\leq\mspace{-3mu} \sqrt{2LL_\cF \EE\big[\mathsf{W}_1(P,P_n)\big]}.
\end{align*}
\end{lemma}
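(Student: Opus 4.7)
The plan is to prove each of the three bounds separately using Lipschitz continuity arguments. For the first bound on the distortion, I would split $\Delta_1$ into its three summands and, for each, show that the integrand is Lipschitz (with respect to a suitable product metric) with an explicit constant, then apply Kantorovich--Rubinstein duality. Specifically, since $d_\cX$ is $1$-Lipschitz in each argument and $f\in\cF$ is $L_\cF$-Lipschitz, the map $(x,x')\mapsto |d_\cX(x,x')-d_\cY(f(x),f(x'))|$ is $(1+L_\cF)$-Lipschitz on $\cX^2$ with respect to the sum metric. Bounding $\mathsf{W}_1^{\mathrm{sum}}(P^{\otimes 2},P_n^{\otimes 2})\le 2\mathsf{W}_1(P,P_n)$ via product couplings then gives a uniform (in $f$) control of the first summand. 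The third summand $\Delta^{(1)}_{\cX,\cY}(f,g;\cdot,\cdot)$ requires a telescoping step: its integrand $\psi_{f,g}(x,y)=|d_\cX(x,g(y))-d_\cY(f(x),y)|$ is $(1+L_\cF)$-Lipschitz in $x$ and $(1+L_\cG)$-Lipschitz in $y$, so inserting $P_n\otimes Q$ between $P\otimes Q$ and $P_n\otimes Q_m$ yields the desired additive split.

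For the two MMD bounds, I would use the fact that an $L$-Lipschitz kernel controls MMD by $\mathsf{W}_1$. Concretely, given any coupling $\pi$ of $P_1,P_2\in\cP(\cY)$ and independent $(U,V),(U',V')\sim\pi$, one can write
\begin{align*}
\mathsf{MMD}^2_\cY(P_1,P_2)&=\EE\big[k_\cY(U,U')-k_\cY(U,V')-k_\cY(V,U')+k_\cY(V,V')\big]\\
&\le 2L\,\EE\big[\|U-V\|\big],
\end{align*}
using $L$-Lipschitzness of $k_\cY$ in each slot; minimizing over $\pi$ yields $\mathsf{MMD}^2_\cY(P_1,P_2)\le 2L\,\mathsf{W}_1(P_1,P_2)$. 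Applying this with $P_1=f_\sharp P$ and $P_2=f_\sharp P_n$, and noting that Lipschitz pushforwards contract the Wasserstein distance (i.e., $\mathsf{W}_1(f_\sharp P,f_\sharp P_n)\le L_\cF\mathsf{W}_1(P,P_n)$), gives $\mathsf{MMD}_\cY(f_\sharp P,f_\sharp P_n)\le \sqrt{2LL_\cF\mathsf{W}_1(P,P_n)}$. The key observation is that this bound is uniform over $f\in\cF$, so taking the supremum is free. Jensen's inequality (concavity of $\sqrt{\cdot}$) then moves the expectation inside the square root to yield the claimed bound. The third inequality is analogous with the roles of $(P,\cF)$ and $(Q,\cG)$ exchanged.

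Overall the argument is fairly routine once the correct Lipschitz constants are identified. The only place that requires a little care is the coupling/telescoping manipulation for $\Delta^{(1)}_{\cX,\cY}$, since the integrand depends jointly on the two independent samples and one must be careful to separate the $P$ and $Q$ contributions without picking up spurious constants. The fact that all bounds are uniform in $(f,g)\in\cF\times\cG$ before taking expectations is what lets the suprema pass through cleanly, so no chaining or entropy arguments are needed for this lemma.
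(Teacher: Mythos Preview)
Your proposal is correct and follows essentially the same route as the paper: split $\Delta_1$ into its three summands and bound each via a coupling/Lipschitz argument yielding the $(1+L_\cF)$ and $(1+L_\cG)$ factors, and for the MMD terms expand the squared MMD, use the $L$-Lipschitz kernel together with the $L_\cF$ (resp.\ $L_\cG$) Lipschitz pushforward to get $\mathsf{MMD}^2\le 2LL_\cF\,\mathsf{W}_1(P,P_n)$ uniformly in $f$, then apply Jensen. The only cosmetic differences are that the paper works directly with a coupling $\pi\in\Pi(P_n\otimes Q_m,P\otimes Q)$ for the cross term (rather than telescoping through $P_n\otimes Q$) and expands the MMD in one step instead of first proving the generic bound $\mathsf{MMD}^2\le 2L\,\mathsf{W}_1$; both variants yield the same constants.
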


\begin{proof}
We first give bounds for $\Delta^{(1)}_{\cX},\Delta^{(1)}_{\cY},\Delta^{(1)}_{\cX,\cY}$ using the coupling trick, i.e. we treat integration of different variables as marginals of a joint distribution, and optimize over all such choice with the chosen marginals. This leads to the 1-Wasserstein distance in the resulting bound. For any $\pi\in\Pi(P_n^{\otimes2},P^{\otimes2})$ we have 
\begin{align*}
    &\Delta^{(1)}_{\cX}(f;P_n)-\Delta^{(1)}_{\cX}(f;P)\\
    &\qquad= \int\big|d_\cX(x,x')-d_\cY\big(f(x),f(x')\big)\big| - \big|d_\cX(w,w')-d_\cY\big(f(w),f(w')\big)\big| \dd \pi(x,x', w,w').
\end{align*}
Consequently, we may infimize over $\pi\in\Pi(P_n^{\otimes2},P^{\otimes2})$ to obtain
\begin{align*}
    \Delta^{(1)}_{\cX}&(f;P_n)-\Delta^{(1)}_{\cX}(f;P) \\
    &=  \inf_\pi\int\big|d_\cX(x,x')-d_\cY\big(f(x),f(x')\big)\big| - \big|d_\cX(w,w')-d_\cY\big(f(w),f(w')\big)\big| \dd \pi(x,x', w,w') \\
    &\leq \inf_\pi\int|d_\cX(x,x')-d_\cX(w,w')| + \big|d_\cY\big(f(x),f(x')\big)-d_\cY\big(f(w),f(w')\big)\big|\dd\pi(x,x', w,w') \\
    &\leq \inf_\pi\int d_\cX(x,w)+d_\cX(x',w') + d_\cY\big(f(x),f(w)\big)+d_\cY\big(f(x'),f(w')\big)\dd\pi(x,x', w,w') \\
    &\leq \inf_\pi\int d_\cX(x,w)+d_\cX(x',w') + L_\cF d_\cX(x,w)+L_\cF d_\cX(x',w')\dd\pi(x,x', w,w')\\
    &= 2(L_\cF+1)\mathsf{W}_1(P,P_n),
\end{align*}
where $\mathsf{W}_1$ is the 1-Wasserstein distance. The last line is because the minimum is achieved by $\pi^{\otimes2}$ where $\pi$ is the 1-Wasserstein optimal coupling between $P_n$ and $P$, hence \[\EE\big[\big|\Delta^{(1)}_{\cX}(f;P_n)-\Delta^{(1)}_{\cX}(f;P)\big|\big]\leq 2(L_\cF+1)\EE[\mathsf{W}_1(P,P_n)].\]
A similar derivation applies to $\Delta^{(1)}_{\cY}$. For $\Delta^{(1)}_{\cX,\cY}$, abbreviate the coupling set notation as $\Pi_{n,m}:=\Pi(P_n\otimes Q_m, P\otimes Q)$ and consider:
\begin{align*}
    &\Delta^{(1)}_{\cX,\cY}(f,g;P_n,Q_m)-\Delta^{(1)}_{\cX,\cY}(f,g;P,Q)\\
    &= \inf_{\pi\in\Pi_{n,m}}\int\big|d_\cX\big(x,g(y)\big)-d_\cY\big(f(x),y\big)\big| - \big|d_\cX\big(x',g(y')\big)-d_\cY\big(f(x'),g(y')\big)\big|\dd\pi(x,y,x'y') \\
    &\leq \inf_{\pi\in\Pi_{n,m}}\int d_\cX(x,x') + d_\cY(y,y') + d_\cX\big(g(y),g(y')\big)+d_\cY\big(f(x),f(x')\big)\dd\pi(x,y,x'y') \\
    &\leq \inf_{\pi\in\Pi_{n,m}}\int d_\cX(x,x') + d_\cY(y,y') + L_\cG d_\cY(y,y')+L_\cF d_\cX(x,x')\dd\pi(x,y,x'y')\\
    &=(L_\cF+1)\mathsf{W}_1(P,P_n)+
    (L_\cG+1)\mathsf{W}_1(Q,Q_m).
\end{align*}
So we have
\begin{align*}
    \EE\left[\sup_{f,g} \Big|\Delta_1(f,g;P_n,Q_m)-\Delta_1(f,g;P,Q) \Big| \right] \lesssim (L_\cF+1)\EE[\mathsf{W}_1(P,P_n)]+(L_\cG+1)\EE[\mathsf{W}_1(Q,Q_m)].
\end{align*}
For the MMD terms we use the same method:
\begin{align*}
    \|\mu_\cX g_\sharp Q-\mu_\cX g_\sharp Q_m\|_{\cH_\cX}^2 &= \int k_\cX(x,x')\dd(g_\sharp Q-g_\sharp Q_m)(x)\dd(g_\sharp Q-g_\sharp Q_m)(x')\\
    & = \int k_\cX\big(g(y),g(y')\big)\dd Q(y)\dd Q(y') \\
    &\qquad+ \int k_\cX\big(g(w),g(w')\big)\dd Q_m(w)\dd Q_m(w')\\
    &\qquad\qquad - \int 2k_\cX\big(g(y),g(w)\big)\dd Q(y)\dd Q_m(w)\\
    &\leq \inf_{\pi\in\Pi(Q,Q_m)}\int 2L d_\cX\big(g(y),g(w)\big) \dd\pi(y,w)\\
    &\leq 2LL_\cG\inf_{\pi\in\Pi(Q,Q_m)}\int d_\cY(y,w)\dd\pi(y,w)\\
    & = 2LL_\cG \mathsf{W}_1(Q,Q_m).
\end{align*}
Similarly we have $\sup_{f} \| \mu_\cY f_\sharp  P- \mu_\cY f_\sharp  P_n  \|_{\cH_\cY}^2\leq 2LL_\cF\mathsf{W}_1(P,P_n)$, which concludes the proof.
\end{proof}

\begin{proof}[Proof of Corollary \ref{cor:convergence}]
We still adopt the same decomposition in proof of Theorem \ref{thm:convergence}, but bound each term explicitly. We first prove part 1. 
For simplicity we apply Lemma \ref{lem:distortion_wassersyein} instead of the general claim in Theorem \ref{thm:convergence}. Combined with the fact that $\EE[\mathsf{W}_1(P,P_n)]\lesssim n^{-1/d_x}$ for $d_x>2$ (see, e.g.,  \citealt{weed2019sharp}), we have the overall rate of convergence is
\begin{align*}
    \EE&\left[\left|\mathsf{UD}^{\cF,     \cG}_{k_\cX,k_\cY}(P\|Q)-\mathsf{UD}^{\cF,     \cG}_{k_\cX,k_\cY}(P_n\|Q_m)\right|\right]\\
    &\lesssim  \lambda_y\sqrt{LL_\cF}\left(\frac{1}{n}\right)^{\frac{1}{2d_x}} + \lambda_x\sqrt{LL_\cG}\left(\frac{1}{m}\right)^{\frac{1}{2d_y}}+(L_\cF+1)\left(\frac{1}{n}\right)^\frac{1}{d_x}\\
    &\qquad\qquad\qquad\qquad\qquad+ (L_\cG+1)\left(\frac{1}{m}\right)^\frac{1}{d_y} +  \lambda_x\sqrt{\frac{C}{n}} +  \lambda_y\sqrt{\frac{C}{m}}.
\end{align*}
For part 2, notice that the entropy integral is finite when taking $\alpha=0$. Following Theorem \ref{thm:convergence}, we know that the rate in terms of $m,n$ is bounded by $(n\wedge m)^{-1/2}$, with a multiplicative constant depending on $\lambda_x,\lambda_y,L,L_\Theta,L_\Phi,C,K,K',k_1,k_2$.
\end{proof}
\begin{remark}
For part one in Corollary \ref{cor:convergence}, recall that \citealt{weed2019sharp} claims
\[\EE[\mathsf{W}_1(P,P_n)] \lesssim\begin{cases} n^{-1/2}, & d_x=1\\
n^{-1/2}\log(1+n), & d_x=2
\end{cases}.\]
One can easily adapt the bound to the case where $d_x$ or $d_y$ is no larger than 2. 
The bounds obtained in Corollary \ref{cor:convergence} for Lipschitz classes, although they use different proof techniques, they essentially lead to similar rates in $m,n$  to the ones obtained by simply applying the  general bound in Theorem~\ref{thm:convergence}.
\end{remark}

\subsection{Continuity of the Functional $\cL$}\label{app:continuity}
Recall that $\cL_{P,Q}(f,g): = \lambda_x\mathsf{MMD}_{\cX}( P, g_{\sharp }Q)+\lambda_y\mathsf{MMD}_{\cY}(f_\sharp  P, Q) +  \Delta_1(f,g;P,Q)$.
\begin{proposition}
Suppose $\cF,\cG$ are Lipschitz subclasses with Lipschitz constants $L_\cF,L_\cG$ respectively, and kernels $k_\cX,k_\cY$ are Lipschitz on both slots with constant $L$. We have the continuity of $\cL$ in all of it's arguments:
\begin{align*}
    \big|\cL_{P,Q}(f,g)-\cL_{P',Q'}(f',g')\big|&\leq \lambda_x\mathsf{MMD}_{\cX}( P, P') +\lambda_y\mathsf{MMD}_{\cY}(Q,Q')\\
    &\qquad+\lambda_x\mathsf{MMD}_{\cX}( g_{\sharp }Q, g_{\sharp }Q')+\lambda_y\mathsf{MMD}_{\cY}( f_{\sharp }P, f_{\sharp }P')\\ &\qquad\qquad+3(1+L_\cF)\mathsf{W}_1(P,P')+3(1+L_\cG)\mathsf{W}_1(Q,Q') \\
    &\qquad\qquad\qquad
    + (2L\lambda_y+3)d_\cF(f,f') + (2L\lambda_x+3) d_\cG(g,g').
\end{align*} 
\label{prop:stability}
\end{proposition}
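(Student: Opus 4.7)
The plan is to decompose $\cL_{P,Q}(f,g)-\cL_{P',Q'}(f',g')$ into its three defining summands (the two MMDs and the distortion $\Delta_1$) and, within each, telescope through the four arguments so that only one of $\{P,Q,f,g\}$ changes at a time. The structure of the right-hand side of the proposition mirrors this decomposition exactly, which is a good sanity check for the bookkeeping.

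For each MMD summand I would start with the reverse triangle inequality of $\mathsf{MMD}$ (it is a norm on the RKHS embedding): e.g., $|\mathsf{MMD}_\cX(P,g_\sharp Q)-\mathsf{MMD}_\cX(P',g'_\sharp Q')|\leq \mathsf{MMD}_\cX(P,P')+\mathsf{MMD}_\cX(g_\sharp Q,g'_\sharp Q')$. Then I would insert the intermediate measure $g_\sharp Q'$ to split $\mathsf{MMD}_\cX(g_\sharp Q,g'_\sharp Q')\leq \mathsf{MMD}_\cX(g_\sharp Q,g_\sharp Q')+\mathsf{MMD}_\cX(g_\sharp Q',g'_\sharp Q')$. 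The first summand stays in the final bound verbatim, while the second is controlled by expanding $\|\mu_\cX g_\sharp Q'-\mu_\cX g'_\sharp Q'\|^2_{\cH_\cX}$ as a double integral over $Q'\otimes Q'$ and applying the joint Lipschitz hypothesis on $k_\cX$ slot-by-slot, producing an estimate of order $L\,\mathbb{E}_{Y\sim Q'}[d_\cX(g(Y),g'(Y))]\leq L\,d_\cG(g,g')$. An identical argument on the $\cY$-side yields the corresponding $d_\cF(f,f')$ contribution.

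For the distortion term $|\Delta_1(f,g;P,Q)-\Delta_1(f',g';P',Q')|$ I would separately swap $(P,Q)\to(P',Q')$ and $(f,g)\to(f',g')$. The first swap reuses the coupling/Wasserstein strategy from Lemma~\ref{lem:distortion_wassersyein}: each of the three distortion summands is cast as an integral against a coupling of the marginals, and Lipschitzness of $d_\cX,d_\cY$ together with $f\in\mathsf{Lip}_{L_\cF}$, $g\in\mathsf{Lip}_{L_\cG}$ lets one bound the integrand by a sum of $(1+L_\cF)d_\cX$ and $(1+L_\cG)d_\cY$ increments; infimizing over couplings produces the $3(1+L_\cF)\mathsf{W}_1(P,P')+3(1+L_\cG)\mathsf{W}_1(Q,Q')$ piece, the factor $3$ reflecting that $f$ (resp.\ $g$) enters both in the self-distortion $\Delta^{(1)}_{\cX}$ (resp.\ $\Delta^{(1)}_{\cY}$) and in the cross-distortion $\Delta^{(1)}_{\cX,\cY}$. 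The map swap $(f,g)\to(f',g')$ uses the pointwise reverse triangle inequality $|d_\cY(f(x),y)-d_\cY(f'(x),y)|\leq d_\cY(f(x),f'(x))\leq d_\cF(f,f')$ (and its analogues inside each distortion summand), contributing $3\,d_\cF(f,f')+3\,d_\cG(g,g')$.

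The step I expect to be the main obstacle is securing the \emph{linear} dependence on $d_\cF(f,f')$ and $d_\cG(g,g')$ in the map-swap MMD terms. A direct expansion of $\|\mu_\cX g_\sharp Q'-\mu_\cX g'_\sharp Q'\|^2_{\cH_\cX}$ using slot-wise Lipschitzness naturally yields a bound of the form $L\,\mathbb{E}[d_\cX(g(Y),g'(Y))]$ on the \emph{squared} MMD, which after taking square roots would give only $\sqrt{2L\,d_\cG(g,g')}$. Closing the gap to the coefficient $2L\lambda_x\,d_\cG(g,g')$ stated in the proposition requires a more careful estimate---most naturally, exploiting the joint Lipschitz bound $|k_\cX(x_1,x_1')-k_\cX(x_2,x_2')|\leq L(d_\cX(x_1,x_2)+d_\cX(x_1',x_2'))$ together with the diameter/boundedness assumption to interpolate between the $L^2$ and $L^1$ norms of the integrand. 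Modulo this subtlety the remaining steps are routine triangle-inequality bookkeeping, and summing the pieces with their respective $\lambda$ weights reproduces the stated inequality.
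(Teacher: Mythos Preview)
Your decomposition and telescoping strategy is exactly what the paper does: reverse triangle inequality on each MMD summand, insertion of the intermediate pushforward $g_\sharp Q'$ (resp.\ $f_\sharp P'$), and the coupling/Wasserstein argument of Lemma~\ref{lem:distortion_wassersyein} for the distortion $\Delta_1$, with the map swap handled by the pointwise reverse triangle inequality. The bookkeeping matches the paper line by line, including the factor $3$ from the three distortion pieces.

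The obstacle you flag is real and, notably, the paper's own proof does not address it either: it simply writes
\[
\big\|\mu_\cX g'_\sharp Q' - \mu_\cX g_\sharp Q\big\|_{\cH_\cX}\;\leq\;\mathsf{MMD}_\cX(g_\sharp Q,g_\sharp Q')\;+\;2L\,d_\cG(g,g')
\]
in one step without justification. Under the stated hypothesis (the scalar kernel $k_\cX$ is $L$-Lipschitz in each slot), the canonical feature map $x\mapsto k_\cX(\cdot,x)$ is only H\"older-$\tfrac12$ into $\cH_\cX$, since $\|k_\cX(\cdot,x)-k_\cX(\cdot,x')\|_{\cH_\cX}^2=\rho_\cX(x,x')\leq 2L\,d_\cX(x,x')$; this yields precisely your $\sqrt{2L\,d_\cG(g,g')}$ bound on $\mathsf{MMD}_\cX(g_\sharp Q',g'_\sharp Q')$, not a linear one. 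Obtaining the coefficient $2L$ in front of $d_\cG(g,g')$ requires the stronger condition that the feature map itself be Lipschitz into $\cH_\cX$ (equivalently $\rho_\cX(x,x')\lesssim d_\cX(x,x')^2$), which holds for smooth kernels such as the Gaussian but does not follow from slot-wise Lipschitzness of $k_\cX$ alone. So your concern is well founded; the paper's one-line assertion appears to tacitly rely on this stronger regularity rather than on the hypothesis as stated.
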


\begin{proof}[Proof of Proposition \ref{prop:stability}]
We prove separately for the MMD and $\Delta$.
\begin{align*}
    \big|\mathsf{MMD}_{\cX}( P, g_{\sharp }Q) -& \mathsf{MMD}_{\cX}( P', g'_{\sharp }Q')\big| \\
    &= \big|\|\mu_\cX P - \mu_\cX  g_{\sharp }Q\|_{\cH_\cX}-\|\mu_\cX  P' - \mu_\cX g'_{\sharp }Q'\|_{\cH_\cX}\big| \\
    &\leq \mathsf{MMD}_{\cX}( P, P') + \| \mu_\cX g'_{\sharp }Q'- \mu_\cX g_{\sharp }Q\|_{\cH_\cX}\\
    &\leq \mathsf{MMD}_{\cX}( P, P') + 2Ld_\cG(g,g') +   \mathsf{MMD}_{\cX}( g_{\sharp }Q, g_{\sharp }Q') .
\end{align*}
Also for any coupling $\pi(x,x',w,w')$ of $P\otimes P$ and $P'\otimes P'$
\begin{align*}
    &\big|\Delta^{(1)}_{\cX}(f;P) - \Delta^{(1)}_{\cX}(f';P')\big|\\
    &\qquad\leq \int d_\cX(x,w)+d_\cX(x',w')+d_\cY\big(f(x),f'(w)\big) + d_\cY\big(f(x'),f'(w')\big)\dd \pi(x,x',w,w')\\
    &\qquad\leq 2d_\cF(f,f') + \int (1+L_\cF)d_\cX(x,w)+(1+L_\cF)d_\cX(x',w')\dd \pi(x,x',w,w').
\end{align*}
And similarly for any coupling $\eta(x,y,x',y')$ of $P\otimes Q$ and $P'\otimes Q'$
\begin{align*}
    &|\Delta^{(1)}_{\cX,\cY}(f,g;P,Q)- \Delta^{(1)}_{\cX,\cY}(f',g';P',Q')|\\
    &\qquad\leq \int d_\cX(x,x')+d_\cY(y,y')+d_\cX\big(g(y),g'(y')\big) + d_\cY\big(f(x),f'(x')\big)\dd \eta(x,y,x',y')\\
    &\qquad\leq d_\cF(f,f') + d_\cG(g,g') + \int (1+L_\cF)d(x,x')+(1+L_\cG)d(y,y')\dd\eta(x,y,x',y').
\end{align*}
Apply the same coupling trick as in the previous section, we have
\begin{align*}
    &\big|\Delta_1(f,g;P,Q)- \Delta_1(f',g';P',Q')\big|\\
    &\qquad\qquad\leq  \big|\Delta^{(1)}_{\cX}(f;P) - \Delta^{(1)}_{\cX}(f';P')\big| + \big|\Delta^{(1)}_{\cY}(g;Q) - \Delta^{(1)}_{\cY}(g';Q')\big|\\
    &\qquad\qquad\qquad\qquad\qquad+ \big|\Delta^{(1)}_{\cX,\cY}(f,g;P,Q)- \Delta^{(1)}_{\cX,\cY}(f',g';P',Q')\big| \\
    &\qquad\qquad\leq  3d_\cF(f,f') + 3d_\cG(g,g') + 3(1+L_\cF)\mathsf{W}_1(P,P')+3(1+L_\cG)\mathsf{W}_1(Q,Q'),
\end{align*}
which concludes the proof.
\end{proof}

\newpage
\section{Experiments}\label{app:exp}
\subsection{Algorithm}\label{app:algo}
\begin{algorithm}[htbp]
\caption{Cycle consistent Monge map computation}\label{alg:cyclegan}
\begin{algorithmic}
\Require $X\in\RR^{n\times d_\cX}$, $Y\in\RR^{m\times d_\cY}$. $\alpha$, the learning rate. $b$, the batchsize.
\While{$\theta$ has not converged}
\State Sample $\{x_i\}_{i=1}^b$ a batch from the rows of $X$, forming $P_b$.
\State Sample $\{y_i\}_{i=1}^b$ a batch from the rows of $Y$, forming $Q_b$.
\State $v\gets \nabla_\theta\cL_{P_b,Q_b}(f_\theta,g_w)$
\State $u\gets \nabla_w\cL_{P_b,Q_b}(f_\theta,g_w)$
\State $\theta\gets Adam(v,\theta,\alpha)$
\State $w\gets Adam(u,w,\alpha)$
\EndWhile
\State\Return $(f_\theta,g_w)$.
\end{algorithmic}
\end{algorithm}

\subsection{Additional High Dimensional Experiments on Unaligned Word Embeddings}\label{app:word}
We present here an additional experiment for alignment of word embedding spaces (see \citet{alvarez2018gromov} for experiment details), which demonstrates the applicability of GMMD method to higher dimensional scenarios. Specifically, we consider words from English and French that are embedded into 300 dimensional spaces. We apply our GMMD method to obtain mappings between these two spaces, and obtain correspondence by searching for the nearest neighbor. We verify how well the learned mappings align these spaces by checking how many words in the English-French dictionary are correctly matched. The word embedding data sets are from \cite{bojanowski2016enriching}, and dictionaries are from  \cite{conneau2017word}.

For training we use the 20k most frequent words, and learning rate 0.01. The kernel is a single Gaussian kernel with bandwidth 1, and $\lambda=0.01$. Batchsize is 500, and we train the NNs for 1000 epochs. The NNs are both single linear layer without bias. See Table \ref{tab:word} for comparison with the GW method \citep{alvarez2018gromov} and the MUSE method \citep{conneau2017word}. The MUSE method outperforms GMMD and GW on this task. GMMD matching is not far behind the  GW method.  

Note that the MUSE method uses a linear orthonomal mapping that maps only in one direction as follows:
\[\min_{U , UU^{\top}=I_d} \sup_{f\in\mathcal{F}} \mathbb{E}_{P}f(Ux) -  \mathbb{E}_{Q}f(x).\]
(in fact MUSE uses a GAN objective to learn the witness function $f$ and not an IPM objective as we present it here).
 \citet{memoli2021distance} showed that this form of  the MUSE algorithm is related to the Gromov-Monge distance (Section A.3 in \citet{memoli2021distance}). As \citet{conneau2017word} pointed out, learning the kernel or the discriminator  is  advantageous for the word alignment task. We believe that GMMD will benefit from learning the kernels  similar to MUSE in order to further improve its performance. The min-max formulation of GMMD with learned kernels (Equation \eqref{eq:IPM_UD}) is left for future work in terms of analysis and practical implementations.

\begin{table}[ht!]
\caption{Word matching performance comparison.}
\begin{center}
\begin{tabular}{|c|c|c|}
\hline
&EN to FR&FR to EN\\
\hline
GMMD & 76.1\%& 74.5\%\\
\hline
\begin{tabular}[c]{@{}c@{}}GW ($\epsilon=10^{-4}$)\\\cite{alvarez2018gromov}\end{tabular}&79.3\%&78.3\%\\
\hline
GW ($\epsilon=10^{-5}$)& 81.3\%& 78.9\%\\
\hline
\begin{tabular}[c]{@{}c@{}}MUSE \\ \cite{conneau2017word}\end{tabular}& 82.3\%& 82.1\%\\
\hline
\end{tabular}
\end{center}
\label{tab:word}
\end{table}

\subsection{Comparison to GW and UGW}\label{app:comparison_gw_ugw}
We present full results of the comparison between continuous GMMD mappings and discrete GW Barycentric Mappings. In Figure \ref{fig:heart_mapping_full} we illustrate the results for different $\lambda$ and 4 cases: heart vs. rotated/scaled/embedded heart, and biplanes. For each test case we present both the image of the learned mappings and the cycle consistency of the mappings. Figure \ref{fig:gw_full} contains results for the same test cases, using  barycentric mapping from entropic GW.  The parameter $\epsilon$ in Figure \ref{fig:gw_full}  corresponds to  the entropic regularizer. 

We also provide full tables of the quantitative behavior of GMMD and GW on the test cases. In Table \ref{tab:GMMD_rotate} to \ref{tab:ugw_Embedded}, the marginal MMDs and $\Delta$ for GMMD, GW and UGW are computed across different parameters respectively. For GMMD we use $\lambda=10^{-3}\times2^{\{0,1,\cdots,9\}}$; for GW we use entropic regularizer $\epsilon=5\times10^{\{0,-1,-2,-3,-4\}}$; for UGW we use entropic regularizer $10^{\{-2,-1,0,1\}}$. For GW and UGW we only list results for hyperparameters  that don't fail using POT \cite{flamary2021pot}  and UGW's code of \cite{sejourne2020unbalanced}. 

\begin{figure*}[ht]
\hspace{-3.5mm}
\centering
 \begin{subfigure}[t]{0.18\textwidth}
 \centering
  \includegraphics[width=1.1\linewidth]{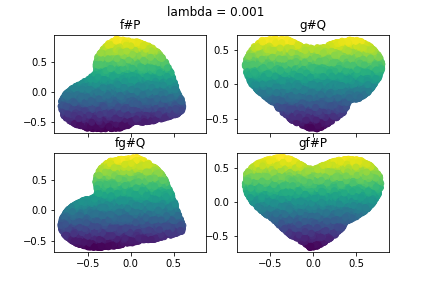}
   \vspace{-2mm}
   \vspace{-1mm}
 \end{subfigure}
 \ \hspace{0.3mm}
 \begin{subfigure}[t]{0.18\textwidth}
 \centering
 \includegraphics[width=1.1\linewidth]{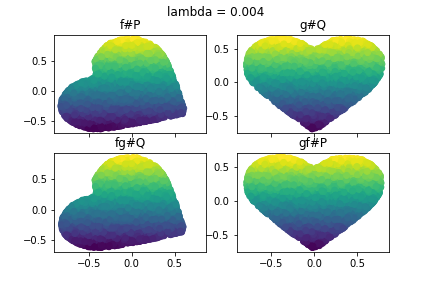}
    \vspace{-2mm}
 \end{subfigure}
 \ \hspace{0.3mm}
 \begin{subfigure}[t]{0.18\textwidth}
 \centering
  \includegraphics[width=1.1\linewidth]{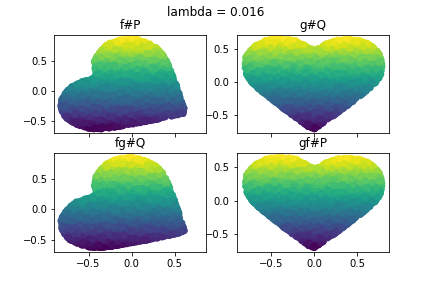}
   \vspace{-2mm}
   \vspace{-1mm}
 \end{subfigure}
  \ \hspace{-0.2 mm}
 \begin{subfigure}[t]{0.18\textwidth}
 \centering
\includegraphics[width=1.1\linewidth]{Aistats2022/figs/rotate_0.064.png}
   \vspace{-2mm}
   \vspace{-1mm}
 \end{subfigure}
 \ \hspace{0.3mm}
 \begin{subfigure}[t]{0.18\textwidth}
 \centering
 \includegraphics[width=1.1\linewidth]{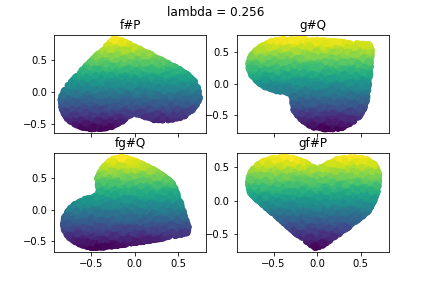}
    \vspace{-2mm}
 \end{subfigure}
\centering
 \begin{subfigure}[t]{0.18\textwidth}
 \centering
  \includegraphics[width=1.1\linewidth]{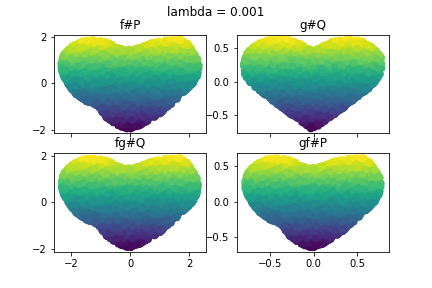}
   \vspace{-2mm}
   \vspace{-1mm}
 \end{subfigure}
 \ \hspace{0.3mm}
 \begin{subfigure}[t]{0.18\textwidth}
 \centering
 \includegraphics[width=1.1\linewidth]{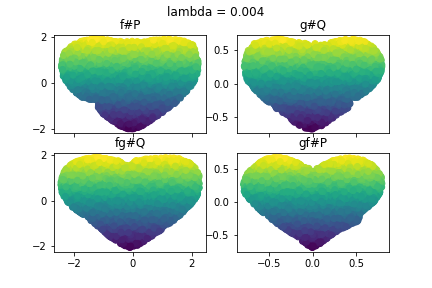}
    \vspace{-2mm}
 \end{subfigure}
 \ \hspace{0.3mm}
 \begin{subfigure}[t]{0.18\textwidth}
 \centering
  \includegraphics[width=1.1\linewidth]{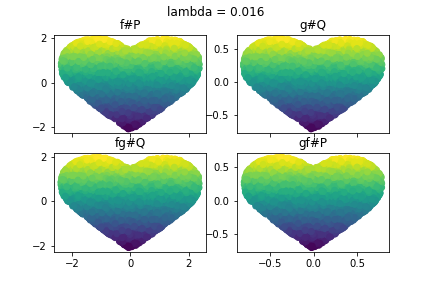}
   \vspace{-2mm}
   \vspace{-1mm}
 \end{subfigure}
  \ \hspace{-0.2 mm}
 \begin{subfigure}[t]{0.18\textwidth}
 \centering
\includegraphics[width=1.1\linewidth]{Aistats2022/figs/scale_0.064.png}
   \vspace{-2mm}
   \vspace{-1mm}
 \end{subfigure}
 \ \hspace{0.3mm}
 \begin{subfigure}[t]{0.18\textwidth}
 \centering
 \includegraphics[width=1.1\linewidth]{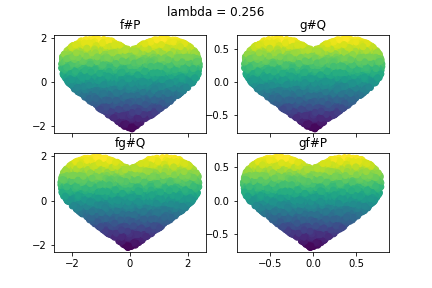}
    \vspace{-2mm}
 \end{subfigure}
\centering
 \begin{subfigure}[t]{0.18\textwidth}
 \centering
  \includegraphics[width=1.1\linewidth]{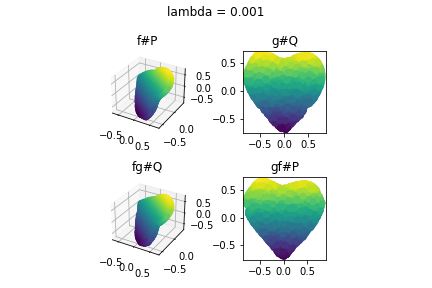}
   \vspace{-2mm}
   \vspace{-1mm}
 \end{subfigure}
 \ \hspace{0.3mm}
 \begin{subfigure}[t]{0.18\textwidth}
 \centering
 \includegraphics[width=1.1\linewidth]{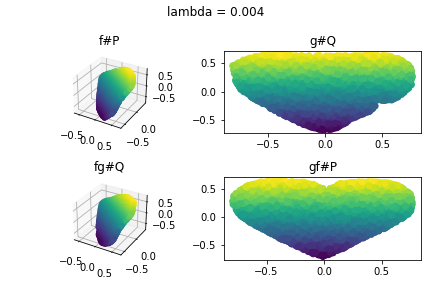}
    \vspace{-2mm}
 \end{subfigure}
 \ \hspace{0.3mm}
 \begin{subfigure}[t]{0.18\textwidth}
 \centering
  \includegraphics[width=1.1\linewidth]{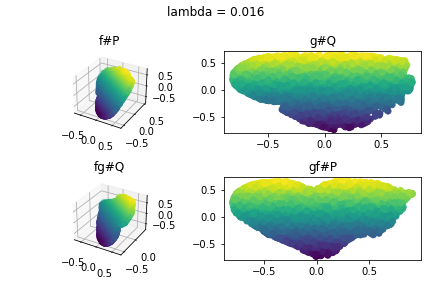}
   \vspace{-2mm}
   \vspace{-1mm}
 \end{subfigure}
  \ \hspace{-0.2 mm}
 \begin{subfigure}[t]{0.18\textwidth}
 \centering
\includegraphics[width=1.1\linewidth]{Aistats2022/figs/embed_0.064.png}
   \vspace{-2mm}
   \vspace{-1mm}
 \end{subfigure}
 \ \hspace{0.3mm}
 \begin{subfigure}[t]{0.18\textwidth}
 \centering
 \includegraphics[width=1.1\linewidth]{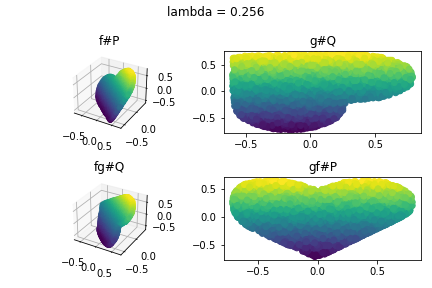}
    \vspace{-2mm}
 \end{subfigure}

 \centering
 \begin{subfigure}[t]{0.18\textwidth}
 \centering
  \includegraphics[width=1.1\linewidth]{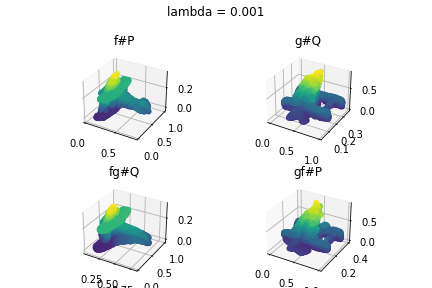}
   \vspace{-2mm}
   \vspace{-1mm}
 \end{subfigure}
 \ \hspace{0.3mm}
 \begin{subfigure}[t]{0.18\textwidth}
 \centering
 \includegraphics[width=1.1\linewidth]{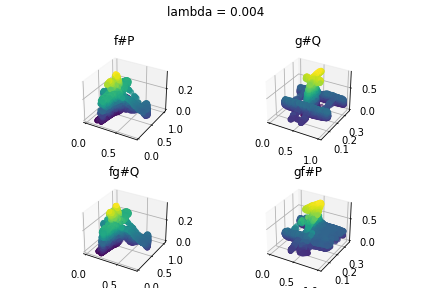}
    \vspace{-2mm}
 \end{subfigure}
 \ \hspace{0.3mm}
 \begin{subfigure}[t]{0.18\textwidth}
 \centering
  \includegraphics[width=1.1\linewidth]{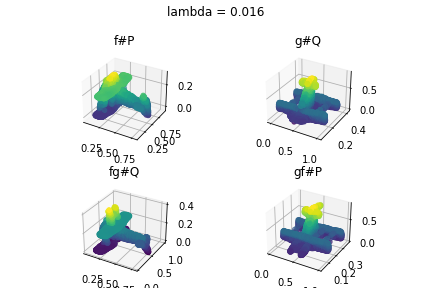}
   \vspace{-2mm}
   \vspace{-1mm}
 \end{subfigure}
  \ \hspace{-0.2 mm}
 \begin{subfigure}[t]{0.18\textwidth}
 \centering
\includegraphics[width=1.1\linewidth]{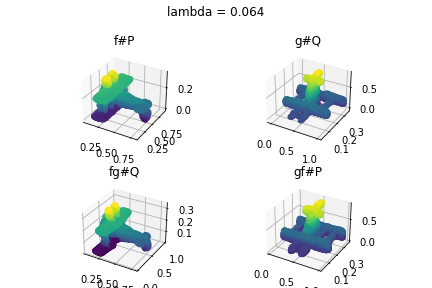}
   \vspace{-2mm}
   \vspace{-1mm}
 \end{subfigure}
 \ \hspace{0.3mm}
 \begin{subfigure}[t]{0.18\textwidth}
 \centering
 \includegraphics[width=1.1\linewidth]{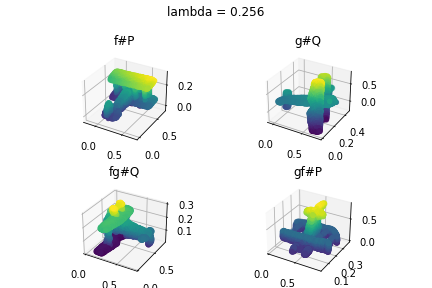}
    \vspace{-2mm}
 \end{subfigure}

  \caption{Learned continuous GMMD Mappings and their cycle consistency in shape matching. First row: heart $(P)$ and rotated heart $(Q_b)$. Second row: heart $(P)$ and scaled heart $(Q_c)$. Third row: heart $(P)$ and embedded heart $(Q_d)$. Last row: biplanes.}
  \label{fig:heart_mapping_full}
\vspace{-2mm}
\end{figure*}

\begin{figure*}[ht!]
\hspace{-3.5mm}
\centering
 \begin{subfigure}[t]{0.14\textwidth}
 \centering
  \includegraphics[width=1.1\linewidth]{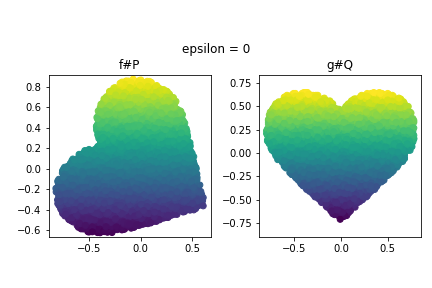}
   \vspace{-2mm}
   \vspace{-1mm}
 \end{subfigure}
 \ \hspace{0.3mm}
 \begin{subfigure}[t]{0.14\textwidth}
 \centering
 \includegraphics[width=1.1\linewidth]{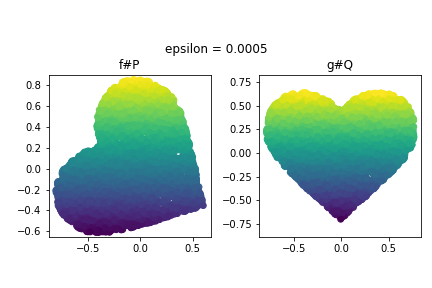}
    \vspace{-2mm}
 \end{subfigure}
 \ \hspace{0.3mm}
 \begin{subfigure}[t]{0.14\textwidth}
 \centering
 \includegraphics[width=1.1\linewidth]{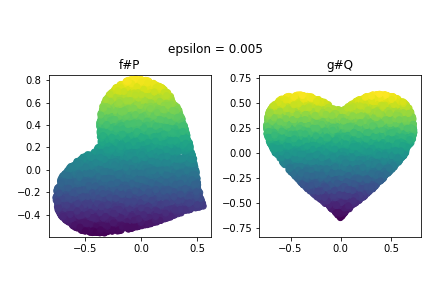}
    \vspace{-2mm}
 \end{subfigure}
 \ \hspace{0.3mm}
 \begin{subfigure}[t]{0.14\textwidth}
 \centering
 \includegraphics[width=1.1\linewidth]{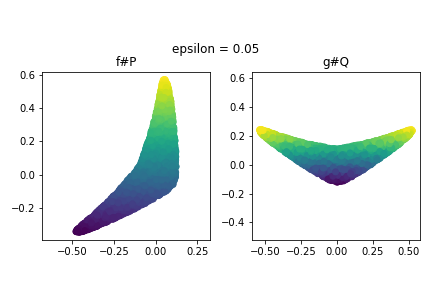}
    \vspace{-2mm}
 \end{subfigure}
 \ \hspace{0.3mm}
 \begin{subfigure}[t]{0.14\textwidth}
 \centering
 \includegraphics[width=1.1\linewidth]{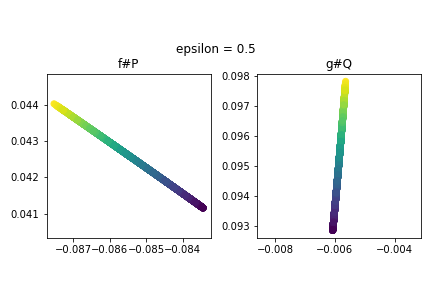}
    \vspace{-2mm}
 \end{subfigure}
 \ \hspace{0.3mm}
 \begin{subfigure}[t]{0.14\textwidth}
 \centering
 \includegraphics[width=1.1\linewidth]{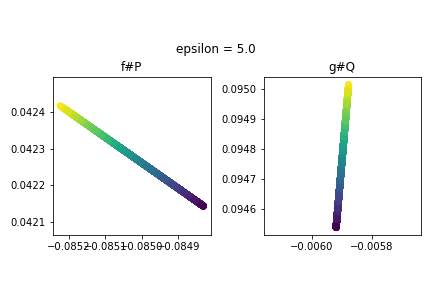}
    \vspace{-2mm}
 \end{subfigure}
 \  

\centering
 \begin{subfigure}[t]{0.14\textwidth}
 \centering
  \includegraphics[width=1.1\linewidth]{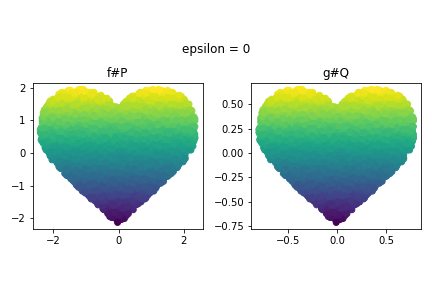}
   \vspace{-2mm}
   \vspace{-1mm}
 \end{subfigure}
 \ \hspace{0.3mm}
 \begin{subfigure}[t]{0.14\textwidth}
 \centering
 \includegraphics[width=1.1\linewidth]{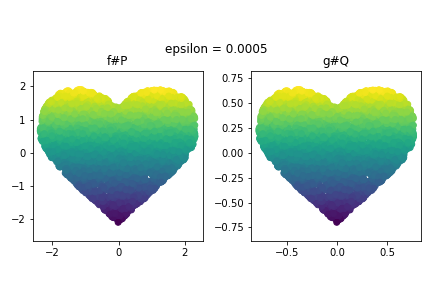}
    \vspace{-2mm}
 \end{subfigure}
 \ \hspace{0.3mm}
 \begin{subfigure}[t]{0.14\textwidth}
 \centering
 \includegraphics[width=1.1\linewidth]{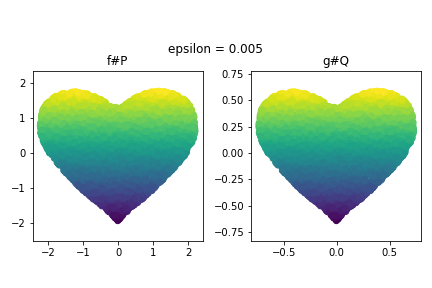}
    \vspace{-2mm}
 \end{subfigure}
 \ \hspace{0.3mm}
 \begin{subfigure}[t]{0.14\textwidth}
 \centering
 \includegraphics[width=1.1\linewidth]{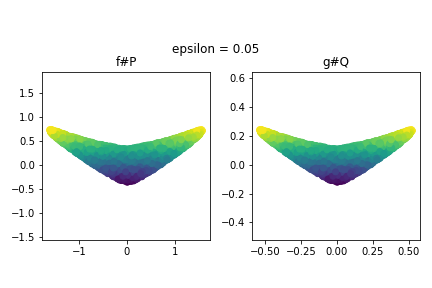}
    \vspace{-2mm}
 \end{subfigure}
 \ \hspace{0.3mm}
 \begin{subfigure}[t]{0.14\textwidth}
 \centering
 \includegraphics[width=1.1\linewidth]{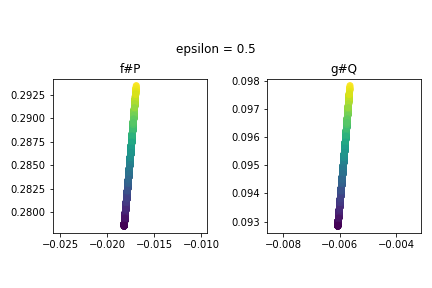}
    \vspace{-2mm}
 \end{subfigure}
 \ \hspace{0.3mm}
 \begin{subfigure}[t]{0.14\textwidth}
 \centering
 \includegraphics[width=1.1\linewidth]{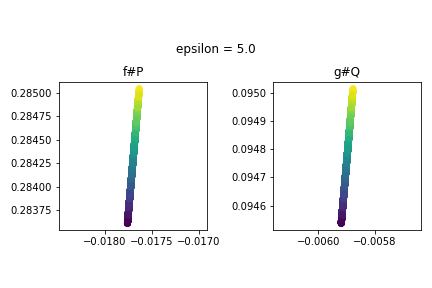}
    \vspace{-2mm}
 \end{subfigure}
 \  

\centering
 \begin{subfigure}[t]{0.14\textwidth}
 \centering
  \includegraphics[width=1.1\linewidth]{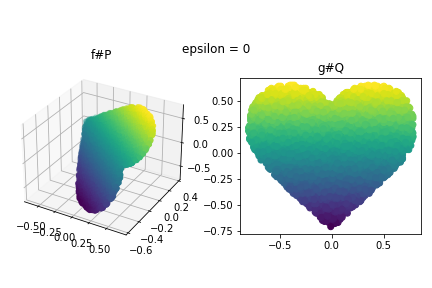}
   \vspace{-2mm}
   \vspace{-1mm}
 \end{subfigure}
 \ \hspace{0.3mm}
 \begin{subfigure}[t]{0.14\textwidth}
 \centering
 \includegraphics[width=1.1\linewidth]{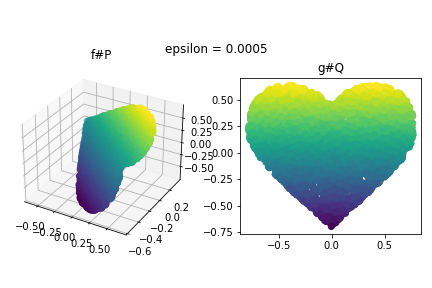}
    \vspace{-2mm}
 \end{subfigure}
 \ \hspace{0.3mm}
 \begin{subfigure}[t]{0.14\textwidth}
 \centering
 \includegraphics[width=1.1\linewidth]{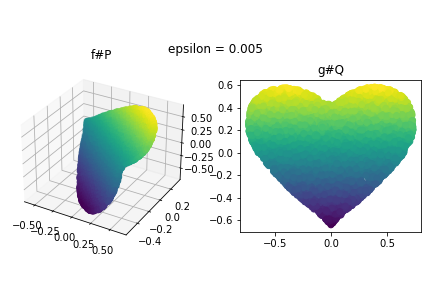}
    \vspace{-2mm}
 \end{subfigure}
 \ \hspace{0.3mm}
 \begin{subfigure}[t]{0.14\textwidth}
 \centering
 \includegraphics[width=1.1\linewidth]{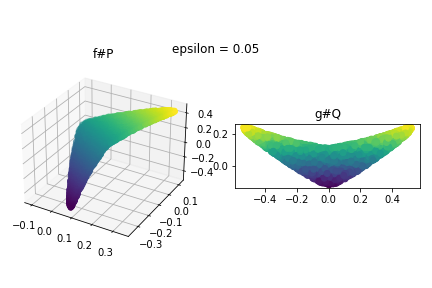}
    \vspace{-2mm}
 \end{subfigure}
 \ \hspace{0.3mm}
 \begin{subfigure}[t]{0.14\textwidth}
 \centering
 \includegraphics[width=1.1\linewidth]{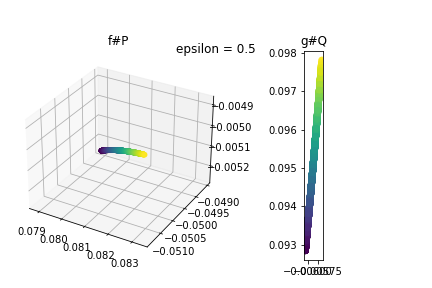}
    \vspace{-2mm}
 \end{subfigure}
 \ \hspace{0.3mm}
 \begin{subfigure}[t]{0.14\textwidth}
 \centering
 \includegraphics[width=1.1\linewidth]{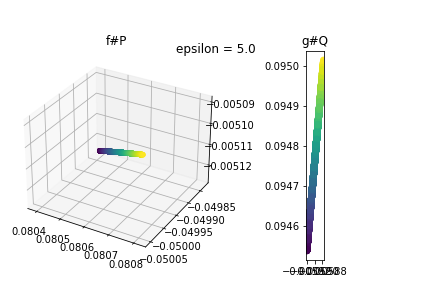}
    \vspace{-2mm}
 \end{subfigure}
 \  
\centering
 \begin{subfigure}[t]{0.14\textwidth}
 \centering
  \includegraphics[width=1.1\linewidth]{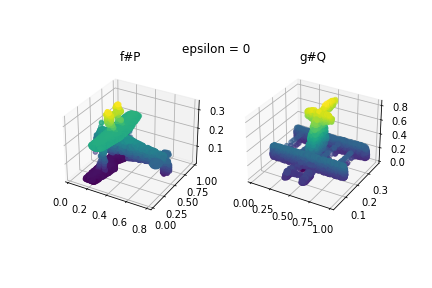}
   \vspace{-2mm}
   \vspace{-1mm}
 \end{subfigure}
 \ \hspace{0.3mm}
 \begin{subfigure}[t]{0.14\textwidth}
 \centering
 \includegraphics[width=1.1\linewidth]{Aistats2022/figs/gw/gw_biplane_0.0005.png}
    \vspace{-2mm}
 \end{subfigure}
 \ \hspace{0.3mm}
 \begin{subfigure}[t]{0.14\textwidth}
 \centering
 \includegraphics[width=1.1\linewidth]{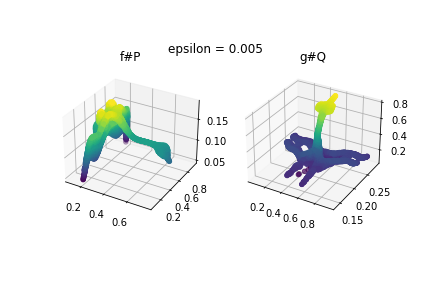}
    \vspace{-2mm}
 \end{subfigure}
 \ \hspace{0.3mm}
 \begin{subfigure}[t]{0.14\textwidth}
 \centering
 \includegraphics[width=1.1\linewidth]{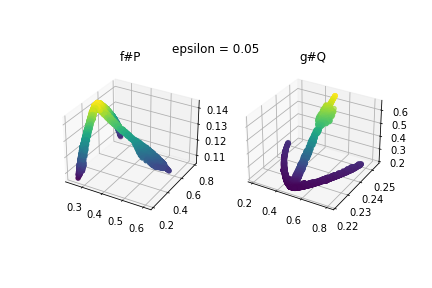}
    \vspace{-2mm}
 \end{subfigure}
 \ \hspace{0.3mm}
 \begin{subfigure}[t]{0.14\textwidth}
 \centering
 \includegraphics[width=1.1\linewidth]{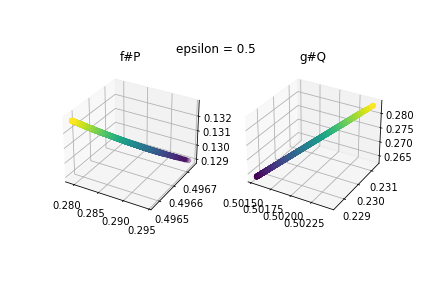}
    \vspace{-2mm}
 \end{subfigure}
 \ \hspace{0.3mm}
 \begin{subfigure}[t]{0.14\textwidth}
 \centering
 \includegraphics[width=1.1\linewidth]{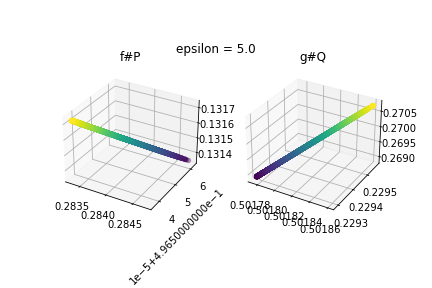}
    \vspace{-2mm}
 \end{subfigure}
 \  

  \caption{GW Barycentric Mappings. First row: heart $(P)$ and rotated heart $(Q_b)$. Second row: heart $(P)$ and scaled heart $(Q_b)$. Third row: heart $(P)$ and embedded heart $(Q_b)$. Last row: biplanes.}
  \label{fig:gw_full}
\vspace{-2mm}
\end{figure*}

\begin{table}[ht!]
\caption{Evaluation of GMMD mappings between heart $(P)$ and rotated heart $(Q_b)$.}
\begin{center}
\begin{tabular}{|l|l|l|l|l|}
\hline
\textbf{$\lambda$}  &\textbf{$\GMMD$}&\textbf{$\MMD_\cX$}&\textbf{$\MMD_\cY$}&\textbf{$\Delta$} \\
\hline
0.512  &1.33   &0.0524   &0.00207    &2.50 \\
\hline
0.256  &0.0794     &0.0294    &0.0294    &0.0801\\
\hline
0.128  &0.00825   &4.94e-4   &4.05e-4  &0.0574  \\
\hline
0.064  &0.00776   &0.00227   &0.00190   &0.0560\\
\hline
0.032  &0.0924    &2.90e-4  &0.00386   &2.76\\
\hline
0.016  &0.0579    &0.00137   &3.33e-4  &3.51\\
\hline
0.008  &0.0136      &0.00188  &0.00181   &1.24\\
\hline
0.004  &8.04e-4  &2.85e-4  &2.55e-4 &0.0661\\
\hline
0.002  &0.00624 &0.00149    &0.00147    &1.64\\
\hline
0.001  &0.00378   &0.00118   &0.00114   &1.47\\
\hline
\end{tabular}
\end{center}
\label{tab:GMMD_rotate}
\end{table}


\begin{table}[htbp]
\caption{GW and its induced MMDs and $\Delta$ between heart $(P)$ and rotated heart $(Q_b)$.}
\begin{center}
\begin{tabular}{|l|l|l|l|l|}
\hline
\textbf{$\epsilon$}  &\textbf{$\GW$}&\textbf{$\MMD_\cX$}&\textbf{$\MMD_\cY$}&\textbf{$\Delta$} \\
\hline
0.0005  &0.00134   &0.00420 &0.00299    &0.696\\
\hline
0.005   &0.00660  &0.127      &0.116      &1.73\\
\hline
0.05    &0.0424   &0.615 &0.613      &6.69\\
\hline
0.5     &0.0686    &3.99       &4.12      &22.9\\
\hline
5       &0.0699    &4.86       &4.89      &26.2\\
\hline
\end{tabular}
\end{center}
\label{tab:gw_rotate}
\end{table}

\begin{table}[htbp]
\caption{UGW and its induced MMDs and $\Delta$ between heart $(P)$ and rotated heart $(Q_b)$.}
\begin{center}
\begin{tabular}{|l|l|l|l|l|}
\hline
\textbf{$\epsilon$}  &\textbf{$\mathsf{UGW}$}&\textbf{$\MMD_\cX$}&\textbf{$\MMD_\cY$}&\textbf{$\Delta$} \\
\hline
    10 &    0.277856  &5.00562  &5.00562 & 25.8038 \\
\hline
     1  &   0.199544  &4.96044  &4.96038 & 25.6224\\
\hline
     0.1 &  0.189629  &4.57678  &4.57691 & 24.0855 \\
\hline
     0.01 & 0.178746 & 3.34716  &3.34713 & 19.1421 \\
\hline
\end{tabular}
\end{center}
\label{tab:ugw_Rotated}
\end{table}

\begin{table}[htbp]
\caption{Evaluation of GMMD mappings between biplanes.}
\begin{center}
\begin{tabular}{|l|l|l|l|l|}
\hline
\textbf{$\lambda$}  &\textbf{$\GMMD$}&\textbf{$\MMD_\cX$}&\textbf{$\MMD_\cY$}&\textbf{$\Delta$} \\
\hline
   0.512  &0.315   &0.124    &0.124     &0.131 \\
\hline
   0.256  &0.284  &0.125  &0.125    &0.134\\
\hline
   0.128  &0.0270  &0.00704  &0.00669  &0.104 \\
\hline
   0.064  &0.0166  &0.00594  &0.00619  &0.0691\\
\hline
   0.032  &0.127   &0.00891  &0.00947  &3.40 \\
\hline
   0.016  &0.0267  &0.00892  &0.00812  &0.602 \\
\hline
   0.008  &0.0357  &0.00637  &0.00548  &2.98 \\
\hline
   0.004  &0.0243  &0.00305  &0.00532  &3.98  \\
\hline
   0.002  &0.0115  &0.00343  &0.00252  &2.79  \\
\hline
   0.001  &0.0116  &0.00397  &0.00409  &3.58\\
\hline
\end{tabular}
\end{center}
\label{tab:GMMD_Biplanes}
\end{table}

\begin{table}[htbp]
\caption{GW and its induced MMDs and $\Delta$ between biplanes.}
\begin{center}
\begin{tabular}{|l|l|l|l|l|}
\hline
\textbf{$\epsilon$}  &\textbf{$\GW$}&\textbf{$\MMD_\cX$}&\textbf{$\MMD_\cY$}&\textbf{$\Delta$} \\
\hline
   5       &0.0699    &4.86       &4.89      &26.2\\
\hline
   0.5     &0.0686    &3.99       &4.12      &22.9\\
\hline
   0.05    &0.0424    &0.615      &0.613      &6.68\\
\hline
   0.005   &0.00660   &0.127      &0.116      &1.73\\
\hline
   0.0005  &0.00134   &0.00420    &0.00299    &0.696\\
\hline
\end{tabular}
\end{center}
\label{tab:gw_Biplanes}
\end{table}

\begin{table}[htbp]
\caption{UGW and its induced MMDs and $\Delta$ between biplanes.}
\begin{center}
\begin{tabular}{|l|l|l|l|l|}
\hline
\textbf{$\epsilon$}  &\textbf{$\mathsf{UGW}$}&\textbf{$\MMD_\cX$}&\textbf{$\MMD_\cY$}&\textbf{$\Delta$} \\
\hline
    10 &    0.277856  &5.00562  &5.00562 & 25.8038 \\
\hline
     1  &   0.199544  &4.96044  &4.96038 & 25.6224\\
\hline
     0.1 &  0.189629  &4.57678  &4.57691 & 24.0855 \\
\hline
     0.01 & 0.178746 & 3.34716  &3.34713 & 19.1421 \\
\hline
\end{tabular}
\end{center}
\label{tab:ugw_Biplanes}
\end{table}

\begin{table}[htbp]
\caption{Evaluation of GMMD mappings between heart $(P)$ and scaled heart $(Q_c)$.}
\begin{center}
\begin{tabular}{|l|l|l|l|l|}
\hline
\textbf{$\lambda$}  &\textbf{$\GMMD$}&\textbf{$\MMD_\cX$}&\textbf{$\MMD_\cY$}&\textbf{$\Delta$} \\
\hline

   0.512  &0.0362   & 0.00145   &0.00150  & 0.0649  \\
\hline
   0.256  &0.0148  & 0.00144   & 0.00147 & 0.0464  \\
\hline
   0.128 & 0.0675  & 0.0247   & 0.0251   & 0.139 \\
\hline
   0.064 & 0.00640 & 0.00139  & 0.00145   & 0.0556 \\
\hline
   0.032 & 0.0961  & 0.000462  & 0.00444   & 2.85 \\
\hline
   0.016 & 0.00201  & 0.000612  & 0.000598  & 0.0500 \\
\hline
   0.008 & 0.0281   & 0.00137   & 0.00307   & 2.96  \\
\hline
   0.004 & 0.00869   & 0.00108   & 0.00215   & 1.36    \\
\hline
   0.002 & 0.0104     & 0.00106    &  0.00128  &  4.03 \\
\hline
   0.001 & 0.00506   & 0.000304   &  0.00159    & 3.17   \\
\hline
\end{tabular}
\end{center}
\label{tab:GMMD_Scaled}
\end{table}

\begin{table}[htbp]
\caption{GW and its induced MMDs and $\Delta$ between heart $(P)$ and scaled heart $(Q_c)$.}
\begin{center}
\begin{tabular}{|l|l|l|l|l|}
\hline
\textbf{$\epsilon$}  &\textbf{$\GW$}&\textbf{$\MMD_\cX$}&\textbf{$\MMD_\cY$}&\textbf{$\Delta$} \\
\hline
   5       &0.0776     &5.00     &5.00      &25.8\\
\hline
   0.5     &0.0770    &4.93      &4.93      &25.5\\
\hline
   0.05    &0.0498    &0.483     &0.483      &5.89\\
\hline
   0.005   &0.00483   &0.00307   &0.00307    &0.378\\
\hline
   0.0005  &0.000470  &0.000227  &0.000227   &0.0833\\
\hline
\end{tabular}
\end{center}
\label{tab:gw_Scaled}
\end{table}

\begin{table}[htbp]
\caption{UGW and its induced MMDs and $\Delta$ between heart $(P)$ and scaled heart $(Q_c)$.}
\begin{center}
\begin{tabular}{|l|l|l|l|l|}
\hline
\textbf{$\epsilon$}  &\textbf{$\mathsf{UGW}$}&\textbf{$\MMD_\cX$}&\textbf{$\MMD_\cY$}&\textbf{$\Delta$} \\
\hline
     10    &2.47  &4.09  &4.84   &23.6\\
\hline
      1    &1.79  &3.02  &4.28   &20.4\\
\hline
      0.1  &1.50  &2.90  &4.22   &20.0 \\
\hline
\end{tabular}
\end{center}
\label{tab:ugw_Scaled}
\end{table}

\begin{table}[htbp]
\caption{Evaluation of GMMD mappings between heart $(P)$ and embedded heart $(Q_d)$.}
\begin{center}
\begin{tabular}{|l|l|l|l|l|}
\hline
\textbf{$\lambda$}  &\textbf{$\GMMD$}&\textbf{$\MMD_\cX$}&\textbf{$\MMD_\cY$}&\textbf{$\Delta$} \\
\hline
0.512 & 0.0532  & 0.00157   & 0.00151  & 0.0979 \\
\hline
0.256 & 0.0625   & 0.0256   & 0.0255   & 0.0446 \\
\hline
0.128 & 0.0146  & 0.00177   & 0.00154  & 0.0881 \\
\hline
0.064 & 0.00612 & 0.00145  & 0.00148 & 0.0500  \\
\hline
0.032 & 0.0942   & 0.00179   & 0.00338  & 2.78    \\
\hline
0.016 & 0.0204  & 0.00222  & 0.00213  & 1.00   \\
\hline
0.008 & 0.0345  & 0.00152  & 0.00194 & 3.88   \\
\hline
0.004 & 0.0186  & 0.00152   & 0.00117  & 3.97   \\
\hline
0.002 & 0.00229 & 0.000984 & 0.000964 & 0.172  \\
\hline
0.001 & 0.00683 & 0.00143  & 0.00131   & 4.09   \\
\hline
\end{tabular}
\end{center}
\label{tab:GMMD_Embedded}
\end{table}

\begin{table}[htbp]
\caption{GW and its induced MMDs and $\Delta$ between heart $(P)$ and embedded heart $(Q_d)$.}
\begin{center}
\begin{tabular}{|l|l|l|l|l|}
\hline
\textbf{$\epsilon$}  &\textbf{$\GW$}&\textbf{$\MMD_\cX$}&\textbf{$\MMD_\cY$}&\textbf{$\Delta$} \\
\hline
5      & 0.0776   & 5.00     & 5.00     & 25.8     \\
\hline
0.5    & 0.0770   & 4.93     & 4.93     & 25.5     \\
\hline
0.05   & 0.0498   & 0.483    & 0.483    & 5.89     \\
\hline
0.005  & 0.00483  & 0.00307  & 0.00307  & 0.378    \\
\hline
0.0005 & 0.000470 & 0.000227 & 0.000227 & 0.0833   \\
\hline
\end{tabular}
\end{center}
\label{tab:gw_Embedded}
\end{table}

\begin{table}[htbp]
\caption{UGW and its induced MMDs and $\Delta$ between heart $(P)$ and embedded heart $(Q_d)$.}
\begin{center}
\begin{tabular}{|l|l|l|l|l|}
\hline
\textbf{$\epsilon$}  &\textbf{$\mathsf{UGW}$}&\textbf{$\MMD_\cX$}&\textbf{$\MMD_\cY$}&\textbf{$\Delta$} \\
\hline
10   & 0.278 & 5.01 & 5.01 & 25.8 \\
\hline
1    & 0.200 & 4.96 & 4.96 & 25.6 \\
\hline
0.1  & 0.190 & 4.58 & 4.58 & 24.1 \\
\hline
0.01 & 0.179 & 3.35 & 3.35 & 19.1 \\
\hline
\end{tabular}
\end{center}
\label{tab:ugw_Embedded}
\end{table}

\begin{figure*}[htbp]
\hspace{-3.5mm}
\centering
 \begin{subfigure}[t]{0.23\textwidth}
 \centering
  \includegraphics[width=1.1\linewidth]{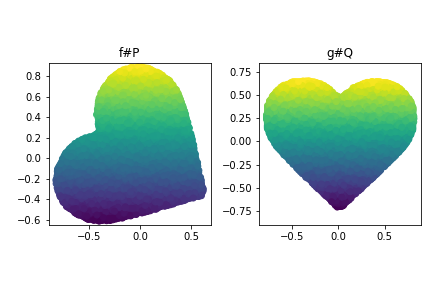}
   \vspace{-2mm}
   \vspace{-1mm}
 \end{subfigure}
 \ \hspace{0.3mm}
 \begin{subfigure}[t]{0.23\textwidth}
 \centering
 \includegraphics[width=1.1\linewidth]{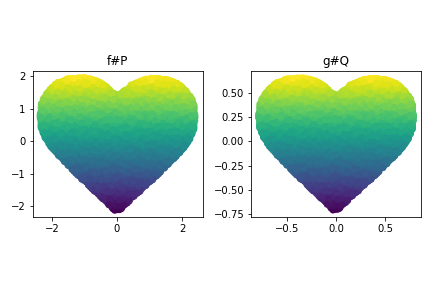}
    \vspace{-2mm}
 \end{subfigure}
 \ \hspace{0.3mm}
 \begin{subfigure}[t]{0.23\textwidth}
 \centering
 \includegraphics[width=1.1\linewidth]{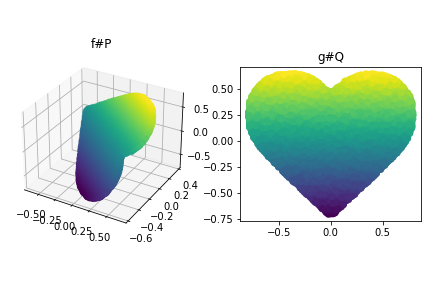}
    \vspace{-2mm}
 \end{subfigure}
 \ \hspace{0.3mm}
 \begin{subfigure}[t]{0.23\textwidth}
 \centering
 \includegraphics[width=1.1\linewidth]{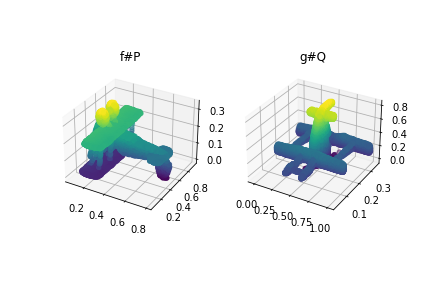}
    \vspace{-2mm}
 \end{subfigure}
 \  
  \caption{GMMD amortization. Each pair shows the image through the learned GMMD mapping. The pairs from left to right: heart $(P)$ vs. rotated/scaled/embedded heart $(Q_b/Q_c/Q_d)$, and biplanes. All 4 cases here are trained with $\lambda=0.064$.}
  \label{fig:gmmd_amortization}
\vspace{-2mm}
\end{figure*}

\subsection{Amortization}\label{app:amortization}
To illustrate the performance of the trained GMMD maps on unseen data, we push 8000 new datapoints  through the learned networks.  Note that GMMD was  trained on only 4000 points.  The output of the pushforward maps on unseen data points during training is  shown in Figure \ref{fig:gmmd_amortization}. We see that GMMD maps successfully generalizes to unseen data.  We also quantitatively demonstrate the amortization in Table \ref{tab:GMMD_amortization_Rotated} to \ref{tab:GMMD_amortization_Biplanes}, where for the same set of parameters $\lambda$ as previously, we push the 8000 new  points through the NNs ( trained with 4000 points ), and compute the resulting marginal MMDs and $\Delta$. As is shown in the tables, the MMDs remain small which means the marginals are well matched.

\begin{table}[htbp]
\caption{GMMD amortization between heart $(P)$ and rotated heart $(Q_b)$. }
\begin{center}
\begin{tabular}{|l|l|l|l|l|}
\hline
\textbf{$\lambda$}  &\textbf{$\GMMD$}&\textbf{$\MMD_\cX$}&\textbf{$\MMD_\cY$}&\textbf{$\Delta$} \\
\hline

0.512 &1.33 &5.43e-02 &1.67e-03
  &2.49\\
\hline
0.256    &  0.0731 &0.0262 &0.0262 &0.0809\\
\hline
0.128    &  0.00790 &0.000276 &0.000215 & 0.0579\\
\hline
0.064     & 0.00711 &0.00199  &0.00152  &0.0564\\
\hline
0.032 &9.34e-02 &1.46e-04 &4.86e-03
  &2.76\\
\hline
0.016  &5.72e-02 &7.75e-04 &1.82e-04
  &3.51\\
\hline
0.008   &   0.0148  &0.00242  &0.00243 &1.24\\
\hline
0.004    &  0.000568  &0.000162   &0.000141   &0.0660\\
\hline
0.002     & 0.00675  &0.00174  &0.00173 &1.64\\
\hline
0.001  &4.91e-03  &1.70e-03 &1.73e-03
  &1.47\\
\hline
\end{tabular}
\end{center}
\label{tab:GMMD_amortization_Rotated}
\end{table}

\begin{table}[htbp]
\caption{GMMD amortization between heart $(P)$ and scaled heart $(Q_c)$.}
\begin{center}
\begin{tabular}{|l|l|l|l|l|}
\hline
\textbf{$\lambda$}  &\textbf{$\GMMD$}&\textbf{$\MMD_\cX$}&\textbf{$\MMD_\cY$}&\textbf{$\Delta$} \\
\hline
0.512   &   0.0347 &0.000555  &0.000856 &0.0650\\
\hline
0.256    &  0.0133& 0.000599 &0.000787 &0.0466\\
\hline
0.128     & 0.0708  &0.0261 &0.0268 &0.139\\
\hline
0.064     & 0.00488 & 0.000577 &0.000716 &0.0561\\
\hline
0.032 &9.68e-02 &3.00e-04 &5.05e-03
  &2.86\\
\hline
0.016    &  0.00153   &0.000366 &0.000361  &0.0501\\
\hline
0.008  &2.83e-02 &1.60e-03  &3.23e-03
  &2.94\\
\hline
0.004     & 0.00952  &0.00210  &0.00196 &1.37\\
\hline
0.002  &1.19e-02 &1.76e-03 &2.01e-03
  &4.05\\
\hline
0.001  &4.89e-03 &1.75e-04 &1.59e-03
  &3.12\\
\hline
\end{tabular}
\end{center}
\label{tab:GMMD_amortization_Scaled}
\end{table}

\begin{table}[htbp]
\caption{GMMD amortization between heart $(P)$ and embedded heart $(Q_d)$.}
\begin{center}
\begin{tabular}{|l|l|l|l|l|}
\hline
\textbf{$\lambda$}  &\textbf{$\GMMD$}&\textbf{$\MMD_\cX$}&\textbf{$\MMD_\cY$}&\textbf{$\Delta$} \\
\hline
0.512      &0.0518  &0.000721   &0.000717   &0.0985\\
\hline
0.256      &0.0657  &0.0272  &0.0271  &0.0446\\
\hline
0.128      &0.0133  &0.00126  &0.000716  &0.0885\\
\hline
0.064      &0.00431  &0.000566 &0.000516  &0.0504\\
\hline
0.032  &9.38e-02 &1.52e-03 &3.98e-03
  &2.76\\
\hline
0.016      &0.0198  &0.00199   &0.00194  &0.990\\
\hline
0.008     &3.52e-02 &1.62e-03 &2.77e-03
  &3.85\\
\hline
0.004 &1.96e-02 &1.92e-03 &1.93e-03
  &3.95\\
\hline
0.002     & 0.00287  &0.00131   &0.00122   &0.173\\
\hline
0.001 & 6.89e-03 &1.48e-03 &1.34e-03  &4.07 \\
\hline
\end{tabular}
\end{center}
\label{tab:GMMD_amortization_Embedded}
\end{table}

\begin{table}[htbp]
\caption{GMMD amortization between biplanes.}
\begin{center}
\begin{tabular}{|l|l|l|l|l|}
\hline
\textbf{$\lambda$}  &\textbf{$\GMMD$}&\textbf{$\MMD_\cX$}&\textbf{$\MMD_\cY$}&\textbf{$\Delta$} \\
\hline
0.512     & 0.316 &0.123  &0.124  &0.134\\
\hline
0.256     & 0.283  &0.124  &0.124 &0.137\\
\hline
0.128     & 0.0274  &0.00664 &0.00660 &0.111\\
\hline
0.064    & 0.0165  &0.00596 &0.00636 &0.0658\\
\hline
0.032     & 0.127  & 0.00852  &0.00953   &3.40\\
\hline
0.016     & 0.0291    & 0.00995
&0.00953 &0.600\\
\hline
0.008     & 0.0367  &0.00666 &0.00627  &2.97\\
\hline
0.004   & 2.33e-02 &2.83e-03 &4.58e-03   &3.98\\
\hline
0.002  & 1.13e-02 &3.31e-03 &2.30e-03  &2.83\\
\hline
0.001 &1.21e-02 &3.96e-03 &4.51e-03   & 3.60 \\
\hline
\end{tabular}
\end{center}
\label{tab:GMMD_amortization_Biplanes}
\end{table}

\end{document}